\newcommand{\cblu}{\color{blue}}
\newcommand{\cred}{\color{black}}
\renewcommand{\vec}[1]{\mathbf{#1}}
\long\def\BOC#1\EOC{\message{(Commented text )}}
\long\def\BOCC#1\EOCC{\message{(Commented text )}}
\long\def\BOCCC#1\EOCCC{\message{(Commented text )}}
\long\def\optional#1{\empty}
\long\def\NB#1{\bigskip[{\cblu {\bf N.B.} #1}]\bigskip}
\long\def\NB#1{}
\long\def\NBB#1{}
\def\o{\overline}
\def\ar{\leftarrow}
\def\bi{\begin{itemize}}
\def\ei{\end{itemize}}
\def\beq{\begin{equation}}
\def\eeq#1{\label{#1}\end{equation}}
\def\ba{\begin{array}}
\def\ea{\end{array}}
\def\sm{\rm SM}
\def\no{\i{not}}
\def\sneg{\sim\!\!}
\def\ar{\leftarrow}
\def\rar{\rightarrow}
\def\no{\i{not}}
\def\mvis{\!=\!}
\def\false{\hbox{\sc false}}
\def\true{\hbox{\sc true}}
\def\i#1{\hbox{\itshape #1\/}}
\def\qed{\quad \vrule height7.5pt width4.17pt depth0pt \medskip}
     \def\smmodels{\models_{\text{\sm}}}
\DeclareSymbolFont{AMSa}{U}{msa}{m}{n}
\DeclareMathSymbol{\square}{\mathord}{AMSa}{"03}
\def\mvis{\!=\!}
\def\mu#1{\mathit{\underline{#1}}}
\mathchardef\mhyphen="2D 
\def\dtlpmln{{\rm DT}{\mhyphen}{\rm LP}^{\rm{MLN}}}
\def\lpmln{{\rm LP}^{\rm{MLN}}}
\def\pbcp{p{\cal BC}+}
\def\caused{\hbox{\bf caused}}
\def\iif{\hbox{\bf if}}
\def\init{\hbox{\bf initially}}
\def\after{\hbox{\bf after}}
\def\causes{\hbox{\bf causes}}
\def\inertial{\hbox{\bf inertial}}
\def\default{\hbox{\bf default}}
\def\reward{\hbox{\bf reward}}
\def\constraint{\hbox{\bf constraint}}
\def\:{\!:\!}
\def\bi{\begin{itemize}}
\def\ei{\end{itemize}}
\def\ii{\medskip\item}
\newtheorem{prop}{Proposition}
\newtheorem{thm}{Theorem}
\newtheorem{cor}{Corollary}
\newtheorem{definition}{Definition}
\newtheorem{lemma}{Lemma} 
\newtheorem{example}{Example}
\begin{document}


\title[Elaboration Tolerant Representation of MDP via $\pbcp$]{Elaboration Tolerant Representation of Markov Decision Process via Decision-Theoretic Extension of Probabilistic Action Language $\pbcp$}


\author[Wang \& Lee]{Yi Wang \and Joohyung Lee \\
School of Computing, Informatics, and Decision Systems Engineering \\
Arizona State University, Tempe, USA \\
\email{\{ywang485, joolee\}@asu.edu}
}

\maketitle

\begin{abstract}
We extend probabilistic action language $\pbcp$ with the notion of utility in decision theory. The semantics of the extended $\pbcp$ can be defined as  a shorthand notation for a decision-theoretic extension of the probabilistic answer set programming language $\lpmln$. Alternatively, the semantics of $\pbcp$ can also be defined in terms of Markov Decision Process (MDP), which in turn allows for representing MDP in a succinct and elaboration tolerant way as well as leveraging an MDP solver to compute a $\pbcp$ action description. The idea led to the design of the system {\sc pbcplus2mdp}, which can find an optimal policy of a $\pbcp$ action description using an MDP solver. 
This paper is under consideration in Theory and Practice of Logic Programming (TPLP).

\keywords{Answer Set Programming \and Action Language \and Markov Decision Process}
\end{abstract}


\section{Introduction}


Many problems in Artificial Intelligence are about making decisions on actions to take. The chosen actions should maximize the agent's utility, which is a quantitative measurement of the value or desirability to the agent. Since actions may also have stochastic effects, the main computational task is, rather than to find a sequence of actions that leads to a goal, to find an optimal policy, that states which actions to execute in each state to achieve the maximum expected utility. 


While a few decades of research on action languages has produced several expressive  languages, such as ${\cal A}$ \cite{gel93a}, ${\cal B}$ \cite{gel98}, ${\cal C}$+ \cite{giu04}, ${\cal BC}$ \cite{lee13action}, and ${\cal BC}$+ \cite{babb15action1}, that are able to describe actions and their effects in a succinct and elaboration tolerant way, these languages are not equipped with constructs to represent stochastic actions and the utility of a decision.
In this paper, we present an action language that overcomes the limitation. Our method is to equip probabilistic action language $\pbcp$ \cite{lee18aprobabilistic} with the notion of utility and define policy optimization problems in that language.

Following the way that $\pbcp$ is defined as a shorthand notation of probabilistic answer set programming language $\lpmln$ for describing a probabilistic transition system, we first extend $\lpmln$
by associating a utility measure to each soft stable model in addition to its already defined probability. We call this extension $\dtlpmln$.
Next, we define a decision-theoretic extension of $\pbcp$ as a shorthand notation for $\dtlpmln$.
It turns out that the semantics of $\pbcp$ can also be directly defined in terms of Markov Decision Process (MDP) \cite{bellman57markovian}, which in turn allows us to define MDP in a succinct and elaboration tolerant way.
The result is theoretically interesting as it formally relates action languages to MDP despite their different origins, and furthermore justifies the semantics of the extended $\pbcp$ in terms of MDP. It is also computationally interesting because it allows for applying a number of algorithms developed for MDP to  computing $\pbcp$. Based on this idea, we design the system {\sc pbcplus2mdp} \cite{yi_wang_2020_3726430}\footnote{\url{https://github.com/ywang485/pbcplus2mdp}}, which turns a $\pbcp$ action description into the input language of an MDP solver, and leverages MDP solving to find an optimal policy for the $\pbcp$ action description.

The extended $\pbcp$ can thus be viewed as a high-level representation of MDP that allows for compact and elaboration tolerant encodings of sequential decision problems. Compared to other MDP-based planning description languages, such as PPDDL~\cite{younes04ppddl1} and RDDL~\cite{sanner10relational}, it inherits the nonmonotonicity of the stable model semantics to be able to compactly represent recursive definitions and indirect effects of actions, which can save the state space significantly. The following action domain is such an example.
\begin{example}\label{eg:block}
{\sl 
{\bf Robot and Blocks}\hspace{0.2cm} There are two rooms ${\tt R1}$, ${\tt R2}$, and three blocks ${\tt B1}$, ${\tt B2}$, ${\tt B3}$ that are originally located in ${\tt R1}$. A robot can stack one block on top of another block if the two blocks are in the same room. The robot can also move a block to a different room, resulting in all blocks above it also moving if successful (with probability $p$). Each moving action has a cost of $1$. What is the best way to move all blocks to ${\tt R2}$?
}
\end{example}
In this example, the effect of moving a block is stochastic. The best way of moving is defined as a moving policy that minimizes the expected total cost. Representing the cost of moving requires a notion of utility. Successfully moving a block has an indirect and recursive effect that the block on top of it is also moved. We show how this example can be represented in $\pbcp$ in Section \ref{sec:block-world}, and how the query can be answered with system {\sc pbcplus2mdp} in Section \ref{sec:system-pbcplus2mdp}.

We summarize our contribution as follows:
\begin{itemize}
\item We extended $\lpmln$ with the notion of utility, resulting in $\dtlpmln$; we developed an approximate algorithm for maximizing expected utility in $\dtlpmln$;
\item Based on $\dtlpmln$, we extended $\pbcp$ with the notion of utility;
\item We showed that the semantics of $\pbcp$ can be alternatively defined in terms of Markov Decision Process;
\item We demonstrated how $\pbcp$ can serve as an elaboration tolerant representation of MDP;
\item We developed a prototype system {\sc pbcplus2mdp}, for finding optimal policies of $\pbcp$ action descriptions using an MDP solver.  
\end{itemize}

\BOCC
Based on our conference paper \cite{wang2019elaboration}, we have made the following improvements in this journal version:

\begin{itemize}
\item We developed an algorithm for maximizing expected utility in $\dtlpmln$ and included some experimental result in section \ref{sec:dt-lpmln};
\item We have extended the preliminary section for a more self-contained presentation;
\item We included proofs for our theoretical results;
\item More implementation details about system {\sc pbcplus2mdp} can be found in \ref{sec:pbcplus2mdp-appendix}.
\end{itemize}
\EOCC

This paper is an extended version of \cite{wang19elaboration}, with the following advancements:

\begin{itemize}
\item We developed an algorithm for maximizing expected utility in $\dtlpmln$ and included some experimental result;
\item We have extended the preliminary section for a more self-contained presentation;
\item We included proofs for our theoretical results;
\item More implementation details about system {\sc pbcplus2mdp} can be found in the appendix.
\end{itemize}

This paper is organized as follows. After Section \ref{sec:preliminaries} reviews preliminaries,  Section~\ref{sec:dt-lpmln} extends $\lpmln$ with the notion of utility, through which we define the extension of $\pbcp$ with utility in Section \ref{sec:pbcplus-rewards}. 
Section~\ref{sec:block-world} defines $\pbcp$ as a high-level representation language for MDP, and Section \ref{sec:system-pbcplus2mdp} presents the prototype system {\sc pbcplus2mdp}. We discuss the related work in Section \ref{sec:related-work}.

\section{Preliminaries}\label{sec:preliminaries}

\BOCC
Due to the space limit, the reviews are brief. We refer the reader to the original papers \cite{lee16weighted,lee18aprobabilistic}, or the technical report of this paper \cite{wang19elaboration-arxiv} for the reviews of preliminaries. The technical report also contains all proofs and experiments with the system {\sc pbcplus2mdp}. 
\EOCC

\BOCC
\subsection{Review: Language $\lpmln$}

 An $\lpmln$ program is a finite set of weighted rules $w: R$ where $R$ is a rule and $w$ is a real number (in which case, the weighted rule is called {\em soft}) or $\alpha$ for denoting the infinite weight (in which case, the weighted rule is called {\em hard}). Throughout the paper, we assume that the language is propositional. Schematic variables can be introduced via grounding as standard in answer set programming. 
 
For any $\lpmln$ program $\Pi$ and any interpretation~$I$, 
$\overline{\Pi}$ denotes the usual (unweighted) logic program obtained from $\Pi$ by dropping the weights, and
${\Pi}_I$ denotes the set of $w: R$ in $\Pi$ such that $I\models R$.

%
Given a ground $\lpmln$ program $\Pi$, $\sm[\Pi]$ denotes the set of ``soft" stable models:
\[
\ba l
\{I\mid \text{$I$ is a (deterministic) stable model of $\Pi_I$ that satisfies all hard rules in $\Pi$} \} .
\ea
\]
The weight $W_\Pi(I)$ of an interpretation $I$ is defined as 
$exp\Bigg(\sum\limits_{w:R\;\in\; {\Pi}_I} w\Bigg)$ if $I\in\sm[\Pi]$, and 0 otherwise.
\BOC
\[
 W_\Pi(I) =
\begin{cases}
  exp\Bigg(\sum\limits_{w:R\;\in\; {\Pi}_I} w\Bigg) & 
      \text{if $I\in\sm[\Pi]$}; \\
  0 & \text{otherwise},
\end{cases}
\]
\EOC
The probability $P_\Pi(I)$ of $I$ is defined as
$
  \frac{W_\Pi(I)}{\sum\limits_{J\in {\rm SM}[\Pi]}{W_\Pi(J)}}
  $.

\EOCC

\subsection{Review: The Stable Model Semantics}

We first review the definition of a (deterministic) stable model. Given a propositional signature $\sigma$, we consider {\em rules} over $\sigma$ of the form
\begin{equation}
\label{eq:rule}
\ba l
   A_1;\dots; A_k\ \ar\ 
      A_{k+1}, \dots, A_m, \no\ A_{m+1}, \dots, \no\ A_n, \no\ \no\ A_{n+1},\dots, \no\ \no\ A_p
\ea 
\end{equation}
($0\le k\le m\le n\le p$) 
where all $A_i$ are atoms of $\sigma$. $ A_1;\dots; A_k$ is called the {\em head} of the rule and 
\[
A_{k+1}, \dots, A_m, \no\ A_{m+1}, \dots, \no\ A_n, \no\ \no\ A_{n+1},\dots, \no\ \no\ A_p
\]
is called the {\em body} of the rule. We write $\{A_1\}^{\rm ch}\ar \i{Body}$ to denote the rule $A_1\ar \i{Body}, \no\ \no\ A_1$. This expression is called a ``choice rule'' in Answer Set Programming. 

We will often identify \eqref{eq:rule} with the implication:
\beq
\ba l
   A_1\lor\dots\lor A_k\ \ar\ 
      A_{k+1}\!\land\!\dots\!\land\! A_m\!\land\! \neg A_{m+1}\!\land\!\dots\!\land\!\neg A_n 
       \!\land\!\neg\neg A_{n+1}\!\land\!\dots\!\land\!\neg\neg A_p\ .
\ea
\eeq{rule-impl}

A {\sl logic program} is a finite set of rules. A logic program is
called {\sl ground} if it contains no variables. For an interpretation $I$ and a formula $F$, we use $I\models F$ to denote ``$I$ satisfies $F$''. 
We say that an Herbrand interpretation $I$ is a {\em model} of a ground
program $\Pi$ if $I$ satisfies all implications~\eqref{rule-impl}
in~$\Pi$. 

Such models can be divided into two groups: ``stable''
and ``non-stable'' models, which are distinguished as follows. The {\em reduct} of $\Pi$ relative to $I$, denoted
$\Pi^I$, consists of 
``$
  A_1\lor\dots\lor A_k\ \ar\ A_{k+1}\land\dots\land A_m
$''
for all rules~\eqref{rule-impl} in $\Pi$ such
that $I\models  \neg A_{m+1}\land\dots\land\neg A_n\land\neg\neg
A_{n+1}\land\dots\land\neg\neg A_p$.

\begin{definition}
The Herbrand interpretation $I$ is called a {\em (deterministic) stable model} of~$\Pi$ (denoted by $I\smmodels\Pi$) if $I$ is a minimal Herbrand model of $\Pi^I$. 
(Minimality is in terms of set inclusion. We identify an Herbrand
interpretation with the set of atoms that are true in it.)
\end{definition}

%
For example, the stable models of the program
\beq
\ba {lllllll}
 P \ar Q  & \hspace{5mm} & 
 Q \ar P  & \hspace{5mm} &  
 P\ar \no\ R & \hspace{5mm} & 
 R \ar \no\ P 
\ea
\eeq{ex1}
are $\{P, Q\}$ and $\{R\}$. The reduct relative to
$\{P,Q\}$ is $\{P\ar Q.\ \ \ Q\ar P.\ \ \ P.\}$, for which $\{P,Q\}$ is the
minimal model; the reduct relative to 
$\{R\}$ is \hbox{$\{P\ar Q.\ \ \ Q\ar P.\ \ \ R.\}$}, for which $\{R\}$ is the minimal model.

The definition is extended to any non-ground program~$\Pi$ by identifying it with $gr_\sigma[\Pi]$, the {\em ground program} obtained from~$\Pi$ by replacing every variable with every ground term of~$\sigma$.

The semantics is extended to allow some useful constructs, such as aggregates and abstract constraints (e.g., \cite{nie00,fab04,fer05,son06,pel07}), which are proved to be useful in many KR domains. 

\subsection{Review: Language $\lpmln$}

We review the definition of $\lpmln$ from~\cite{lee16weighted}.
\begin{definition}
An $\lpmln$ program is a finite set of weighted rules $w: R$ where $R$ is a rule of the form \eqref{eq:rule} and $w$ is a real number (in which case, the weighted rule is called {\em soft}) or $\alpha$ for denoting the infinite weight (in which case, the weighted rule is called {\em hard}). 
\end{definition}
 
For any $\lpmln$ program $\Pi$ and any interpretation~$I$, 
$\overline{\Pi}$ denotes the usual (unweighted) logic program obtained from $\Pi$ by dropping the weights, and
${\Pi}_I$ denotes the set of $w: R$ in $\Pi$ such that $I\models R$.

In general, an $\lpmln$ program may even have stable models that violate some hard rules, which encode definite knowledge. However, throughout the paper, we restrict attention to $\lpmln$ programs whose stable models do not violate hard rules. 
\begin{definition}
Given a ground $\lpmln$ program $\Pi$, $\sm[\Pi]$ denotes the set
\[
\ba l
\{I\mid \text{$I$ is a (deterministic) stable model of $\Pi_I$ that satisfies all hard rules in $\Pi$} \} .
\ea
\]
The weight of an interpretation $I$, denoted $W_{\Pi}(I)$, is defined as\footnote{$exp$ stands for the natural exponential function.} 
\[
 W_\Pi(I) =
\begin{cases}
  exp\Bigg(\sum\limits_{w:R\;\in\; {\Pi}_I} w\Bigg) & 
      \text{if $I\in\sm[\Pi]$}; \\
  0 & \text{otherwise},
\end{cases}
\]
and the probability of $I$, denoted $P_\Pi(I)$, is defined as
\[
  P_\Pi(I)  = 
  \frac{W_\Pi(I)}{\sum\limits_{J\in {\rm SM}[\Pi]}{W_\Pi(J)}}. 
\]
\end{definition}


\subsection{Review: Multi-Valued Probabilistic Programs} \label{ssec:mvpp}
Multi-valued probabilistic programs \cite{lee16weighted} are a simple fragment of $\lpmln$ that allows us to represent probability more naturally. 

We assume that the propositional signature $\sigma$ is constructed from ``constants'' and their ``values.'' A {\em constant} $c$ is a symbol that is associated with a finite set $\i{Dom}(c)$, called the {\em domain}. 
The signature $\sigma$ is constructed from a finite set of constants, consisting of atoms $c\!=\!v$~\footnote{%
Note that here ``='' is just a part of the symbol for propositional atoms, and is not  equality in first-order logic. }
for every constant $c$ and every element $v$ in $\i{Dom}(c)$.

If the domain of~$c$ is $\{\false,\true\}$ then we say that~$c$ is {\em Boolean}, and abbreviate $c\mvis\true$ as $c$ and $c\mvis\false$ as~$\sneg c$. 

We assume that constants are divided into {\em probabilistic} constants and {\em non-probabilistic} constants.
A multi-valued probabilistic program ${\bf \Pi}$ is a tuple $\langle \i{PF}, \Pi \rangle$, where
\begin{itemize}
\item $\i{PF}$ contains \emph{probabilistic constant declarations} of the following form:
\begin{equation}\label{eq:probabilistic-constant-declaration}
p_1::\ c\mvis v_1\mid\dots\mid p_n::\ c\mvis v_n
\end{equation}
one for each probabilistic constant $c$, where $\{v_1,\dots, v_n\}=\i{Dom}(c)$, $v_i\ne v_j$, $0\leq p_1,\dots,p_n\leq1$ and $\sum_{i=1}^{n}p_i=1$. We use $M_{\bf \Pi}(c=v_i)$ to denote $p_i$.
In other words, $\i{PF}$ describes the probability distribution over each ``random variable''~$c$. 

\item $\Pi$ is a set of rules such that the head contains no probabilistic constants.
\end{itemize}

Such a program is called a {\em multi-valued probabilistic program}. The semantics of such a program ${\bf \Pi}$ is defined as a shorthand for $\lpmln$ program $T({\bf \Pi})$ of the same signature as follows.
\begin{itemize}
\item For each probabilistic constant declaration (\ref{eq:probabilistic-constant-declaration}), $T({\bf \Pi})$ contains, 
for each $i=1,\dots, n$,
(i) $ln(p_i):  c\mvis v_i$  if $0<p_i<1$; 
(ii) $\alpha:\ c\mvis v_i$ if $p_i=1$;
(iii) $\alpha:\ \bot\ar c\mvis v_i$ if $p_i=0$.

\item  For each rule $\i{Head}\ar\i{Body}$ in $\Pi$, $T({\bf \Pi})$ contains
$
\alpha:\ \  \i{Head}\ar\i{Body}. 
$

\ii For each constant $c$, $T({\bf \Pi})$ contains the uniqueness of value constraints
\beq
\ba {rl}
   \alpha: & \bot \ar c\mvis v_1\land c=v_2 
\ea 
\eeq{uc}
for all $v_1,v_2 \in\i{Dom}(c)$ such that $v_1\ne v_2$, and the existence of value constraint
\beq
\ba {rl}
  \alpha: & \bot \ar \neg \bigvee\limits_{v \in {Dom}(c)} c\mvis v\ .
\ea 
\eeq{ec}
\end{itemize}

In the presence of the constraints \eqref{uc} and \eqref{ec}, assuming $T({\bf \Pi})$ has at least one (probabilistic) stable model that satisfies all the hard rules, a (probabilistic) stable model $I$ satisfies $c=v$ for exactly one value $v$, so we may identify $I$ with the value assignment that assigns $v$ to $c$.

\subsection{Review: Action Language $\pbcp$}\label{ssec:pbcplus}
In this section, we review the syntax and semantics of $\pbcp$ from \cite{lee18aprobabilistic}.

\subsubsection{Syntax of $p\cal{BC}+$}
We assume a propositional signature~$\sigma$ as defined in Section~\ref{ssec:mvpp}.
We further assume that the signature of an action description is divided into four groups: {\em fluent constants}, {\em action constants},  {\em pf (probability fact) constants} and {\em  initpf (initial probability fact) constants}. Fluent constants are further divided into {\em regular} and {\em statically determined}. The domain of every action constant is Boolean. 
A {\em fluent formula} is a formula such that all constants occurring in it are fluent constants. 

The following definition of $p\cal{BC}$+ is based on the definition of ${\cal BC}$+ language from \cite{babb15action1}.

A {\em static law} is an expression of the form
\begin{equation}\label{eq:static-law}
\caused\ F\ \iif\ G
\end{equation}
where $F$ and $G$ are fluent formulas.


A {\em fluent dynamic law} is an expression of the form
\begin{equation}\label{eq:fluent-dynamic-law}
\caused\ F\ \iif\ G\ \after\ H
\end{equation}
where $F$ and $G$ are fluent formulas and $H$ is a formula, provided that $F$ does not contain statically determined constants and $H$ does not contain initpf constants.

A {\em pf constant declaration} is an expression of the form
\begin{equation}\label{eq:pf-declare-no-time}
   \caused\ \i{c}=\{v_1:p_1, \dots, v_n:p_n\}
\end{equation}
where $\i{c}$ is a pf constant with domain $\{v_1, \dots, v_n\}$, $0<p_i<1$ for each $i\in\{1, \dots, n\}$\footnote{We require $0<p_i<1$ for each $i\in\{1, \dots, n\}$ for the sake of simplicity. On the other hand, if $p_i=0$ or $p_i=1$ for some $i$, that means either $v_i$ can be removed from the domain of $c$ or there is not really a need to introduce $c$ as a pf constant. So this assumption does not really sacrifice expressivity.}, and $p_1+\cdots+p_n=1$. In other words, \eqref{eq:pf-declare-no-time} describes the probability distribution of $c$.

An {\em initpf constant declaration} is an expression of the form (\ref{eq:pf-declare-no-time}) where $c$ is an initpf constant. 

An {\em initial static law} is an expression of the form
\begin{equation}\label{eq:init-static-law}
\init\ F\ \iif\ G
\end{equation}
where $F$ is a fluent constant and $G$ is a formula that contains neither action constants nor pf constants. 

A {\em causal law} is a static law, a fluent dynamic law, a pf constant  declaration, an initpf constant declaration, or an initial static law. An {\em action description} is a finite set of causal laws.

We use $\sigma^{fl}$ to denote the set of fluent constants, $\sigma^{act}$ to denote the set of action constants, $\sigma^{pf}$ to denote the set of pf constants, and $\sigma^{initpf}$ to denote the set of initpf constants. For any signature $\sigma^\prime$ and any $i\in\{0, \dots, m\}$, we use $i:\sigma^\prime$ to denote the set
$\{i:a \mid a\in\sigma^\prime\}$.

By $i:F$ we denote the result of inserting $i:$ in front of every occurrence of every constant in formula $F$. This notation is straightforwardly extended when $F$ is a set of formulas.

\subsubsection{Semantics of $p\cal{BC}+$}
Given an integer $m\geq 0$ denoting the maximum length of histories, the semantics of an action description $D$ in $p{\cal BC}$+ is defined by a reduction to multi-valued probabilistic program $Tr(D, m)$, which is the union of two subprograms $D_m$ and $D_{init}$ as defined below. 

For an action description $D$ of a signature $\sigma$, we define a sequence of multi-valued probabilistic program $D_0, D_1, \dots,$ so that the stable models of $D_m$ can be identified with the paths in the ``transition system'' described by $D$.

The signature $\sigma_m$ of $D_m$ consists of atoms of the form $i:c=v$ such that
\begin{itemize}
\item for each fluent constant $c$ of $D$, $i\in\{0, \dots, m\}$ and $v\in Dom(c)$,
\item for each action constant or pf constant $c$ of $D$, $i\in\{0, \dots, m-1\}$ and $v\in Dom(c)$.
\end{itemize}

For $x\in \{act, fl, pf\}$, we use $\sigma^{x}_m$ to denote the subset of $\sigma_m$
\[
\{i:c=v \mid \text{$i:c=v\in \sigma_m$ and $c\in\sigma^{x}$}\}.
\]
For $i\in\{0, \dots, m\}$, we use $i:\sigma^{x}$ to denote the subset of $\sigma_m^{x}$
\[
\{i:c=v\mid i:c=v\in \sigma_m^{x}\}.
\]

We define $D_m$ to be the multi-valued probabilistic program  $\langle PF, \Pi\rangle$, where $\Pi$ is the conjunction of

\begin{equation}\label{eq:static-law-asp}
i:F \leftarrow i:G
\end{equation}
for every static law (\ref{eq:static-law}) in $D$ and every $i\in\{0, \dots, m\}$,

\begin{equation}\label{eq:fluent-dynamic-law-asp}
i\!+\!1:F\leftarrow (i\!+\!1:G)\wedge(i:H)
\end{equation}
for every fluent dynamic law (\ref{eq:fluent-dynamic-law}) in $D$ and every $i\in\{0, \dots, m-1\}$,

\begin{equation}\label{eq:init-fluent-choice}
\{0\!:\!c=v\}^{\rm ch}
\end{equation}
for every regular fluent constant $c$ and every $v\in Dom(c)$,
 
\begin{equation}\label{eq:action-choice}
\{i:c=\true\}^{\rm ch}, \ \ \ \ 
\{i:c=\false\}^{\rm ch}
\end{equation}
for every action constant $c$,

and $PF$ consists of 
\begin{equation}\label{eq:prod-declaration-mvpp}
   p_1::\ i:pf=v_1 \mid \dots \mid p_n::\ i:pf=v_{n}
\end{equation}
($i=0,\dots,m-1$) for each pf constant declaration \eqref{eq:pf-declare-no-time} in $D$ that describes the probability distribution of $\i{pf}$.

In addition, we define the program $D_{init}$, whose signature is $0\!:\!\sigma^{initpf}\cup 0\!:\!\sigma^{fl}$.
$D_{init}$ is the multi-valued probabilistic program
\[
D_{init} = \langle PF^{init}, \Pi^{init}\rangle
\]
where $\Pi^{init}$ consists of the rule
\[
\bot\leftarrow \neg(0\!:\!F)\land 0\!:\!G
\]
for each initial static law (\ref{eq:init-static-law}),
and $PF^{init}$ consists of 
\[
p_1::\ 0\!:\!pf=v_1\ \ \mid\ \  \dots\ \  \mid\ \  p_n::\ 0\!:\!pf=v_n
\]
for each initpf constant declaration (\ref{eq:pf-declare-no-time}).

We define $Tr(D, m)$ to be the union of the two multi-valued probabilistic program \\
$
\langle  PF\cup PF^{init}, \Pi\cup\Pi^{init} \rangle.
$

For any $\lpmln$ program $\Pi$ of signature $\sigma$ and a value assignment $I$ to a subset $\sigma'$ of $\sigma$, we say $I$ is a {\em residual (probabilistic) stable model} of $\Pi$ if there exists a value assignment $J$ to $\sigma\setminus \sigma'$ such that $I\cup J$ is a (probabilistic) stable model of $\Pi$.

For any value assignment $I$ to constants in $\sigma$, by $i\!:\!I$ we denote the value assignment to constants in $i\!:\!\sigma$ so that $i\!:\!I\models (i\!:\!c)=v$ iff $I\models c=v$.

We define a {\em state} as an interpretation $I^{fl}$ of $\sigma^{fl}$ such that $0\!:\!I^{fl}$ is a residual (probabilistic) stable model of $D_0$. A {\em transition} of $D$ is a triple $\langle s, e, s^\prime\rangle$  where $s$ and $s^\prime$ are interpretations of $\sigma^{fl}$ and $e$ is a an interpretation of $\sigma^{act}$ such that $0\!:\!s \cup 0\!:\!e \cup 1:s^\prime$ is a residual stable model of $D_1$. A {\em pf-transition} of $D$ is a pair $(\langle s, e, s^\prime\rangle, pf)$, where $pf$ is a value assignment to $\sigma^{pf}$ such that $0\!:\!s\cup 0\!:\!e \cup 1:s^\prime \cup 0\!:\!pf$ is a stable model of $D_1$. 

\begin{definition}
A {\em (probabilistic) transition system} $T(D)$ represented by a probabilistic action description $D$ is a labeled directed graph such that the vertices are the states of $D$, and the edges are obtained from the transitions of $D$: for every transition $\langle s, e, s^\prime\rangle$  of $D$, an edge labeled $e: p$ goes from $s$ to $s^\prime$, where $p=Pr_{D_m}(1\!:\!s^\prime \mid 0\!:\!s, 0\!:\!e)$. The number $p$ is called the {\em transition probability} of $\langle s, e ,s^\prime\rangle$.
\end{definition}

The soundness of the definition of a probabilistic transition system relies on the following proposition. 
\begin{prop}\label{prop:state-in-transition}
For any transition $\langle s, e, s^\prime \rangle$, $s$ and $s^\prime$ are states.
\end{prop}

\cite{lee18aprobabilistic} make the following simplifying assumptions on action descriptions:

\begin{enumerate}
\item {\bf No Concurrency}: For all transitions $\langle s, e, s'\rangle$, we have $e(a)=t$ for at most one $a\in \sigma^{act}$;
\item {\bf Nondeterministic Transitions are Controlled by pf constants}: For any state $s$, any value assignment $e$ of $\sigma^{act}$ such that at most one action is true, and any value assignment $pf$ of $\sigma^{pf}$, there exists exactly one state $s^\prime$ such that $(\langle s, e, s^\prime\rangle, pf)$ is a pf-transition;
\item {\bf Nondeterminism on Initial States are Controlled by Initpf constants}: Given any assignment $pf_{init}$ of $\sigma^{initpf}$, there exists exactly one assignment $fl$ of $\sigma^{fl}$ such that $0\!:\!pf_{init}\cup 0\!:\!fl$ is a stable model of $D_{init}\cup D_0$.
\end{enumerate}

For any state $s$, any value assignment $e$ of $\sigma^{act}$ such that at most one action is true, and any value assignment $pf$ of $\sigma^{pf}$, we use $\phi(s, e, pf)$ to denote the state $s'$ such that $(\langle s, a, s^\prime\rangle, pf)$ is a pf-transition (According to Assumption 2, such $s^\prime$ must be unique). For any interpretation $I$, $i\in \{0, \dots, m\}$ and any subset $\sigma^\prime$ of $\sigma$, we use $I|_{i:\sigma^\prime}$ to denote the value assignment of $I$ to atoms in $i:\sigma^\prime$. Given any value assignment $TC$ of $0\!:\!\sigma^{initpf}\cup \sigma^{pf}_m$and a value assignment $A$ of $\sigma_m^{act}$, we construct an interpretation $I_{TC\cup A}$ of $Tr(D, m)$ that satisfies $TC \cup A$ as follows:
\begin{itemize}
\item  For all atoms $p$ in $\sigma^{pf}_m\cup 0\!:\!\sigma^{initpf}$, 
           we have $I_{TC\cup A}(p) = TC(p)$;
\item  For all atoms $p$ in $\sigma_m^{act}$, we have $I_{TC\cup A}(p) = A(p)$;
\item $(I_{TC\cup A})|_{0:\sigma^{fl}}$ is the assignment such that $(I_{TC\cup A})|_{0:\sigma^{fl}\cup 0:\sigma^{initpf}}$ is a stable model of $D_{init}\cup D_0$.
\item For each $i\in \{1, \dots, m\}$, $$(I_{TC\cup A})|_{i:\sigma^{fl}} = \phi((I_{TC\cup A})|_{(i-1):\sigma^{fl}}, (I_{TC\cup A})|_{(i-1):\sigma^{act}}, (I_{TC\cup A})|_{(i-1):\sigma^{pf}}).$$
\end{itemize}
By Assumptions 2 and 3, the above construction produces a unique interpretation. 

It can be seen that in the multi-valued probabilistic program $Tr(D, m)$ translated from $D$, the probabilistic constants  are $0\!:\!\sigma^{initpf}\cup \sigma^{pf}_m$. We thus call the value assignment of an interpretation $I$ on $0\!:\!\sigma^{initpf}\cup \sigma^{pf}_m$ the {\em total choice} of $I$. The following theorem asserts that the probability of a stable model under $Tr(D, m)$ can be computed by simply dividing the probability of the total choice associated with the stable model by the number of choice of actions.

\begin{thm}\label{thm:path-probability}
For any value assignment $TC$ of $ 0\!:\!\sigma^{initpf}\cup\sigma^{pf}_m$ and any value assignment $A$ of $\sigma_m^{act}$, there exists exactly one stable model $I_{TC\cup A}$ of $Tr(D, m)$ that satisfies $TC\cup A$, and the probability of $I_{TC\cup A}$ is
\[
Pr_{Tr(D, m)}(I_{TC\cup A}) = \frac{\underset{c=v\in TC}{\prod}M(c=v)}{(|\sigma^{act}| + 1)^{m}}.
\]
\end{thm}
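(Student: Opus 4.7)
The plan is to separate the claim into (i) an existence-and-uniqueness part for the stable model $I_{TC\cup A}$, and (ii) a closed-form computation of its probability from the $\lpmln$ weights of $Tr(D,m)$. Both parts lean heavily on Assumptions 1--3, which are what make the construction $I_{TC\cup A}$ well-defined and what make the space of stable models factorize cleanly.

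For existence and uniqueness, I would first argue that the construction preceding the theorem statement is well-defined: by Assumption 3, the initial fluent layer $(I_{TC\cup A})|_{0:\sigma^{fl}}$ is uniquely pinned down by $TC|_{0:\sigma^{initpf}}$ through $D_{init}\cup D_0$; then by Assumption 2, each subsequent layer $(I_{TC\cup A})|_{i{+}1:\sigma^{fl}}$ is uniquely determined via $\phi$ from the previous fluent layer, action layer, and pf layer. Assumption 1 guarantees $A$ is a legitimate action-value assignment at every step (at most one true action per step). I would then verify that $I_{TC\cup A}$ is a stable model of $Tr(D,m)$ by splitting the program into its layers: the $0$-layer rules from $D_{init}\cup D_0$ are satisfied by construction (and $(I_{TC\cup A})|_{0:\sigma^{fl}\cup 0:\sigma^{initpf}}$ is a stable model of them by Assumption 3); for each $i\geq 0$, the rules translating static/dynamic laws are satisfied and yield $(I_{TC\cup A})|_{i{+}1:\sigma^{fl}}$ as the unique minimal justified fluent layer via the pf-transition relation $\phi$. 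Conversely, any stable model $J$ satisfying $TC\cup A$ must agree with $I_{TC\cup A}$ on the fluent layers by induction on $i$, using the uniqueness clauses of Assumptions 2 and 3 at each inductive step.

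For the probability, I would invoke the $\lpmln$ translation of a multi-valued probabilistic program. The only soft rules in $Tr(D,m)$ come from the pf and initpf constant declarations; everything else (static laws, fluent dynamic laws, choice rules, uniqueness/existence constraints) is hard. Since we restrict attention to stable models that satisfy all hard rules, the $\alpha$-contribution is identical across $\sm[Tr(D,m)]$ and cancels between numerator and denominator. Thus
\[
W_{Tr(D,m)}(I_{TC\cup A}) \;\propto\; \exp\!\Bigl(\textstyle\sum_{c=v\in TC} \ln M(c=v)\Bigr) \;=\; \prod_{c=v\in TC} M(c=v),
\]
because $I_{TC\cup A}$ satisfies exactly the soft instances picked out by $TC$ (one per probabilistic constant, by the uniqueness-of-value constraints).

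To finish, I would compute the normalizing constant by ranging over all stable models: by the first part, stable models are in bijection with pairs $(TC',A')$ where $TC'$ is a total choice and $A'$ is an action assignment respecting Assumption 1. Hence
\[
\sum_{J\in\sm[Tr(D,m)]} W(J) \;\propto\; \Bigl(\sum_{TC'}\prod_{c=v\in TC'} M(c=v)\Bigr)\Bigl(\sum_{A'} 1\Bigr).
\]
The first factor equals $1$ because each probabilistic constant's distribution sums to $1$ and distinct constants contribute independently. For the second factor, Assumption 1 gives exactly $|\sigma^{act}|+1$ legal action choices per time step (each single action, plus ``no action''), so across $m$ steps we obtain $(|\sigma^{act}|+1)^m$. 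Dividing yields the stated formula. The one delicate step I expect to spend the most care on is verifying that $I_{TC\cup A}$ is genuinely a stable model (not just a classical model) of $Tr(D,m)$ --- in particular, that the choice rules for initial fluents and actions produce no extra unintended stable models and that the inductive construction via $\phi$ coincides with the minimality condition of the reduct.
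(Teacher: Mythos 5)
Your proposal is correct and follows exactly the intended argument: the layer-by-layer construction of $I_{TC\cup A}$ via Assumption 3 (initial state) and the map $\phi$ from Assumption 2, the observation that only the pf/initpf declarations contribute soft weights (so the hard-rule $\alpha$ factors cancel and $W(I_{TC\cup A})\propto\prod_{c=v\in TC}M(c=v)$), and the normalization $\bigl(\sum_{TC'}\prod M\bigr)\cdot(|\sigma^{act}|+1)^m=(|\sigma^{act}|+1)^m$ using Assumption 1 to count the $|\sigma^{act}|+1$ admissible action choices per step. The paper itself gives no proof of this theorem (it is reviewed from the earlier $\pbcp$ paper, with the appendix only remarking that it persists under the utility extension since utility rules are hard and their head atoms are determined), and your construction coincides with the one stated immediately before the theorem, so there is nothing to object to beyond the stability/minimality check you already flag as the delicate step.
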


The following theorem tells us that the conditional probability of transiting from a state $s$ to another state $s^\prime$ with action $e$ remains the same for all timesteps, i.e., the conditional probability of $i\!+\!1\!:\!s^\prime$ given $i:s$ and $i:e$ correctly represents the transition probability from $s$ to $s^\prime$ via $e$ in the transition system.

\begin{thm}\label{thm:transition-probability}
For any state $s$ and $s^\prime$, and action $e$, we have
\[
Pr_{Tr(D, m)}(i\!+\!1\!:\!s^\prime\mid i:s, i:e) = Pr_{Tr(D, m)}(j\!+\!1\!:\!s^\prime\mid j:s, j:e)
\]
for any $i, j\in\{0, \dots, m-1\}$ such that $Pr_{Tr(D, m)}(i:s)> 0$ and $Pr_{Tr(D, m)}(j:s)> 0$.
\end{thm}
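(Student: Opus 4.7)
The plan is to show that the transition probability depends only on the pf-constant distribution at step $i$, and since that distribution is identical at every timestep, the conditional probability is independent of $i$. I will work directly with the formula from Theorem~\ref{thm:path-probability}, which gives a nice product decomposition of the probability of each stable model into contributions from the initpf at step $0$, the pf constants at each step, and a uniform $1/(|\sigma^{act}|+1)$ per action step.

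First I would rewrite
\[
Pr_{Tr(D,m)}(i{+}1{:}s^\prime \mid i{:}s, i{:}e)
 \;=\;
\frac{Pr_{Tr(D,m)}(i{+}1{:}s^\prime \wedge i{:}s \wedge i{:}e)}{Pr_{Tr(D,m)}(i{:}s \wedge i{:}e)}
\]
and expand both numerator and denominator as sums over all $(TC,A)$ pairs whose induced stable model $I_{TC\cup A}$ satisfies the respective conjunction. Using Assumption 2, the condition $I_{TC\cup A}\models i{+}1{:}s^\prime \wedge i{:}s \wedge i{:}e$ can be rephrased as: the path built from $(TC,A)$ reaches $s$ at step $i$, has action $e$ at step $i$, and the pf-value at step $i$ lies in the set $\mathrm{PF}(s,e,s^\prime)=\{pf : \phi(s,e,pf)=s^\prime\}$. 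Crucially, the condition $I_{TC\cup A}\models i{:}s$ constrains only the initpf and the pf-values at steps $0,\dots,i-1$ together with actions at steps $0,\dots,i-1$, while the condition on $i{+}1{:}s^\prime$ further constrains only the pf-value at step $i$.

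Next, I would factor the product $\prod_{c=v\in TC} M(c=v)$ according to the step at which each pf (or initpf) constant lives, and similarly factor $(|\sigma^{act}|+1)^{-m}$ as $m$ per-step factors. Summing first over all pf-values at steps $i+1,\dots,m-1$ and over all actions at steps $i+1,\dots,m-1$, each such sum telescopes to $1$ (since each pf-constant's probabilities sum to $1$ and $|\sigma^{act}|+1$ cancels the denominator). What remains in the numerator factors cleanly as
\[
\Bigl(\sum_{\substack{(\mathit{init},pf_{0..i-1},a_{0..i-1})\\ \text{path reaches }s\text{ at step }i}} P_{\text{prefix}} \Bigr) \cdot \frac{1}{|\sigma^{act}|+1}\!\cdot\! \mathbf{1}_{a_i=e}\cdot \Bigl(\sum_{pf\in \mathrm{PF}(s,e,s^\prime)} M(pf)\Bigr),
\]
and the denominator is the same expression without the last factor. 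Cancelling gives
\[
Pr_{Tr(D,m)}(i{+}1{:}s^\prime \mid i{:}s, i{:}e) = \sum_{pf\in \mathrm{PF}(s,e,s^\prime)} M(pf),
\]
which depends only on $s$, $e$, $s^\prime$, not on $i$. Applying the same derivation at step $j$ yields the same right-hand side, proving the claim.

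The main obstacle will be the bookkeeping in the factoring step, in particular verifying that the "prefix" sum (paths reaching $s$ at step $i$) and the "suffix" sum (everything after step $i$) factor completely and that the suffix sum equals $1$. This relies on Assumptions 2 and 3 to ensure that for every choice of pf-constants and actions one gets exactly one stable model, so that the sum over stable models coincides with an independent product of sums over the probabilistic constants at each step. Once that independence is spelled out via Theorem~\ref{thm:path-probability}, the cancellation is immediate and the time-homogeneity falls out.
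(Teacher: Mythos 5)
Your argument is correct. One thing to be aware of: this paper never proves the statement itself — it is reviewed from the earlier $\pbcp$ paper (Theorem 2 of \cite{lee18aprobabilistic}), and the appendix merely remarks that it continues to hold after adding utility laws — so there is no in-paper proof to diverge from; your derivation is the natural one given the machinery the paper does set up. Expanding both the numerator and denominator over total choices via Theorem~\ref{thm:path-probability}, using Assumptions 2 and 3 and the map $\phi$ to turn satisfaction of $i\!:\!s$, $i\!:\!e$, $i\!+\!1\!:\!s'$ into conditions on the prefix choices, the step-$i$ action, and the step-$i$ pf values, and then cancelling the prefix factor is exactly the intended route, and it yields the stronger explicit identity $Pr_{Tr(D,m)}(i\!+\!1\!:\!s'\mid i\!:\!s, i\!:\!e)=\sum_{pf:\,\phi(s,e,pf)=s'}M(pf)$, which is time-independent because the pf declarations install the same distribution at every step. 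Two points you leave implicit are worth stating, though both are consistent with the paper's own usage: the action assignments $A$ must be restricted to those making at most one action true per step (this is what makes the $(|\sigma^{act}|+1)^m$ normalization and the telescoping of the suffix action sums work, exactly as in the proof of Lemma~\ref{lem:action-equal-probability}), and well-definedness of the conditional follows from your own factorization, since it gives $Pr_{Tr(D,m)}(i\!:\!s\wedge i\!:\!e)=Pr_{Tr(D,m)}(i\!:\!s)/(|\sigma^{act}|+1)>0$ under the stated hypothesis.
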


For every subset $X_m$ of $\sigma_m\setminus\sigma^{pf}_m$, let $X^i(i < m)$ be the triple consisting of
\begin{itemize}
\item the set consisting of atoms $A$ such that $i:A$ belongs to $X_m$ and $A\in \sigma^{fl}$;
\item the set consisting of atoms $A$ such that $i:A$ belongs to $X_m$ and $A\in \sigma^{act}$;
\item the set consisting of atoms $A$ such that $i\!+\!1\!:\!A$ belongs to $X_m$ and $A\in \sigma^{fl}$.
\end{itemize}
Let $p(X^i)$ be the transition probability of $X^i$, $s_0$ is the interpretation of $\sigma^{fl}_0$ defined by $X^0$, and $e_i$ be the interpretations of $i:\sigma^{act}$ defined by $X^{i}$.

Since the transition probability remains the same, the probability of a path given a sequence of actions can be computed from the probabilities of transitions.

\begin{cor}\label{thm:reduce2transition}
For every $m\geq 1$, $X_m$ is a residual (probabilistic) stable model of $Tr(D, m)$ iff $X^0, \dots, X^{m-1}$ are transitions of $D$ and $0\!:\!s_0$ is a residual stable model of $D_{init}$. Furthermore, 
\[
Pr_{Tr(D, m)}(X_m\mid 0\!:\!e_0, \dots, m-1\!:\!e_{m-1}) = p(X^0)\times\dots\times p(X^m)\times Pr_{Tr(D, m)}(0\!:\!s_0).
\]
\end{cor}

\BOC

Like ${\cal BC}$ and ${\cal BC}$+, language $\pbcp$ assumes that a propositional signature $\sigma$ is constructed from ``constants'' and their ``values.'' 
A {\em constant} $c$ is a symbol that is associated with a finite set $\i{Dom}(c)$, called the {\em domain}. 
The signature $\sigma$ is constructed from a finite set of constants, consisting of atoms $c\!=\!v$
for every constant $c$ and every element $v$ in $\i{Dom}(c)$.
If the domain of~$c$ is $\{\false,\true\}$, then we say that~$c$ is {\em Boolean}, and abbreviate $c\mvis\true$ as $c$ and $c\mvis\false$ as~$\sneg c$. 

There are four types of constants in $\pbcp$: {\em fluent constants}, {\em action constants},  {\em pf (probability fact) constants} and {\em  initpf (initial probability fact) constants}. Fluent constants are further divided into {\em regular} and {\em statically determined}. The domain of every action constant is restricted to Boolean. An {\em action description} is a finite set of {\em causal laws}, which describes how fluents depend on each other statically and how their values change from one time step to another. Figure~\ref{fig:pbcplus-causal-laws} lists causal laws in $\pbcp$ and their translations into $\lpmln$.
A {\em fluent formula} is a formula such that all constants occurring in it are fluent constants. 

We use $\sigma^{fl}$ ($\sigma^{act}$, $\sigma^{pf}$, and $\sigma^{initpf}$, respectively)  to denote the set of all atoms $c=v$ where $c$ is a fluent constant (action constant, pf constant, initpf constant, respectively) of $\sigma$ and $v$ is in $\i{Dom}(c)$. 
For any subset $\sigma'$ of $\sigma$ and any $i\in\{0, \dots, m\}$, we use $i\!:\!\sigma^\prime$ to denote the set
$\{i\!:\!A \mid A\in\sigma^\prime\}$. For any formula $F$ of signature $\sigma$, by $i\!:\!F$ we denote the result of inserting $i\!:$ in front of every occurrence of every constant in~$F$. 

The semantics of a $\pbcp$ action description $D$ is defined by a translation into an $\lpmln$ program $Tr(D, m) = D_{init}\cup D_m$. Below we describe the essential part of the translation that turns a $\pbcp$ description into an $\lpmln$ program. 

The signature $\sigma_m$ of $D_m$ consists of atoms of the form $i\!:\!c=v$ such that
\begin{itemize}
\item for each fluent constant $c$ of $D$, $i\in\{0, \dots, m\}$ and $v\in Dom(c)$,
\item for each action constant or pf constant $c$ of $D$, $i\in\{0, \dots, m-1\}$ and $v\in Dom(c)$.
\end{itemize}

\BOCC
For $i\in\{0, \dots, m\}$, we use $i:\sigma^{x}_m$ to denote the subset of $\sigma_m^{x}$:
$$
\{i\!:\!c=v\ \mid\  i:c=v\in \sigma_m^{x}\}.
$$
\EOCC

\begin{figure}[t]
\centering
 \includegraphics[width=1\textwidth]{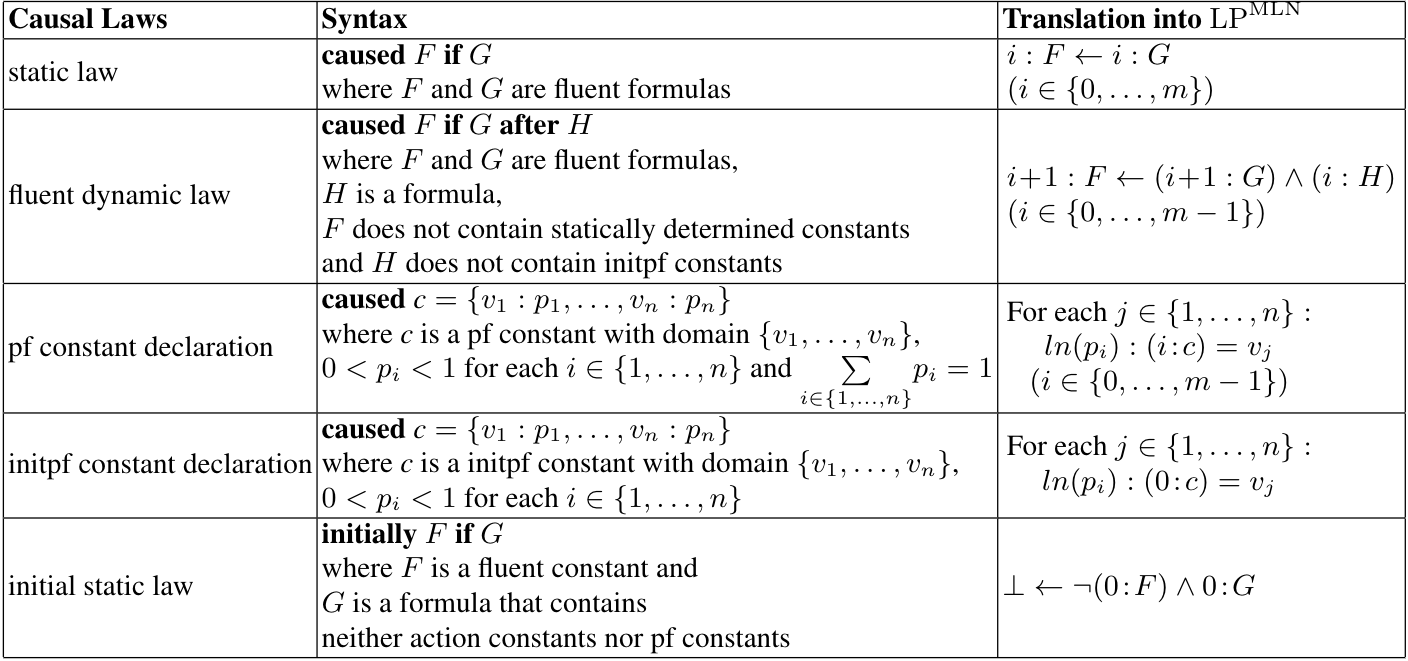}
\caption{Causal laws in $\pbcp$ and their translations into $\lpmln$}
\label{fig:pbcplus-causal-laws}
\end{figure}

$D_m$ contains $\lpmln$ rules obtained from static laws, fluent dynamic laws, and pf constant declarations as described in the third column of Figure \ref{fig:pbcplus-causal-laws}, as well as   $\{0\!:\!c=v\}^{\rm ch}$ for every regular fluent constant $c$ and every $v\in Dom(c)$, and
$\{i\!:\!c=\true\}^{\rm ch}, \{i\!:\!c=\false\}^{\rm ch}$ ($i\in\{0,\dots,m\!-\!1$)
for every action constant $c$ to state that the fluents at time $0$ and the actions at each time are exogenous.\footnote{$\{A\}^{\rm ch}$ denotes the choice rule $A\leftarrow \no\ \no\ A$.}
$D_{init}$ contains $\lpmln$ rules obtained from initial static laws and initpf constant declarations as described in the third column of Figure~\ref{fig:pbcplus-causal-laws}.
Both $D_m$ and $D_{init}$ also contain constraints asserting that each constant is mapped to exactly one value in its domain. In the presence of these constraints, we identify an interpretation of $\sigma_m$ with the value assignment function that maps each constant to its value.

\BOCC
\begin{table}[htb]
\centering
\begin{tabular}{|l|l|l|}
\hline
{\bf Causal Laws} & {\bf Syntax} & {\bf Translation into $\lpmln$} \\
\hline
    static law &
    \begin{tabular}[c]{@{}l@{}}
    $\caused\ F\ \iif\ G$\\
    where $F$ and $G$ are fluent formulas
    \end{tabular}  &
    $\begin{array}{l}
    i:F \leftarrow i:G\\
    (i\in\{0, \dots, m\})
    \end{array}$   \\
    \hline
    fluent dynamic law &
    \begin{tabular}[c]{@{}l@{}}
    $\caused\ F\ \iif\ G\ \after\ H$\\
    \begin{tabular}[c]{@{}l@{}}
    where $F$ and $G$ are fluent formulas,\\
    $H$ is a formula, \\
    $F$ does not contain statically determined constants\\
    and $H$ does not contain initpf constants\\
     \end{tabular}
    \end{tabular}  &
    $\begin{array}{l}
    i\!+\!1:F\leftarrow (i\!+\!1:G)\wedge(i:H)\\
    (i\in\{0, \dots, m-1\})
    \end{array}$\\
    \hline
    pf constant declaration &
    \begin{tabular}[c]{@{}l@{}}
    $\caused\ \i{c}=\{v_1:p_1, \dots, v_n:p_n\}$\\
    \begin{tabular}[c]{@{}l@{}}
    where $\i{c}$ is a pf constant with domain $\{v_1, \dots, v_n\}$,\\
    $0<p_i<1$ for each $i\in\{1, \dots, n\}$ and $\underset{i\in\{1, \dots, n\}}{\sum}p_i=1$
    \end{tabular}
    \end{tabular}&
    $\begin{array}{c}
    \text{For each $j\in\{1, \dots, n\}$}:\\
    ln(p_i):(i\!:\!c)=v_j  \\
    (i\in\{0, \dots, m-1\}) 
    \end{array}$\\
    \hline
    initpf constant declaration &
    \begin{tabular}[c]{@{}l@{}}
    $\caused\ \i{c}=\{v_1:p_1, \dots, v_n:p_n\}$\\
    \begin{tabular}[c]{@{}l@{}}
    where $\i{c}$ is a initpf constant with domain $\{v_1, \dots, v_n\}$,\\
    $0<p_i<1$ for each $i\in\{1, \dots, n\}$
    \end{tabular}
    \end{tabular}&
     $\begin{array}{c}
    \text{For each $j\in\{1, \dots, n\}$}:\\
    ln(p_i):(0\!:\!c)=v_j \\
    \end{array}$\\
    \hline
    initial static law &
    \begin{tabular}[c]{@{}l@{}}
     $\init\ F\ \iif\ G$\\
    \begin{tabular}[c]{@{}l@{}}
    where $F$ is a fluent constant and\\
    $G$ is a formula that contains \\
    neither action constants nor pf constants
    \end{tabular}
    \end{tabular}&
   $\bot\leftarrow \neg(0\!:\!F)\land 0\!:\!G$
    \\
    \hline
\end{tabular}
\caption{Causal Laws in $\pbcp$}
\label{tab:pbcplus-causal-laws}
\end{table}
\EOCC




%
For any $\lpmln$ program $\Pi$ of signature $\sigma_1$ and an interpretation $I$ of a subset $\sigma_2$ of~$\sigma_1$, we say $I$ is a {\em residual (probabilistic) stable model} of $\Pi$ if there exists an interpretation $J$ of $\sigma_1\setminus \sigma_2$ such that $I\cup J$ is a (probabilistic) stable model of $\Pi$.

For any interpretation $I$ of $\sigma$, by $i\!:\!I$ we denote the interpretation of  $i\!:\!\sigma$ such that $i\!:\!I\models (i\!:\!c)=v$ iff $I\models c=v$. 
For $x\in \{act, fl, pf\}$, we use $\sigma^{x}_m$ to denote the subset of $\sigma_m$, which is 
$
\{i\!:\!c=v\in\sigma_m \mid\ \text{$c=v\in\sigma^{x}$}\}.
$

A {\em state} of $D$ is an interpretation $I^{fl}$ of $\sigma^{fl}$ such that $0\!:\!I^{fl}$ is a residual (probabilistic) stable model of $D_0$. A {\em transition} of $D$ is a triple $\langle s, e, s^\prime\rangle$  where $s$ and $s^\prime$ are interpretations of $\sigma^{fl}$ and $e$ is an interpretation of $\sigma^{act}$ such that $0\!:\!s \cup 0\!:\!e \cup 1:s^\prime$ is a residual stable model of $D_1$. A {\em pf-transition} of $D$ is a pair $(\langle s, e, s^\prime\rangle, pf)$, where $pf$ is a value assignment to $\sigma^{pf}$ such that $0\!:\!s\cup 0\!:\!e \cup 1:s^\prime \cup 0\!:\!pf$ is a stable model of $D_1$.


\BOCC
\begin{definition}
A {\em probabilistic transition system} $T(D)$ represented by a probabilistic action description $D$ is a labeled directed graph such that the vertices are the states of $D$, and the edges are obtained from the transitions of $D$: for every transition $\langle s, e, s^\prime\rangle$  of $D$, an edge labeled $e: p$ goes from $s$ to $s^\prime$, where $p=Pr_{D_m}(1\!:\!s^\prime \mid 0\!:\!s, 0\!:\!e)$. The number $p$ is called the {\em transition probability} of $\langle s, e ,s^\prime\rangle$ .
\end{definition}
\EOCC

\BOCC
The soundness of the definition of a probabilistic transition system relies on the following proposition. 
\begin{prop}\label{prop:state-in-transition}
For any transition $\langle s, e, s^\prime \rangle$, $s$ and $s^\prime$ are states.
\end{prop}
\EOCC

The following simplifying assumptions are made on action descriptions in $\pbcp$.
%
\begin{enumerate}
\item {\bf No Concurrency}: For all transitions $\langle s, e, s'\rangle$, we have $e\models a\!=\!\true$ for at most one action constant $a$; 
%
\item {\bf Nondeterministic Transitions are Determined by pf constants}: For any state $s$, any value assignment $e$ of $\sigma^{act}$, and any value assignment $pf$ of $\sigma^{pf}$, there exists exactly one state $s^\prime$ such that $(\langle s, e, s^\prime\rangle, pf)$ is a pf-transition;
\item {\bf Nondeterminism on Initial States are Determined by Initpf constants}: For any value assignment $pf_{init}$ of $\sigma^{initpf}$, there {exists exactly one value assignment $fl$ of $\sigma^{fl}$ such that $0\!:\!pf_{init}\cup 0\!:\!fl$ is a stable model of $D_{init}\cup D_0$.}


\end{enumerate}

With the above three assumptions, the probability of a history, i.e., a sequence of states and actions, can be computed as the product of the probabilities of all the transitions that the history is composed of,  multiplied by the probability of the initial state (Corollary 1 in \cite{lee18aprobabilistic}).

\BOCC
\begin{cor}\label{thm:reduce2transition}
For every $m\geq 1$, $X_m$ is a residual (probabilistic) stable model of $Tr(D, m)$ iff $X^0, \dots, X^{m-1}$ are transitions of $D$ and $0\!:\!s_0$ is a residual stable model of $D_{init}$. Furthermore, 
\[
Pr_{Tr(D, m)}(X_m\mid 0\!:\!e_0, \dots, m-1\!:\!e_{m-1}) = p(X^0)\times\dots\times p(X^m)\times Pr_{Tr(D, m)}(0\!:\!s_0).
\]
\end{cor}
\EOCC

\BOCC
\begin{example}
Consider the simple transition system with probabilistic effects in Example \ref{eg:simple-pt}. Suppose $a$ is executed twice. What is the probability that $P$ remains true the whole time? With Corollary \ref{thm:reduce2transition} this can be computed as follows:
\[
\ba l
\small 
Pr(2:P=\true, 1\!:\!P=\true, 0\!:\!P=\true\mid 0\!:\!A=\true, 1\!:\!A=\true)\\
 = p(\langle P=\true, A=\true, P=\true\rangle)\cdot p(\langle P=\true, A=\true, P=\true\rangle)\cdot Pr_{Tr(D, m)}(0\!:\!P=\true)\\
 =\  0.2\times 0.2 \times 0.6 = 0.024.
\ea 
\]
\end{example}
\EOCC
\EOC


\subsection{Review: Markov Decision Process}

\begin{definition}
A {\em Markov Decision Process (MDP)} $M$ is a tuple
$
\langle S, A, T, R \rangle
$
where 
(i) $S$ is a set of states; (ii) $A$ is a set of actions;
(iii) $T:S\times A\times S\rightarrow [0, 1]$ defines transition probabilities;
(iv) $R:S\times A\times S\rightarrow \mathbb{R}$ is the reward function.
\end{definition}




Given a history $\vec{h} = \langle s_0, a_0, s_1, \dots,s_{m-1}, a_{m-1}, s_m\rangle$ such that each $s_i\in S$ $(i\in\{0, \dots, m\})$ and each $a_i\in A$  $(i\in\{0, \dots, m-1\})$, the {\em total reward} $R_{M}$ of the history under MDP $M$ is defined as
\\[-1em]
\[
R_{M}(\vec{h}) = \sum_{i=0}^{m-1} R(s_i, a_i, s_{i+1}).
\]
The probability  $P_{M}$ of $\vec{h}$ under MDP is defined as
\\[-1em]
\[
P_{M}(\vec{h}) = \prod_{i=0}^{m-1} T(s_i, a_i, s_{i+1}).
\]
\NB{empty action?} 
A {\em non-stationary policy} $\pi:S\times ST\mapsto A$ is a function from $S\times ST$ to $A$, where $ST=\{0, \dots, m-1\}$. 
The {\em expected total reward} of a non-stationary policy $\pi$ starting from the initial state $s_0$ under MDP $M$ is
%
{\small 
\begin{align}
\nonumber \i{ER}_{M}(\pi, s_0) =\ &\underset{\substack{\langle s_1, \dots, s_m\rangle : \\\text{$s_i\in S$ for $i\in \{1, \dots, m\}$}}}{E}[R_{M}(\langle s_0, \pi(s_0, 0), s_1, \dots,s_{m-1}, \pi(s_{m-1}, m-1), s_m\rangle)]\\
\nonumber  =\ &\underset{\substack{\langle s_1, \dots, s_m\rangle: \\\text{$s_i\in S$ for $i\in \{1, \dots, m\}$}}}{\sum}  \Big(\sum_{i=0}^{m-1} R(s_i, \pi(s_i, i), s_{i+1})\Big) \times \Big(\prod_{i=0}^{m-1} T(s_i, \pi(s_i, i), s_{i+1})\Big).
\end{align}
}
The {\em finite horizon policy optimization} problem starting from $s_0$ is to find a non-stationary policy $\pi$ that maximizes its expected total reward starting from $s_0$, i.e.,
$
{\rm argmax}_\pi\ \i{ER}_{M}(\pi, s_0).
$

Various algorithms for MDP policy optimization have been developed, such as value iteration~\cite{bellman57markovian} for exact solutions, and 
Q-learning~\cite{watkins89learning} for approximate solutions. 

\BOCC
\vspace{-5mm}
\subsubsection{Infinite Horizon Policy Optimization} 
Policy optimization with the infinite horizon is defined similar to the finite horizon, except that a discount rate for the reward is introduced, and the policy is stationary, i.e., no need to mention time steps (ST). Given an infinite sequence of states and actions $\langle s_0, a_0, s_1, a_1, \dots\rangle$, such that each $s_i\in S$ and each $a_i\in A$ $(i\in\{0, \dots\})$, and a discount factor $\gamma\in [0, 1]$, the {\em discounted total reward} of the sequence under MDP $M$ is defined as
\[
R_{M}(\langle s_0, a_0, s_1, a_1, \dots\rangle) = \sum_{i=0}^{\infty} \gamma^{i+1} R(s_i, a_i, s_{i+1}).
\]

\EOCC 

\section{$\dtlpmln$: A Decision Theoretic Extension of $\lpmln$}\label{sec:dt-lpmln}

We extend the syntax and the semantics of $\lpmln$ to $\dtlpmln$ by introducing atoms of the form 
\begin{equation}\label{eq:utility-atoms}
{\tt utility}(u, {\bf t})
\end{equation}
where $u$ is a real number, and ${\bf t}$ is an arbitrary list of terms. These atoms are called {\em utility atoms}, and they can only occur in the head of hard rules of the form
\begin{equation}\label{eq:utility-rule}
\alpha: {\tt utility}(u, {\bf t}) \leftarrow \i{Body}
\end{equation}
where $\i{Body}$ is a list of literals. 
We call these rules {\em utility rules}. Allowing an arbitrary list of terms as arguments of a utility atom provides control over how to distribute utility value over ground instances of a utility rule. For example, the user can choose to assign utility value only once for all ground instances, by not including any terms occurring in the body, as in
\[
{\tt utility}(10) \leftarrow {\tt package\_delivered}(package\_id),
\]
which specifies that the agent obtains a utility of $10$ if at least one package has been delivered.
The user can also choose to assign utility value for each ground instance by including those terms, as in
\[
{\tt utility}(10, package\_id) \leftarrow {\tt package\_delivered}(package\_id),
\]
which specifies that the agent obtains a utility of $10$ for each package delivered.

The weight and the probability of an interpretation are defined the same as in $\lpmln$.

\begin{definition}
The {\em utility} of an interpretation $I$ under $\Pi$ is defined as
\[
U_{\Pi}(I) = \underset{{\tt utility}(u, {\bf t})\in I}{\sum} u .
\]
The {\em expected utility} of a proposition $A$ is defined as
\begin{equation}\label{eq:expected-utility}
E[U_{\Pi}(A)] = \underset{I\models A}{\sum}\  
    U_{\Pi}(I) \times P_{\Pi}(I\mid A) . 
\end{equation}
i.e, the sum of the utilities of all interpretations satisfying $A$, weighted by their probability given $A$.
\end{definition}

\BOCC
For any proposition $F$, let $F^c$ denote $F$ represented as a constraint. To compute the expected utility with MC-ASP, \eqref{eq:expected-utility}, we simply collect a set $S$ of sample stable models of $\Pi\cup \{A^c\}$ using MC-ASP, and approximate \eqref{eq:expected-utility} with ${\underset{J\in S}{\sum}U_{\Pi}(J)}/{|S|}$, where $U_\Pi(J)T$ is obtained from summing up the first argument of {\tt utility} facts occurring in the samples.
\EOCC

A $\dtlpmln$ program is a pair
$(\Pi, Dec)$
where $\Pi$ is an $\lpmln$ program with a propositional signature $\sigma$ (including utility atoms) and $Dec$ is a subset of $\sigma$ consisting of {\em decision atoms}. We consider two reasoning tasks with $\dtlpmln$.
\begin{itemize}
\item {\bf Evaluating a Decision.}\  
Given a propositional formula $e$ (``evidence'') and a truth assignment $dec$ of decision atoms $Dec$, represented as a conjunction of literals over atoms in $Dec$, compute the expected utility of decision $dec$ in the presence of evidence $e$, i.e., compute
\[
E[U_{\Pi}(dec\wedge e)] = \underset{I\models dec\wedge e}{\sum}\  U_{\Pi}(I) \times P_{\Pi}(I\mid dec\wedge e) .
\]
\item {\bf Finding a Decision with Maximum Expected Utility (MEU)}.\   Given a propositional formula $e$ (``evidence''),  find the truth assignment $dec$ on $Dec$ such that the expected utility of $dec$ in the presence of $e$ is maximized, i.e., compute
\begin{equation}\label{eq:meu}
\underset{dec\ :\ \text{$dec$ is a truth assignment on $Dec$}}{\rm \rm argmax} E[U_{\Pi}(dec\wedge e)] . 
\end{equation}
\end{itemize}

{\cred
Algorithm \ref{alg:max-walk-sat-meu} is an approximate algorithm based on MaxWalkSAT  \cite{kautz98general} for solving the MEU problem. For any truth assignment $X$ on a set $\sigma$ of atoms and an atom $v\in \sigma$, we use $X\!\!\mid_v$ to denote the truth assignment on $\sigma$ obtained from $X$ by flipping the truth value of $v$.

Similar to MaxWalkSAT, Algorithm \ref{alg:max-walk-sat-meu} starts with random truth assignments on atoms in $Dec$. At each iteration, the algorithm flips the truth value of an atom in $Dec$. It either chooses to flip a random atom in $Dec$ (with probability $p$), or an atom whose value flipping would result in a largest improvement on the expected utility (with probability $1-p$). The chance of random flipping is a way of getting out of local optima. The algorithm also performs the search process multiple times ($m_t$), each time starting from a different random truth assignment on $Dec$.
\begin{algorithm}[h!]
{\footnotesize
\noindent {\bf Input: }
\begin{enumerate}
\item $(\Pi, A)$: A DT-$\lpmln$ program;
\item $E$: a proposition in constraint form as the evidence;
\item $m_t$: the maximum number of tries;
\item $m_f$: the maximum number of flips;
\item $p$: probability of taking a random step.
\end{enumerate}
\noindent {\bf Output: } $soln$: a truth assignment on $A$

\noindent {\bf Process:}
\begin{enumerate}
\item $soln\leftarrow null$;
\item $utility \leftarrow -\infty$;
\item For $i\leftarrow 1$ to $m_t$:
\begin{enumerate}
\item $X\leftarrow$ a random soft stable model of $\Pi\cup E$;
\item $soln'\leftarrow$ truth assignment of $X$ on $A$;
\item $utility'\leftarrow E[U_\Pi(soln')]$;
\item For $j\leftarrow 1$ to $m_f$:
\begin{enumerate}
\item $flippable \leftarrow \{v \mid \text{$soln\!\!\mid_v$ is a partial stable model of $\Pi\cup E$}\}$;
\item For each atom $v$ in $flippable$:\\
\hspace{2mm} $DeltaCost(v)\leftarrow E[U_\Pi(soln'\wedge E)]-E[U_\Pi(soln'\!\!\mid_v\wedge E)]$;
\item If $Uniform(0,1)< p$:\\
\hspace{1cm} $v_f\leftarrow$ a randomly chosen decision atom in $flippable$;\\
\hspace{1mm} else:\\
\hspace{1cm}$v_f\leftarrow \underset{v\in flippable}{argmin}\ \ DeltaCost(v)$;
\item If $DeltaCost(v_f)<0$:
\begin{enumerate}
\item $soln'\leftarrow soln'\!\!\mid_{v_f}$;
\item $utility'\leftarrow utility' - DeltaCost(v_f)$.
\end{enumerate}
\end{enumerate}
\item If $utility'>utility$:
\begin{enumerate}
\item $utility\leftarrow utility'$;
\item $soln\leftarrow soln'$;
\end{enumerate}
\end{enumerate}
\item Return $soln$
\end{enumerate}
\caption{MaxWalkSAT for Maximizing Expected Utility}
\label{alg:max-walk-sat-meu}
}
\end{algorithm}

\begin{example}\label{eg:marketing}
Consider a directed graph $G$ representing a social network: 
(i)  each vertex $v\in V(G)$ represents a person;
each edge $(v_1, v_2)$ represents that $v_1$ influences $v_2$; 
(ii) each edge $e=(v_1, v_2)$ is associated with a probability $p_e$ representing the probability of the influence; 
(iii) each vertex $v$ is associated with a cost $c_v$, representing the cost of marketing the product to $v$;
(iv) each person who buys the product yields a reward of $r$.

The goal is to choose a subset $U$ of vertices as marketing targets so as to maximize the expected total profit. 
The problem can be represented as a $\dtlpmln$ program $\Pi^{\rm market}$ as follows:
\begin{align}
\nonumber \alpha: &\ buy(v) \leftarrow marketTo(v).\\
\nonumber \alpha: &\ buy(v_2) \leftarrow buy(v_1), influence(v_1, v_2).\\
\nonumber \alpha: &\ utility(r, v)\leftarrow buy(v).
\end{align}
with the graph instance represented as follows: 
\begin{itemize}
\item for each edge $e=(v_1, v_2)$, we introduce a probabilistic fact
$
ln(\frac{p_e}{1-p_e}): influence(v_1, v_2);
$
\item for each vertex $v\in V(G)$, we introduce the following rule:\\ 
$
 \alpha: {\tt utility}(-c_v, v)\leftarrow marketTo(v).
$
\end{itemize}

For simplicity, we assume that marketing to a person guarantees that the person buys the product. This assumption can be removed easily by changing the first rule to a soft rule.

The MEU solution of $\dtlpmln$ program $(\Pi^{\rm market}, \{marketTo(v)\mid v\in V(G)\})$ corresponds to the subset $U$ of vertices that maximizes the expected profit.

\NB{change cost}

\noindent\medskip
\begin{minipage}{.6\textwidth}
For example, consider the directed graph on the right, where each edge $e$ is labeled by $p_e$ and each vertex $v$ is labeled by $c_v$. Suppose the reward for each person buying the product is $10$. There are $2^6 = 64$ different truth assignments on decision atoms, corresponding to $64$ choices of marketing targets. The best decision is to market to {\tt Alice} only, which yields the expected utility of $17.96$. 
\end{minipage}
\begin{minipage}{.1\textwidth}
\end{minipage}
\begin{minipage}{.3\textwidth}
\includegraphics[scale=0.3]{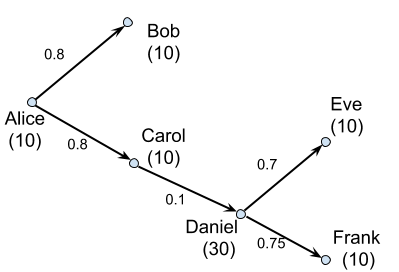}
\end{minipage}


\end{example}

We implemented Algorithm~\ref{alg:max-walk-sat-meu} and report in Figure~\ref{fig:meu-marketing-stats} its performance on the domain described in Example \ref{eg:marketing}. We generate networks with $10, 12, \dots, 20$ people and randomly generated edges, and use Algorithm~\ref{alg:max-walk-sat-meu} with MC-ASP as the underlying sampling methods for approximating expected utilities, to find the optimal set of marketing targets. We compare the performance of the algorithm with system {\sc DT-problog} \cite{broeck10dtproblog}. We use {\sc DT-problog} with exact mode and approximate mode respectively for the same task. The graphs contain directed cycles. For Algorithm~\ref{alg:max-walk-sat-meu}, $50$ stable models are sampled to approximate each expected utility, $p$ is set to be $0.5$, $m_t$ is set to be $10$, and $m_f$ is set to be $10$. The experiments were performed on a machine powered by 4 Intel(R) Core(TM) i5-2400 CPU with OS Ubuntu 14.04.5 LTS and 8G memory. As can be seen from the result, $\dtlpmln$ outperforms both approximate and exact solving mode of {\sc DT-problog} on this example. A possible reason is that {\sc DT-problog} has to convert
the input program, combined with the query, into weighted Boolean formulas, which
is expensive for non-tight programs\footnote{We say an $\lpmln$ program is {\em tight} if $\o{\Pi}$ is tight according to \cite{lee03a}, i.e., the positive dependency graph of $\o{\Pi}$ is acyclic.}.

\begin{figure}
 \begin{center}
    \includegraphics[width=0.98\textwidth]{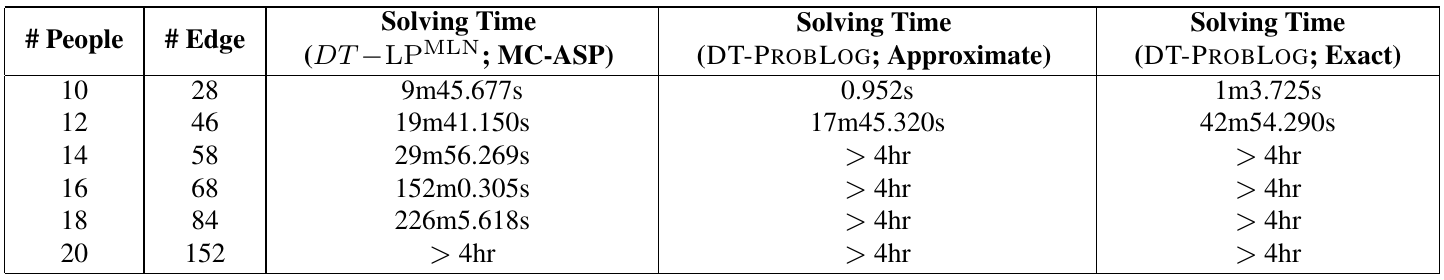}
    \end{center}
    \caption{Running Statistics of Algorithm~\ref{alg:max-walk-sat-meu} on Marketing Domain}
    \label{fig:meu-marketing-stats}
\end{figure}
}
\NB{Arrow too small?}

\section{$\pbcp$ with Utility}\label{sec:pbcplus-rewards}

We extend $\pbcp$ by introducing the following expression called {\em utility law} that assigns a reward to transitions:
\begin{equation}\label{eq:utility-law}
\reward\ v\ {\bf if}\ F\ {\bf after}\ G
\end{equation}
where $v$ is a real number representing the reward, $F$ is a formula that contains fluent constants only, and $G$ is a formula that contains fluent constants and action constants only (no pf, no initpf constants). We extend the signature of $Tr(D, m)$ with a set of atoms of the form~\eqref{eq:utility-atoms}. 
We turn a utility law of the form \eqref{eq:utility-law} into the $\lpmln$ rule
\begin{equation}\label{eq:utility-law-lpmln}
\alpha: {\tt utility}(v, i+1, id)\ \ar\ (i+1:F)\wedge(i:G)
\end{equation}
where $id$ is a unique number assigned to the $\lpmln$ rule and $i\in\{0,\dots, m\!-\!1\}$.

Given a nonnegative integer $m$ denoting the maximum time step, a $\pbcp$ action description $D$ with utility over multi-valued propositional signature $\sigma$ is defined as a high-level representation of the $\dtlpmln$ program $(Tr(D, m), \sigma^{act}_m)$.

%

We extend the definition of a probabilistic transition system as follows.
\begin{definition}
A {\em probabilistic transition system} $T(D)$ represented by a probabilistic action description $D$ is a labeled directed graph such that the vertices are the states of $D$, and the edges are obtained from the transitions of $D$: for every transition $\langle s, e, s^\prime\rangle$  of $D$, an edge labeled $e: p, u$ goes from $s$ to $s^\prime$, where $p=Pr_{D_1}(1\!:\!s^\prime \mid 0\!:\!s\wedge 0\!:\!e)$ and $u=E[U_{D_1}(0\!:\!s\wedge 0\!:\!e\wedge 1\!:\!s')]$. The number $p$ is called the {\em transition probability} of $\langle s, e ,s^\prime\rangle$, denoted by $p(s, e ,s^\prime)$, and the number $u$ is called the {\em transition reward} of $\langle s, e ,s^\prime\rangle$, denoted by $u(s, e ,s^\prime)$.
\end{definition}

\begin{example}\label{eg:simple}
{\sl 
The following action description $D^{simple}$ describes a simple probabilistic action domain with two Boolean fluents $P$, $Q$, and two actions $A$ and $B$. $A$ causes $P$ to be true with probability $0.8$, and if $P$ is true, then $B$ causes $Q$ to be true with probability $0.7$. The agent receives the reward $10$ if $P$ and $Q$ become true for the first time (after that, it remains in the state $\{P,Q\}$ as it is an absorbing state).

\begin{minipage}[c]{0.4\textwidth}
\[
\ba l
A\ \causes\ P\ \iif\ \i{Pf}_1\\
B\ \causes\ Q\ \iif\ P\wedge \i{Pf}_2\\
\inertial\ P, Q \\
\constraint\ \neg(Q\wedge\sneg P)\\
\caused\ \i{Pf}_1=\{\true: 0.8, \false: 0.2\}\\
\caused\ \i{Pf}_2=\{\true: 0.7, \false: 0.3\}
\ea 
\]
\end{minipage}
\begin{minipage}[c]{0.1\textwidth}
~~~
\end{minipage}
\begin{minipage}[c]{0.45\textwidth}
\[
\ba l
\reward\ 10\ \iif\ P\wedge Q\ \after\ \neg(P\wedge Q)\\
\caused\ \i{InitP}=\{\true: 0.6, \false: 0.4\}\\
\init\ P=x\ \iif\ \i{InitP}=x\\
\caused\ \i{InitQ}=\{\true: 0.5, \false: 0.5\}\\
\init\ Q\ \iif\ \i{InitQ}\wedge P\\
\init\ \sneg Q\ \iif\ \sneg P.
\ea 
\]
\end{minipage}

\vspace{0.3cm}

The transition system $T(D^{simple})$ is as follows:

 \begin{center}
    \includegraphics[height=2.3cm]{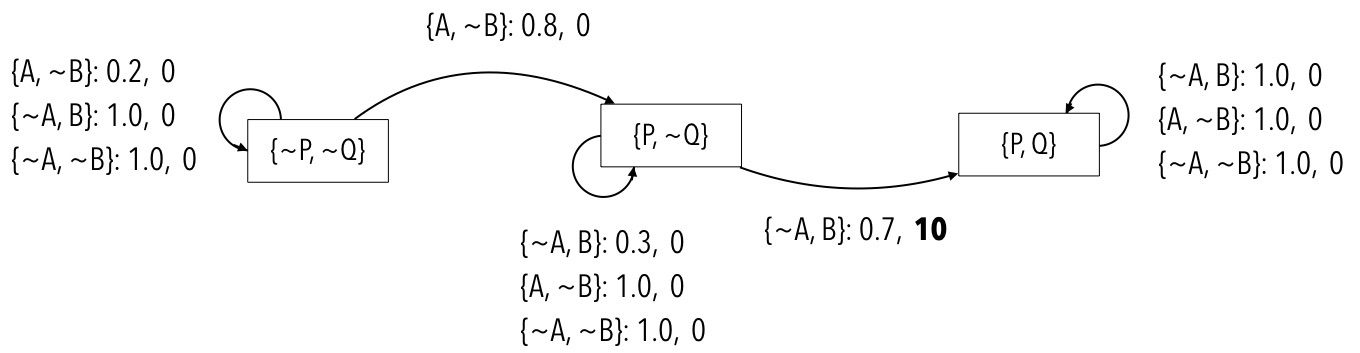}
    \end{center}
}
\end{example}
\vspace{-0.5cm}

\NBB{Remove figure (b)}

\BOCC
For every subset $X_m$ of
\[
(\sigma_m\cup \{{\tt utility}(v, {\bf x})\mid \text{${\tt utility}(v, {\bf x})$ occurs in $Tr(D, m)$}\})\setminus\sigma^{pf}_m, 
\]
let $X^i(i < m)$ be the triple consisting of
\begin{itemize}
\item the set consisting of atoms $A$ such that $i:A$ belongs to $X_m$ and $A\in \sigma^{fl}$;
\item the set consisting of atoms $A$ such that $i:A$ belongs to $X_m$ and $A\in \sigma^{act}$;
\item the set consisting of atoms $A$ such that $i\!+\!1\!:\!A$ belongs to $X_m$ and $A\in \sigma^{fl}$.
\end{itemize}
\EOCC

\BOCC
\begin{example}\label{eg:yale-shooting-utility}
Consider the following probabilistic variation of the well-known Yale Shooting Problem: There are two (deaf) turkeys: a fat turkey and a slim turkey. Shooting at a turkey may fail to kill the turkey. Normally, shooting at the slim turkey has $0.6$ chance to kill it, and shooting at the fat turkey has $0.9$ chance. However, when a turkey is dead, there is $50\%$ chance that the other turkey run away. Killing the fat turkey yields a reward of $8$, the slim turkey yields a reward of $10$.

The example can be modeled in $\pbcp$ as follows. We call the following action description $D^{turkey}$. First, we declare  the constants:

\vspace{0.2cm}
\hrule
\begin{tabbing}
Notation:  $t$ range over $\{\i{SlimTurkey}, \i{FatTurkey}\}$. \\
Regular fluent constants:          \hskip 4cm  \=Domains:\\
$\;\;\;$ $\i{Alive}(t)$,  $\i{Escaped}(t)$, $\;$ $\i{Loaded}$                 \>$\;\;\;$ Boolean\\ 
Action constants:                          \>Domains:\\
$\;\;\;$ $\i{Load}$ , $\;$ $\i{Fire}(t)$  \>$\;\;\;$ Boolean\\ 
Pf constants:                          \>Domains:\\
$\;\;\;$ $\i{\i{Pf\_Killed}}(t)$, $\;$ $\i{\i{Pf\_Escape}}(t)$                    \>$\;\;\;$ Boolean \\
InitPf constants: \\
$\;\;\;$ $\i{Init\_Alive}(t)$,  $\;$ $\i{Init\_Loaded}$                 \>$\;\;\;$ Boolean
\end{tabbing}
\hrule
\vspace{0.2cm}

Next, we state the causal laws.
The effect of loading the gun is described by 
\begin{tabbing}
\ \ \ $\caused\ \i{Loaded}\ \iif\ \top\ \after\ \i{Load}$.
\end{tabbing}
To describe the effect of shooting at a turkey, we declare the following probability distributions on the result of shooting at each turkey, respectively: 
\begin{tabbing}
\ \ \ $\caused\ \i{Pf\_Killed}(\i{SlimTurkey})=\{\true: 0.6, \false: 0.4\}$,  \\ 
\ \ \ $\caused\ \i{Pf\_Killed}(\i{FatTurkey})=\{\true: 0.9, \false: 0.1\}$.
\end{tabbing}
The effect of shooting at a turkey is described as
\begin{tabbing}
\ \ \ $\caused\ \sneg \i{Alive}(t)\ \iif\ \top\ \after\ \i{Loaded}\wedge \i{Fire}(t)\wedge \sneg \i{Escaped}(t)\wedge \i{Pf\_Killed}(t)$,\\
\ \ \ $\caused\ \sneg \i{Loaded}\ \iif\ \top\ \after\ \i{Fire}(t)$.
\end{tabbing}
A dead turkey causes the other turkey to escape with $0.5$ chance:
\begin{tabbing}
\ \ \ $\caused\ \i{Pf\_Escape}=\{\true: 0.5, \false: 0.5\}$,  \\
\ \ \ $\caused\ \i{Escaped}(t_1)\ \iif\ \top\ \after\ \sneg \i{Alive}(t_2)\wedge \i{Alive}(t_1) \wedge t_1\neq t_2 \wedge \i{Pf\_Escape} \wedge \sim\i{Escape}d(t_1)$.
\end{tabbing}
($\default\ F$ stands for $\caused\ \{F\}^{\rm ch}$ \cite{babb15action1}).
A dead turkey cannot be escaped:
\begin{tabbing}
\ \ \ $\caused\ \bot\ \iif\ \sim\i{Alive}(t)\wedge \i{Escaped}(t)$.
\end{tabbing}
In general, state constraints like the above static law could cause the action description to violate the assumptions we made, since pruning out a state is indirectly pruning out the actions that lead to the state, thus making certain actions not executable. However, the above static law does not cause violation of the assumptions, since no actions can lead to violation of the static law.
The fluents $\i{Escaped}$, $\i{Alive}$ and $\i{Loaded}$ observe the commonsense law of inertia:
\begin{tabbing}
\ \ \ $\caused\ \{\i{Escaped}(t)\}^{\rm ch}\ \iif\ \top\ \after\ \i{Escaped}(t)$, \\
\ \ \ $\caused\ \{\sneg \i{Escaped}(t)\}^{\rm ch}\ \iif\ \top\ \after\ \sneg \i{Escaped}(t)$, \\
\ \ \ $\caused\ \{\i{Alive}(t)\}^{\rm ch}\ \iif\ \top\ \after\ \i{Alive}(t)$, \\
\ \ \ $\caused\ \{\sneg \i{Alive}(t)\}^{\rm ch}\ \iif\ \top\ \after\ \sneg \i{Alive}(t)$, \\
\ \ \ $\caused\ \{\i{Loaded}\}^{\rm ch}\ \iif\ \top\ \after\ \i{Loaded}$,\ \ \\
\ \ \ $\caused\ \{\sneg \i{Loaded}\}^{\rm ch}\ \iif\ \top\ \after\ \sneg \i{Loaded}$.
\end{tabbing}
We ensure no concurrent actions are allowed by stating
\begin{tabbing}
\ \ \ $\caused\ \bot\ \after\ a_1\wedge a_2$
\end{tabbing}
for every pair of action constants $a_1, a_2$ such that $a_1\neq a_2$.

Killing the fat turkey and slim turkey yields a reward of $10$ and $8$, resp.
\begin{tabbing}
\ \ \ ${\bf reward}\ 8\ {\bf if}\ \sim\i{Alive}(FatTurkey)\ {\bf after}\ \i{Alive}(FatTurkey)$\\
\ \ \ ${\bf reward}\ 10\ {\bf if}\ \sim\i{Alive}(SlimTurkey)\ {\bf after}\ \i{Alive}(SlimTurkey)$
\end{tabbing}

Finally, we state that the initial values of all fluents are uniformly random ($b$ is a schematic variable that ranges over $\{\true, \false\}$):
\begin{tabbing}
\ \ \ $\caused\ \i{Init\_Alive}(t)=\{\true: 0.5, \false: 0.5\}$,  \\
\ \ \ $\caused\ \i{Init\_Loaded}=\{\true: 0.5, \false: 0.5\}$, \\
\ \ \ $\caused\ \i{Init\_Escaped(t)}=\{\true: 0.5, \false: 0.5\}$, \\
\ \ \ $\init\ \i{Escaped}(t)=b\ \iif\ \i{Init\_Escaped}(t)=b$,  \\
\ \ \ $\init\ \i{Alive}(t)=b\ \iif\ \i{Init\_Alive}(t)=b$,  \\
\ \ \ $\init\ \i{Loaded}=b\ \iif\ \i{Init\_Loaded}=b$.
\end{tabbing}

It can be seen that trying to kill the fat turkey yields better expected reward ($8\times 0.9 = 0.72$), compared to trying to kill the slim turkey ($10\times 0.6 = 6$). Informally, the following policy would be yield the highest expected total reward, given that initially both turkeys are alive, and the gun is not loaded: the agent should load the gun if it's not loaded; if the gun is loaded and the fat turkey is alive, then the agent should shoot at the fat turkey; otherwise, the agent should shoot at the slim turkey.
\end{example}
\EOCC

\BOCC
Various reasoning tasks on a $\pbcp$ action description involving plans can be viewed as decision evaluation or MEU problem on its underlying $\dtlpmln$ program. A plan can take many different forms. Here, for simplicity, we define a {\em plan} as a sequence of actions $a_1, a_2, \dots, a_{m-1}$.

\begin{itemize}
\item {\bf Plan Evaluation} Given some observation $obs$, represented as a propositional formula, and a plan $a_1, a_2, \dots, a_{m-1}$, we compute its expected utility, i.e.,
\[
E[U_{Tr(\Pi, m)}(a_1, \dots, a_{m-1}\wedge obs)].
\]
\item {\bf Planning} Given some observation $obs$, we compute the plan with highest expected utility, i.e.,
\[
\underset{plan: \text{$plan$ is a truth assignment on $\sigma^{act}_m$}}{\rm argmax}E[U_{Tr(\Pi, m)}(a_1, \dots, a_{m-1}\wedge obs)].
\]
\end{itemize}
\EOCC

\subsection{Policy Optimization}
Given a $\pbcp$ action description $D$, we use ${\bf S}$ to denote the set of states, i.e, the set of interpretations $I^{fl}$ of $\sigma^{fl}$ such that $0\!:\!I^{fl}$ is a residual (probabilistic) stable model of $D_0$. We use ${\bf A}$ to denote the set of interpretations $I^{act}$ of $\sigma^{act}$ such that $0\!:\!I^{act}$ is a residual (probabilistic) stable model of $D_1$. Since we assume at most one action is executed each time step, each element in ${\bf A}$ makes either only one action 
or none to be true. 

A {\em (non-stationary) policy} $\pi$ (in $\pbcp$) is a function
$
\pi: {\bf S}\times \{0, \dots, m-1\}\mapsto {\bf A}
$
that maps a state and a time step to an action (including doing nothing).
By 
$\langle s_{0}, s_{1}\dots,  s_{m} \rangle^t$  (each $s_{i} \in {\bf S}$) we denote 
the formula 
$0\!:\!s_{0}\wedge 1\!:\!s_{1}\wedge \dots\wedge  m\!:\!s_{m}$, 
and by \\
$\langle s_{0}, a_{0}, s_{1}\dots,  s_{m-1}, a_{m-1}, s_{m} \rangle^t$
(each $s_{i} \in {\bf S}$ and each $a_i\in {\bf A}$)
the formula $$0\!:\!s_{0}\wedge 0\!:\!a_{0}\wedge 1\!:\!s_{1}\wedge\dots \wedge m-1\!:\!a_{m-1}\wedge m\!:\!s_{m}.$$ 
\BOCC
by $i\!:\!s$, we denote the conjunction of all atoms that are true in 
we write $i\!:\!s$ as an {\cred abbreviation of the formula $\underset{fl\in \sigma^{fl}}{\bigwedge}i\!:\!fl = s(fl)$; }for any $i\in \{0,  \dots, m-1\}$ and $a\in {\bf A}$, we write $i\!:\!a$ as an abbreviation of the formula {\cred $\underset{act\in \sigma^{act}}{\bigwedge}i\!:\!act = a(act)$.}
\EOCC

We say a state $s$ is {\em consistent} with $D_{init}$ if there exists at least one probabilistic stable model $I$ of $D_{init}$ such that $I\models 0\!:\!s$. 
\begin{definition}
The {\em Policy Optimization} problem from the initial state $s_0$ is to find a policy $\pi$ that maximizes the expected utility starting from $s_0$, i.e., $\pi$ with 
\[
\underset{\text{$\pi$ is a policy}}
  {\rm argmax}\ E[U_{Tr(\Pi, m)}(C_{\pi, m}\wedge \langle s_0\rangle^t)]
\]
where $C_{\pi, m}$ is {the following formula representing policy $\pi$:
\[
\underset{s\in {\bf S},\ \pi(s, i)=a,\ i\in\{0, \dots, m-1\}}{\bigwedge} i\!:\!s\rar i\!:\!a\ .
\]
}
\end{definition}

We define the {\em total reward} of a history $\vec{h} = \langle s_0, a_0, s_1, \dots, s_m \rangle$ {\em under the action description } $D$
as 
\[
   R_D(\vec{h}) = E[U_{Tr(D, m)}(\vec{h}^t)].
\]

Although it is defined as an expectation, the following proposition tells us that any stable model $X$ of $Tr(D, m)$ such that $X\models \vec{h}$ has the same utility, and consequently, the expected utility of $\vec{h}$ is the same as the utility of any single stable model that satisfies the history. 

\begin{prop}\label{prop:history-determines-utility}
For any two stable models $X_1, X_2$ of $Tr(D, m)$ that satisfy a history \\ $\vec{h} = \langle s_0, a_0, s_1, a_1, \dots,  a_{m-1}, s_m\rangle$, we have
\begin{align}
\nonumber &U_{Tr(D, m)}(X_1)\ = \  U_{Tr(D, m)}(X_2)  
\ = \
E[U_{Tr(D, m)}(\vec{h}^t)].
\end{align}
\end{prop}

\BOC
We thus define the {\em total reward} of a history $\langle s_0, a_0, s_1, \dots, s_m \rangle$ {\em under action description } $D$, denoted by $R_D(\langle s_0, a_0, s_1, \dots, s_m \rangle)$, as the utility of any stable model $X$ of $Tr(D, m)$ that satisfies $\langle s_0, a_0, s_1, \dots, s_m \rangle$, i.e., 
\[
   R_D(\langle s_0, a_0, s_1, \dots, s_m \rangle) = U_{Tr(D,m)}(X).
\]
\EOC

\BOC
The {\em expected utility of a policy $\pi$ for a sequence of states $\langle s_0, s_1, \dots, s_m\rangle$} is defined as
\begin{align*}
 &U^D_\pi(\langle s_0, s_1, \dots, s_m\rangle) \\
 & \ \ \ \ = R_D(\langle s_0, \pi(s_0), s_1, \dots,\pi(s_{m-1}), s_m\rangle)
   \times P_{Tr(D, m)}(\langle s_0, s_1, \dots, s_m\rangle \mid \langle s_0\rangle\wedge C_{\pi, m}).
\end{align*}
\EOC

It can be seen that the expected utility of $\pi$ can be computed from the expected utility from all possible state sequences. For any state sequence $\vec{s} =\langle s_0, \dots, s_m\rangle$ and any policy $\pi$, we use $\vec{h}_\pi(\vec{s})$ to denote the history obtained by applying $\pi$ on $\vec{s}$, i.e., $$\vec{h}_\pi(\vec{s})=\langle s_0, \pi(s_0,  0), s_1, \dots,\pi(s_{m-1}, m-1), s_m \rangle.$$

\begin{prop}\label{prop:expected-utility-policy}
Given any initial state $s_0$ that is consistent with $D_{init}$, for any policy $\pi$, we have
\[
\ba l
\small 
E[U_{Tr(D, m)}(C_{\pi, m}\wedge \langle s_0\rangle^t)] =  \\
   \underset{\vec{s} = \langle s_1, \dots, s_m\rangle: s_i\in {\bf S}}{\sum}
      R_D(\vec{h}_\pi(\vec{s})^t)
      \times P_{Tr(D, m)}(\vec{s}^t\wedge C_{\pi, m}).
\ea 
\]
\end{prop}
\BOCC
\begin{proof}
We have
{\footnotesize
\begin{align}
\nonumber &E[U_{Tr(D, m)}(C_\pi\wedge 0\!:\!s_0)]\\
\nonumber =\ & \underset{I\models 0:s_0\wedge C_\pi}{\sum} P_{Tr(D, m)}(I\mid 0\!:\!s_0\wedge C_\pi)\cdot U_{Tr(D, m)}(I)\\
\nonumber =\ & \underset{\substack{I\models 0:s_0\wedge C_\pi\\\text{$I$ is a stable model of $Tr(D, m)$}}}{\sum} P_{Tr(D, m)}(I\mid 0\!:\!s_0\wedge C_\pi)\cdot U_{Tr(D, m)}(I)\\
\nonumber =\ & \text{(We partition stable models $I$ according to their truth assignment on $\sigma^{fl}_m$)}\\
\nonumber \ & \underset{\langle s_1, \dots, s_m\rangle: s_i\in {\bf S}\ \ }{\sum} \underset{\substack{I\models \langle 0:s_0, 1:s_1, \dots, m:s_m\rangle\wedge C_\pi\\ \text{$I$ is a stable model of $Tr(D, m)$}}}{\sum} P_{Tr(D, m)}(I\mid 0\!:\!s_0\wedge C_\pi)\cdot U_{Tr(D, m)}(I) \\
\nonumber =\ & \text{(Since $I\models \langle 0:s_0, 1:s_1, \dots, m:s_m\rangle\wedge C_\pi$ implies $I\models \langle 0\!:\!s_0, 0\!:\!\pi(s_0, 0), 1\!:\!s_1, \dots, m\!:\!s_m\rangle$, by Proposition \ref{prop:history-determines-utility} we have)}\\
\nonumber \ & \underset{\langle s_1, \dots, s_m\rangle: s_i\in {\bf S}\ \ }{\sum} \underset{\substack{I\models \langle 0:s_0, 1:s_1, \dots, m:s_m\rangle\wedge C_\pi\\ \text{$I$ is a stable model of $Tr(D, m)$}}}{\sum} P_{Tr(D, m)}(I\mid 0\!:\!s_0\wedge C_\pi)\cdot E[U_{Tr(D, m)}(\langle 0\!:\!s_0, 0\!:\!\pi(s_0, 0), 1\!:\!s_1, \dots, m\!:\!s_m\rangle)] \\
\nonumber =\ & \underset{\langle s_1, \dots, s_m\rangle: s_i\in {\bf S}}{\sum}Pr_{Tr(D, m)}(\langle 0\!:\!s_0, 1\!:\!s_1, \dots, m\!:\!s_m\rangle \mid 0\!:\!s_0\wedge C_\pi)\cdot E[U_{Tr(D, m)}(\langle 0\!:\!s_0, 0\!:\!\pi(s_0, 0), 1\!:\!s_1, \dots, m\!:\!s_m\rangle)]\\
\nonumber =\ & \underset{\langle s_1, \dots, s_m\rangle: s_i\in {\bf S}}{\sum}Pr_{Tr(D, m)}(\langle 0\!:\!s_0, 1\!:\!s_1, \dots, m\!:\!s_m\rangle \mid 0\!:\!s_0\wedge C_\pi)\cdot E[U_{Tr(D, m)}(\langle 0\!:s_0, 1\!:\!s_1, \dots, m\!:\!s_m\rangle\wedge C_\pi)]\\
\nonumber =\ & \underset{\langle s_1, \dots, s_m\rangle: s_i\in {\bf S}}{\sum}U^D_\pi(\langle 0\!:\!s_0, 1\!:\!s_1, \dots, m\!:\!s_m\rangle\mid 0\!:\!s_0).
\end{align}
}
\qed
\end{proof}
\EOCC

\begin{definition}\label{def:pbcp-mdp}
For a $\pbcp$ action description $D$, let $M(D)$ be the MDP
$
\langle S, A, T, R\rangle
$
where
(i) the state set $S$ is ${\bf S}$;
(ii) the action set $A$ is ${\bf A}$;
(iii) transition probability function $T$ is defined as $T(s, a, s')= P_{D_1}(1:s'\mid 0:s\wedge 0:a)$;
(iv) reward function $R$ is defined as $R(s, a, s') = E[U_{D_1}(0:s\wedge 0:a\wedge 1:s')]$.
\end{definition}

We show that the policy optimization problem for a $\pbcp$ action description $D$ can be reduced to the policy optimization problem for $M(D)$ for the finite horizon.
The following theorem 
tells us that for any history following a non-stationary policy, its total reward and probability under $D$ defined under the $\pbcp$ semantics coincide with those under the corresponding MDP $M(D)$.

\begin{thm}\label{thm:sequence-utility-equivalence}
Given an initial state $s_0\in {\bf S}$ that is consistent with $D_{init}$, for any non-stationary policy $\pi$ and any finite state sequence $\vec{s} = \langle s_0, s_1, \dots, s_{m-1}, s_m\rangle $ such that each $s_i$ in ${\bf S}\ (i\in \{0, \dots, m\})$, we have
\begin{itemize}
    \item $R_D(\vec{h}_\pi(\vec{s}))
           = R_{M(D)}(\vec{h}_\pi(\vec{s}))$
    \item $P_{Tr(D, m)}(\vec{s}^t \mid \langle s_0\rangle^t\wedge C_{\pi, m})
           = P_{M(D)}(\vec{h}_\pi(\vec{s}))$.
\end{itemize}
\end{thm}
\BOCC
\begin{align}
\nonumber & U^D_\pi(\langle s_0, s_1, \dots, s_m\rangle)  \\
\nonumber & \ \ \ \ \ \ = R_{M(D)}(\langle s_0, \pi(s_0, )\dots, \pi(s_{m-1}), s_m\rangle)\times P_{M(D)}(\langle s_0, \pi(s_0, )\dots, \pi(s_{m-1}), s_m\rangle).
\end{align}
\EOCC 

\BOCC
\begin{proof}
We have
\begin{align}
\nonumber &U^D_\pi(\langle 0\!:\!s_0, 1\!:\!s_1, \dots, m\!:\!s_m\rangle\mid 0\!:\!s_0)\\
\nonumber =\ & Pr_{Tr(D, m)}(\langle 0\!:\!s_0, 1\!:\!s_1, \dots, m\!:\!s_m\rangle \mid 0\!:\!s_0\wedge C_\pi)\cdot E[U_{Tr(D, m)}(\langle 0\!:\!s_0, 1\!:\!s_1, \dots, m\!:\!s_m\rangle\wedge C_\pi)]\\
\nonumber =\ &\text{(By Proposition \ref{prop:policy-to-action})}\\
\nonumber \ &Pr_{Tr(D, m)}(\langle 0\!:\!s_0, 1\!:\!s_1, \dots, m\!:\!s_m\rangle \mid 0\!:\!s_0\wedge 0\!:\!\pi(s_0, 0), \dots, m-1\!:\!\pi(s_{m-1}, m-1))\times\\
\nonumber &\ \ \ \ \ \ \ \ \ \ E[U_{Tr(D, m)\cup {C_\pi}}(\langle 0\!:\!s_0, 1\!:\!s_1, \dots, m\!:\!s_m\rangle\wedge C_\pi)]\\
\nonumber =\ &\text{(By Corollary 1 in \cite{lee18aprobabilistic})}\\
\nonumber \ &\underset{i\in \{0, \dots, m-1\}}{\prod}p(\langle s_i, \pi(s_i, i), s_{i+1}\rangle)\cdot E[U_{Tr(D, m)\cup {C_\pi}}(\langle 0\!:\!s_0, 1\!:\!s_1, \dots, m\!:\!s_m\rangle\wedge C_\pi)]\\
\nonumber =\ & \text{(By Proposition \ref{prop:step-wise-utility})}\\
\nonumber \ & \underset{i\in \{0, \dots, m-1\}}{\prod}p(\langle s_i, \pi(s_i, i), s_{i+1}\rangle)\cdot \underset{i\in \{0, \dots, m-1\}}{\sum} u(s_i, \pi(s_i, i), s_{i+1})\\
\nonumber =\ & \underset{i\in \{0, \dots, m-1\}}{\prod}P(s_i, \pi(s_i, i), s_{i+1})\cdot \underset{i\in \{0, \dots, m-1\}}{\sum} R(s_i, \pi(s_i, i), s_{i+1})\\
\nonumber =\ & R_{M(D)}(\langle s_0, \pi(s_0, )\dots, \pi(s_{m-1}), s_m\rangle)\cdot P_{M(D)}(\langle s_0, \pi(s_0, )\dots, \pi(s_{m-1}), s_m\rangle).
\end{align}
\qed
\end{proof}
\EOCC

It follows that the policy optimization problem for $\pbcp$ action descriptions coincides with the policy optimization problem for MDP with finite horizon.


\begin{thm}\label{thm:pBC-plus-to-mdp}
For any nonnegative integer $m$ and an initial state $s_0\in {\bf S}$ that is consistent with $D_{init}$, we have
\[
\underset{\text{$\pi$ is a non-stationary policy}}{\rm argmax}\ E[U_{Tr(D, m)}(C_{\pi,  m}\wedge \langle s_0\rangle^t)]
=\underset{\text{$\pi$ is a non-stationary policy}}{\rm argmax}\ \i{ER}_{M(D)}(\pi, s_0).
\]
\end{thm}

\BOCCC
\noindent
{\bf Example~\ref{eg:simple} continued}\ \ \ 
{\sl 
Figure \ref{fig:simple-transition-system} illustrates the MDP $M(D^{simple})$ for the simple action description $D^{simple}$ in Example \ref{eg:simple}.
\begin{figure}
    \centering
    \includegraphics[width=0.8\textwidth]{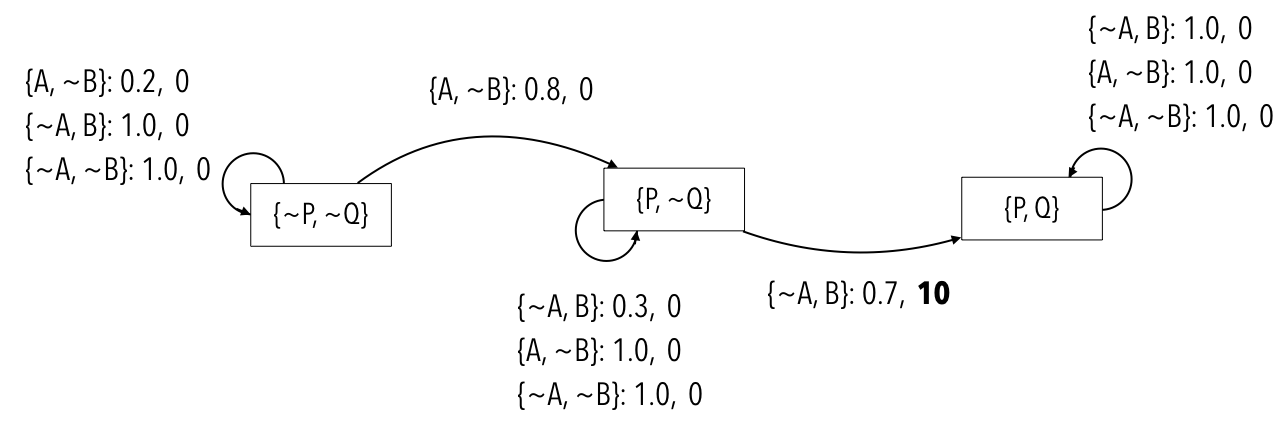}
    \caption{MDP $M(D^{simple})$}
    \label{fig:simple-transition-system}
\end{figure}
\BOCC
is the MDP $\langle S, A, P, R\rangle$ where
\begin{itemize}
\item $S=\{\{\sneg P, \sneg Q\}, \{P, \sneg Q\}, \{P, Q\}\}$;
\item $A = \{\{\sneg A, \sneg B\}, \{\sneg A, B\}\}, \{A, \sneg B\}\}$;
\item $P$ is defined as follows:
\begin{align}
\nonumber P(s, \{\sneg A, \sneg B\}, s)& =1 \text{ for all $s\in S$},\\
\nonumber P(\{\sneg P, \sneg Q\}, \{A, \sneg B\}, \{\sneg P, \sneg Q\})& =0.2,\\
\nonumber P(\{\sneg P, \sneg Q\}, \{A, \sneg B\}, \{P, \sneg Q\})& =0.8,\\
\nonumber P(\{\sneg P, \sneg Q\}, \{\sneg A, B\}, \{\sneg P, \sneg Q\})& =1,\\
\nonumber P(\{P, \sneg Q\}, \{A, \sneg B\}, \{P, \sneg Q\})& =1,\\
\nonumber P(\{P, \sneg Q\}, \{\sneg A, B\}, \{P, Q\})& =0.7,\\
\nonumber P(\{P, \sneg Q\}, \{\sneg A, B\}, \{P, \sneg Q\})& =0.3.
\end{align}
For $s, a, s'$ combination not mentioned above, $P(s, a, s') = 0$;
\item $R$ is defined as
\[
R(s, a, s') = \begin{cases}
10 & \text{if $s' = \{P, Q\}$},\\
0 & \text{otherwise}.
\end{cases}
\]
\end{itemize}
\EOCC
\BOCC
Consider one particular state sequence $\langle \{\sim P, \sim Q\}, \{P, \sim Q\}, \{P, Q\}\rangle$ and the policy $\pi$ defined as $$\pi(s) = \begin{cases}
A & \text{if $s=\{\sim P, \sim Q\}$}\\
B & \text{if $s=\{P, \sim Q\}$}\\
\emptyset & \text{otherwise.}
\end{cases}$$
We have
\begin{align}
\nonumber &U^{D^{simple}}_\pi(\langle \{\sim P, \sim Q\}, \{P, \sim Q\}, \{P, Q\}\rangle)\\
\nonumber  =\ & P_{Tr(D^{simple}, 2)}(\langle \{\sim P, \sim Q\}, \{P, \sim Q\}, \{P, Q\}\rangle \mid  0:\sim P\wedge0:\sim Q\wedge C_{\pi, 2})\cdot \\
\nonumber &\ \ \ \ \ \ E[U_{Tr(D, m)}(\langle \{\sim P, \sim Q\}, \{P, \sim Q\}, \{P, Q\}\rangle\wedge C_{\pi, 2})]\\
\nonumber  =\ & \frac{0.5 \times 0.5\times 0.8\times 0.7}{0.5 \times 0.5}\times 10\\
\nonumber =\ & 5.6
\end{align}
On the other hand,
\begin{align}
\nonumber & R_{M(D^{simple})}(\langle \{\sim P, \sim Q\},\pi(\{\sim P, \sim Q\}), \{P, \sim Q\}, \pi(\{P, \sim Q\}), \{P, Q\}\rangle)\cdot \\
\nonumber &\ \ \ \ \ \ \ \ P_{M(D^{simple})}(\langle \{\sim P, \sim Q\},\pi(\{\sim P, \sim Q\}), \{P, \sim Q\}, \pi(\{P, \sim Q\}), \{P, Q\}\rangle)\\
=\ & R_{M(D^{simple})}(\langle \{\sim P, \sim Q\},\{A, \sim B\}, \{P, \sim Q\}, \{\sim A, B\}, \{P, Q\}\rangle)\cdot \\
\nonumber &\ \ \ \ \ \ \ \ P_{M(D^{simple})}(\langle \{\sim P, \sim Q\},\{A, \sim B\}, \{P, \sim Q\}, \{\sim A, B\}, \{P, Q\}\rangle)\\
\nonumber =\ & 10 \times (0.8 \times 0.7)\\
\nonumber =\ & 5.6.
\end{align}
\EOCC
}
\EOCCC



Theorem~\ref{thm:pBC-plus-to-mdp} justifies using an implementation of $\dtlpmln$ to compute optimal policies of MDP $M(D)$ as well as using an MDP solver to compute optimal policies of the $\pbcp$ descriptions. 
Furthermore, the theorems above allow us to check the properties of MDP $M(D)$ by using formal properties of $\lpmln$, such as whether a certain state is reachable in a given number of steps.

\section{$\pbcp$ as a High-Level Representation Language of MDP}\label{sec:block-world}

An action description consists of causal laws in a human-readable form describing the action domain in a compact and high-level way, whereas it is non-trivial to describe an MDP instance directly from the domain description in English. The result in the previous section shows how to construct an MDP instance $M(D)$ for a $\pbcp$ action description $D$ so that the solution to the policy optimization problem of $D$ coincides with that of MDP $M(D)$. In that sense, $\pbcp$ can be viewed as a high-level representation language for MDP.

Since $\lpmln$ programs are weighted rules under the stable model semantics, and the semantics of $\pbcp$ is defined in terms of $\lpmln$,  $\pbcp$ inherits the nonmonotonicity of the stable model semantics to be able to compactly represent recursive definitions or transitive closure \cite{erdem16applications}. The static laws in $\pbcp$ prune out invalid states to ensure that only meaningful value combinations of fluents will be given to MDP as states, thus reducing the size of state space at the MDP level. To demonstrate this, we show how Example \ref{eg:block} (Robot and Blocks) can be represented in $\pbcp$ as follows.


First we define the signature of the action description: $x, x_1, x_2$ are schematic variables\footnote{An expression with schematic variables is a shorthand for the set of expressions obtained by replacing every variable in the original expression with every term in the domain of the variable.} that range over ${\tt B1}$, ${\tt B2}$, ${\tt B3}$; $r, r_1, r_2$ range over ${\tt R1}$, ${\tt R2}$. $\i{TopClear}(x)$, $\i{Above}(x_1, x_2)$, and 
$\i{GoalNotAchieved}$ are Boolean statically determined fluent constants; $\i{In}(x)$ is a regular fluent constant with domain $\{{\tt R1},{\tt R2}\}$, and $\i{OnTopOf}(x_1,x_2)$ is a Boolean regular fluent constant. $\i{MoveTo}(x,r)$ and $\i{StackOn}(x_1,x_2)$ are action constants and $\i{Pf\_Move}$ is a Boolean pf constant. In this example, we make the goal state absorbing, i.e., when all the blocks are already in {\tt R2}, then all actions have no effect. 

\BOC
\NB{This can be shrinken}
\vspace{0.2cm}
\hrule
\begin{tabbing}
Notation:  $x, x_1, x_2$ range over ${\tt B1}$, ${\tt B2}$, ${\tt B3}$; $l, l1$ ranges over ${\tt L1}$, ${\tt L2}$ \\
Statically determined fluent constant:         \hskip 2cm  \=Domains:\\
$\;\;\;$ $\i{TopClear}(x)$                 \>$\;\;\;$ Boolean\\ 
$\;\;\;$ $\i{Above}(x_1, x_2)$                 \>$\;\;\;$ Boolean\\ 
$\;\;\;$ $\i{GoalNotAchieved}$                 \>$\;\;\;$ Boolean\\ 
Regular fluent constants:          \hskip 4cm  \=Domains:\\
$\;\;\;$ $\i{At}(x)$                 \>$\;\;\;$ $\{{\tt L1}, {\tt L2}\}$\\ 
$\;\;\;$ $\i{OnTopOf}(x_1, x_2)$                 \>$\;\;\;$ Boolean\\ 
Action constants:                          \>Domains:\\
$\;\;\;$ $\i{MoveTo(x, l)}$  \>$\;\;\;$ Boolean\\
$\;\;\;$ $\i{StackOn}(x_1, x_2)$  \>$\;\;\;$ Boolean\\ 
Pf constants:                          \>Domains:\\
$\;\;\;$ $\i{\i{Pf\_Move}}$                    \>$\;\;\;$ Boolean 
\end{tabbing}
\hrule
\vspace{0.2cm}
\EOC

\BOCC
{\tiny
\begin{table}[h!]
\begin{tabular}{|l|p{8cm}|}
\hline
\textbf{causal laws} & \textbf{Explanations} \\ \hline
$\begin{array}{l}
\i{MoveTo}(x, l)\ \causes\ \i{At}(x) = l\ \iif\ \i{Pf\_Move}\\
\caused\ \i{Pf\_Move}=\{\true: p, \false: 1-p\}
\end{array}$&
Moving block $x$ to location $l$ causes $x$ to be at $l$ with probability $p$
\\ \hline
$\begin{array}{l}
\i{MoveTo}(x_1, l_2)\ \causes\ \sim\!\i{OnTopOf}(x_1, x_2)\ \\
\ \ \ \iif\ \i{Pf\_Move}\wedge \i{At}(x_1) = l_1\wedge \i{OnTopOf}(x_1, x_2)\wedge l_1\neq l_2
\end{array}$ &
Successfully Moving a block $x_1$ to a location $l_2$ causes $x_1$ to be no longer underneath the block $x_2$ that $x_1$ was underneath in the previous step, if $l_2$ is different from where $x_2$ is.
\\ \hline
$\begin{array}{l}
\i{StackOn}(x_1, x_2)\ \causes\ \i{OnTopOf}(x_1, x_2)\ \\
\ \ \ \iif\ x_1\neq x_2 \wedge \i{TopClear}(x_2)\wedge \i{At}(x_1) = l\wedge \i{At}(x_2) = l
\end{array}$&
Stacking a block $x_1$ on another block $x_2$ causes $x_1$ to be on top of $x_2$, if the top of $x_2$ is clear, and $x_1$ and $x_2$ are at the same location.\\ \hline
$\begin{array}{l}
\i{StackOn}(x_1, x)\ \causes\ \sim\!\i{OnTopOf}(x_1, x_2)\ \iif\\
\ \ \ x_1\neq x_2 \wedge \i{TopClear}(x_2)\wedge \i{At}(x_1) = l\wedge \i{At}(x_2) = l\wedge \\
\ \ \ \i{OnTopOf}(x_1, x_2) \wedge x_2\neq x
\end{array}$&
Stacking a block $x_1$ on another block $x$ causes $x_1$ to be no longer on top of the block $x_1$ where $x_1$ was originally on top of.\\ \hline
$\begin{array}{l}
\constraint\ \neg(\i{OnTopOf}(x_1, x) \wedge \i{OnTopOf}(x_2, x)\wedge x_1 \neq x_2)
\end{array}$&
Two different blocks cannot be on top of the same block.\\ \hline
$\begin{array}{l}
\constraint\ \neg(\i{OnTopOf}(x, x_1) \wedge \i{OnTopOf}(x, x_2)\wedge x_1 \neq x_2)
\end{array}$&
One block cannot be on top of two different blocks.\\  \hline
$\begin{array}{l}
\default\ \i{TopClear}(x)\\
\caused\ \sim\!\i{TopClear}(x)\ \iif\ \i{OnTopOf}(x_1, x)
\end{array}$&
By default, the top of a block $x$ is clear. It is not clear if there is another block $x_1$ that is on top of it.\\ \hline
$\begin{array}{l}
\caused\ \i{Above}(x_1, x_2)\ \iif\ \i{OnTopOf}(x_1, x_2)\\
\caused\ \i{Above}(x_1, x_2)\ \iif\ \i{Above}(x_1, x)\wedge \i{Above}(x, x_2) 
\end{array}$&
The relation $Above$ between two blocks is the transitive closure of the relation $\i{OnTopOf}$: A block $x_1$ is above another block $x_2$ if $x_1$ is on top of $x_2$, or there is another block $x$ such that $x_1$ is above $x$ and $x$ is above $x_2$.\\ \hline
$\begin{array}{l}
\caused\ \bot\ \iif\ \i{Above}(x_1, x_2)\wedge\i{Above}(x_2, x_1)
\end{array}$&
One block cannot be above itself; Two blocks cannot be above each other.\\ \hline
$\begin{array}{l}
\caused\ \i{At}(x_1) = l\ \iif\ \i{Above}(x_1, x_2)\wedge \i{At}(x_2) = l_2
\end{array}$&
If a block $x_1$ is above another block $x_2$, then $x_1$ has the same location as $x_2$.\\ \hline
$\begin{array}{l}
\reward\ -1\ \iif\  \top\ \after\ \i{MoveTo}(x, l)
\end{array}$&
Each moving action has a cost of $1$. \\ \hline
$\begin{array}{l}
\reward\ 10\ \iif\  \sim\!\i{GoalNotAchieved}\ \after\ \i{GoalNotAchieved}
\end{array}$&
Achieving the goal when the goal is not previously achieved yields a reward of $10$\\ \hline
$\begin{array}{l}
\caused\ \i{GoalNotAchieved}\ \iif\ \i{At}(x) = l\wedge l\neq {\tt L2}\\
\caused\ \sim\!\i{GoalNotAchieved}\ \iif\ \neg \i{GoalNotAchieved}
\end{array}$&
The goal is not achieved if there exists a block $x$ that is not at ${\tt L2}$. It is achieved otherwise.\\ \hline
$\begin{array}{l}
\inertial \ \i{At}(x) = l, \i{OnTopOf}(x_1,x_2)
\end{array}$&
The fluent $\i{At}(x, l)$ observes commonsense law of inertia. \\ \hline
$\begin{array}{l}
a_1\wedge a_2\ \causes\ \bot\\
\text{for each distinct pair of ground action constants $a_1$ and $a_2$}
\end{array}$&
At most one action can occur each time step\\ \hline
\end{tabular}
\caption{$\pbcp$ Action Description for Block Example}
\label{tab:block-example}
\end{table}
}

Table \ref{tab:block-example} shows the causal laws in the $\pbcp$ action description for this example.

\EOCC

Moving block $x$ to room $r$ causes $x$ to be in $r$ with probability $p$:
\[
\ba l
\i{MoveTo}(x, r)\ \causes\ \i{In}(x) = r\ \iif\ \i{Pf\_Move}\wedge \i{GoalNotAchieved}\\
\caused\ \i{Pf\_Move}=\{\true: p, \false: 1-p\} . 
\ea 
\] 
If a block $x_1$ is on top of another block $x_2$, then successfully moving $x_1$ to a different room $r_2$ causes $x_1$ to be no longer on top of $x_2$:
\[ 
\ba l
\i{MoveTo}(x_1, r_2)\ \causes\ \sim\!\i{OnTopOf}(x_1, x_2)\ \\
\hspace{1.5cm}\iif\ \i{Pf\_Move}\wedge \i{In}(x_1) = r_1\wedge \i{OnTopOf}(x_1, x_2)\wedge \i{GoalNotAchieved}\ \ \ \  (r_1\neq r_2). 
\ea 
\]
Stacking a block $x_1$ on another block $x_2$ causes $x_1$ to be on top of $x_2$, if the top of $x_2$ is clear, and $x_1$ and $x_2$ are at the same location:
\begin{align}
\nonumber & \i{StackOn}(x_1, x_2)\ \causes\ \i{OnTopOf}(x_1, x_2)\\
\nonumber & \hspace{1cm} \iif\ \i{TopClear}(x_2)\wedge \i{At}(x_1) = r\wedge 
   \i{At}(x_2) = r\wedge \i{GoalNotAchieved} 
\hspace{1cm}  (x_1\neq x_2). 
\end{align}
Stacking a block $x_1$ on another block $x_2$ causes $x_1$ to be no longer on top of the block $x$ where $x_1$ was originally on top of:
\begin{align}
\nonumber &\i{StackOn}(x_1, x_2)\ \causes\ \sim\!\i{OnTopOf}(x_1, x)\ \iif\   \i{TopClear}(x_2)\wedge \i{At}(x_1) = r\wedge \i{At}(x_2) = r\wedge\\
\nonumber &\ \ \ \ \ \ \ \ \ \ \ \ \i{OnTopOf}(x_1, x)\wedge \i{GoalNotAchieved} \ 
\hspace{3.5cm} (x_2\neq x, x_1\neq x_2). 
\end{align}
Two different blocks cannot be on top of the same block, and 
a block cannot be on top of two different blocks:
\[
\ba l
\constraint\ \neg(\i{OnTopOf}(x_1, x) \wedge \i{OnTopOf}(x_2, x))\ \hspace{2.5cm} (x_1 \neq x_2)  \\
\constraint\ \neg(\i{OnTopOf}(x, x_1) \wedge \i{OnTopOf}(x, x_2)) \ \hspace{2.5cm} (x_1 \neq x_2).
\ea
\]
By default, the top of a block $x$ is clear. It is not clear if there is another block $x_1$ that is on top of it:
\[
\ba l
\default\ \i{TopClear}(x)\\
\caused\ \sim\!\i{TopClear}(x)\ \iif\ \i{OnTopOf}(x_1, x).
\ea 
\]
The relation $Above$ between two blocks is the transitive closure of the relation $\i{OnTopOf}$: A block $x_1$ is above another block $x_2$ if $x_1$ is on top of $x_2$, or there is another block $x$ such that $x_1$ is above $x$ and $x$ is above $x_2$:
\[ 
\ba l
\caused\ \i{Above}(x_1, x_2)\ \iif\ \i{OnTopOf}(x_1, x_2)\\
\caused\ \i{Above}(x_1, x_2)\ \iif\ \i{Above}(x_1, x)\wedge \i{Above}(x, x_2) .
\ea 
\]
One block cannot be above itself; two blocks cannot be above each other:
\[
\ba l
\caused\ \bot\ \iif\ \i{Above}(x_1, x_2)\wedge\i{Above}(x_2, x_1).
\ea 
\]
If a block $x_1$ is above another block $x_2$, then $x_1$ has the same location as $x_2$:
\beq
\ba l
\caused\ \i{At}(x_1) = r\ \iif\ \i{Above}(x_1, x_2)\wedge \i{At}(x_2) = r.
\ea
\eeq{at-static}
Each moving action has a cost of $1$:
\[
\ba l
\reward\ -1\ \iif\  \top\ \after\ \i{MoveTo}(x, r).
\ea 
\]
Achieving the goal when the goal is not previously achieved yields a reward of $10$:
\[ 
\ba l
\reward\ 10\ \iif\  \sim\!\i{GoalNotAchieved}\ \after\ \i{GoalNotAchieved}.
\ea 
\]
The goal is not achieved if there exists a block $x$ that is not at ${\tt R2}$. It is achieved otherwise:
\[
\ba l
\caused\ \i{GoalNotAchieved}\ \iif\ \i{At}(x) = r\ \ ( r\neq {\tt R2})\\
\default\ \sim\!\i{GoalNotAchieved}. 
\ea
\]
$\i{At}(x)$ and $\i{OnTopOf}(x_1,x_2)$ are inertial:
\[
\ba l
\inertial \ \i{At}(x), \i{OnTopOf}(x_1,x_2).
\ea
\]
Finally, we add 
\ \ $ a_1\wedge a_2\ \causes\ \bot $ \ \ 
for each distinct pair of ground action constants $a_1$ and $a_2$, to ensure that at most one action can occur each time step.

It can be seen that stacking all blocks together and moving them at once would be the best strategy to move them to ${\tt R2}$. 

In the robot and blocks example, many value combinations of fluents do not lead to a valid state, such as
$$\{\i{OnTopOf}({\tt B1}, {\tt B2}), \i{OnTopOf}({\tt B2}, {\tt B1}), ...\},$$
where the two blocks ${\tt B1}$ and ${\tt B2}$ are on top of each other. Moreover, the fluents $\i{TopClear}(x)$ and $\i{Above}(x_1, x_2)$ are completely dependent on the value of the other fluents. There would be $2^{3+3\times 3 + 3 + 3\times 3}=2^{24}$ states if we define a state as any value combination of fluents. On the other hand, the static laws in the above action description reduce the number of states to only $(13 + 9)\times 2=44$. To see this, consider all possible configuration with $3$ blocks and $2$ locations. As illustrated in Figure \ref{fig:3-blocks}, there are $13$ possible configurations with $3$ blocks on the same side, and $9$ possible configurations with one block one one side and two on the other side. Each configureration can be mirrored to yield another configuration, so we have $(13 + 9)\times 2 = 44$ in total. This is aligned with the number of MDP states obtained from the $\pbcp$ action description according to Definition \ref{def:pbcp-mdp}.

\begin{figure}
    \centering
    \includegraphics[width=0.9\textwidth]{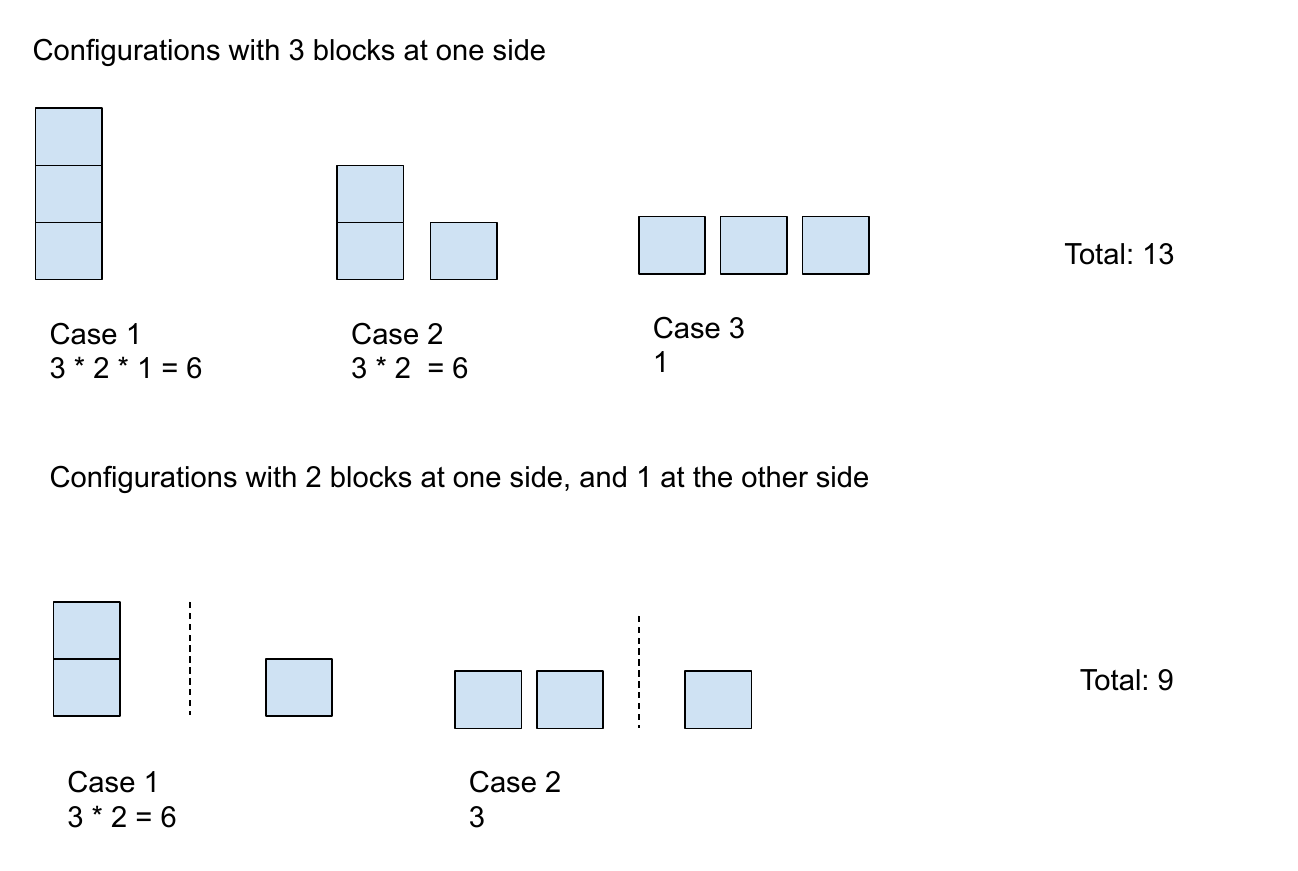}
    \caption{Possible Configurations with $3$ blocks and $2$ locations}
    \label{fig:3-blocks}
\end{figure}

Furthermore, in this example, $\i{Above}(x, y)$ needs to be defined as a transitive closure of $\i{OnTopOf}(x, y)$, so that the effects of $\i{StackOn}(x_1, x_2)$ can be defined in terms of the (inferred) spatial relation of blocks. Also, the static law~\eqref{at-static}
defines an indirect effect of $\i{MoveTo}(x, r)$. 


\BOCC
We implemented the prototype system {\sc pbcplus2mdp}, which takes an action description $D$ and time horizon $m$ as input, and finds an optimal policy by constructing the corresponding MDP $M(D)$ and invoking an MDP solver {\sc mdptoolbox}.\footnote{\url{https://pymdptoolbox.readthedocs.io}} 
The current system uses $\lpmln$ 1.0 
\cite{lee17computing} (\url{http://reasoning.eas.asu.edu/lpmln}) for exact inference to find states, actions, transition probabilities, and transition rewards. The system is publicly available at \url{https://github.com/ywang485/pbcplus2mdp}\cite{yi_wang_2020_3726430}, along with several examples.
The current system is not quite scalable because generating exact transition probability and reward matrices requires enumerating all stable models of $D_0$ and~$D_1$. 
\EOCC


\BOCC
\begin{algorithm}[h!]
{\footnotesize
\noindent {\bf Input: }
\begin{enumerate}
\item $Tr(D, m)$: A $\pbcp$ action description translated into $\lpmln$ program, parameterized with maxstep $m$, with states set ${\bf S}$ and action sets ${\bf A}$
\item $T$: time horizon
\item $\gamma$: discount factor
\end{enumerate}
\noindent {\bf Output: } Optimal policy

\noindent {\bf Procedure:}
\begin{enumerate}
\item Execute {\sc lpmln2asp} on $Tr(D, m)$ with $m=0$ to obtain all stable models of $Tr(D, 0)$; project each stable model of $Tr(D, 0)$ to only predicates corresponding to fluent constant (marked by {\tt fl\_} prefix); assign a unique number $idx(s)\in\{0, \dots, |{\bf S}|-1\}$ to each of the projected stable model $s$ of $Tr(D, 0)$;
\item Execute {\sc lpmln2asp} on $Tr(D, m)$ with $m=1$ and the clingo option {\tt --project} to project stable models to only predicates corresponding to action constant (marked by {\tt act\_} prefix); assign a unique number $idx(a)\in\{0, \dots, |{\bf A}|-1\}$ to each of the projected stable model $a$ of $Tr(D, 1)$;
\item Initialize 3-dimensional matrix $P$ of shape $(|{\bf A}|, |{\bf S}|, |{\bf S}|)$;
\item Initialize 3-dimensional matrix $R$ of shape $(|{\bf A}|, |{\bf S}|, |{\bf S}|)$;
\item For each state $s\in {\bf S}$ and action $a\in {\bf A}$:
\begin{enumerate}
\item execute {\sc lpmln2asp} on $Tr(D, m)\cup \{0:s\}\cup \{0:a\}\cup ST\_DEF$ with $m=1$ and the option {\tt -q "end\_state"}, where $ST\_DEF$ contains the rule
$$\{{\tt end\_state}(idx(s))\leftarrow 1:s\mid s\in {\bf S}\}.$$
\item Obtain $P_{Tr(D, 1)}(1: s' \mid 0:s, 0:a)$ by extracting the probability of $P_{Tr(D, 1)}({\tt end\_state}(idx(s')) \mid 0:s, 0:a)$ from the output;
\item $P(idx(a), idx(s), idx(s'))\leftarrow P_{Tr(D, 1)}(1: s' \mid 0:s, 0:a)$;
\item Obtain $E[U_{Tr(D, 1)}(1: s', 0:s, 0:a)]$ from the output by selecting an arbitrary answer set returned that satisfies $1: s'\wedge 0:s\wedge 0:a$ and sum up the first arguments of all predicates named {\tt utility} (By Proposition \ref{prop:history-determines-utility}, this is equivalent to $E[U_{Tr(D, 1)}(1: s', 0:s, 0:a)]$). 
\item $R(idx(a), idx(s), idx(s'))\leftarrow E[U_{Tr(D, 1)}(1: s', 0:s, 0:a)]$;
\end{enumerate}
\item Call finite horizon policy optimization algorithm of {\sc pymdptoolbox} with transition matrix $P$, reward matrix $R$, time horizon $T$ and discount factor $\gamma$; return the output.

\end{enumerate}
\caption{{\sc pbcplus2mdp} system}
\label{alg:pbcplus2mdp}
}
\end{algorithm}
\EOCC
\BOCC
We measure the scalability of our system {\sc pbcplus2mdp} on the robot and blocks example Table \ref{tab:pbcplus2mdp-system-analysis} shows the running statistics of finding the optimal policy for different number of blocks. For all of the running instances, maximum time horizon is set to be $10$, as in all of the instances, the smallest number of steps in a shortest possible action sequence achieving the goal is less than $10$. The experiments are performed on a machine with 4 Intel(R) Core(TM) i5-2400 CPU with OS Ubuntu 14.04.5 LTS and 8 GB memory.

\begin{table}[htb]
\centering
\vspace{-0.5cm}
\begin{tabular}{|c|c|c|c|c|c|}
\hline
{\bf \# Blocks} & {\bf \# State} & {\bf \# Actions} & {\bf $\lpmln$ Solving Time} & {\bf MDP Solving Time} & {\bf Overall Solving Time}\\
\hline
1 & 2 & 4 & 0.902s & 0.0005 & 1.295s\\
2 & 8 & 9 & 0.958s & 0.0014s & 1.506s\\
3 & 44 & 16 & 1.634s & 0.0017s & 2.990s\\
4 & 304 & 25 & 12.256s & 0.0347s & 27.634s\\
5 & 2512 & 36 & 182.190s & 2.502 & 10m23.929s\\
6 & 24064 & 49 & $>$ 1 hr & - & - \\
\hline
\end{tabular}
\caption{Running Statistics of {\sc pbcplus2mdp} system}
\label{tab:pbcplus2mdp-system-analysis}
\end{table}

As can be seen from the table, the running time increases exponentially as the number of blocks increases. This is not surprising since the size of the search space also increases exponentially as the number of blocks increases. The bottleneck is the $\lpmln$ inference system, as it needs to enumerate every stable model to generate the set of states, the set of actions, and transition probabilities and rewards. The time spent on MDP solving is negligible.

[[
{\cblu Need to explain the diffrence in overall solving time: python... }
]]

System {\sc pbcplus2mdp} supports planning with infinite horizon. However, it should be noted that the semantics of an action description with infinite time horizon in terms of $\dtlpmln$ is not yet well established. In this case, the action description is only viewed as a high-level representation of an MDP.
\EOCC

\section{System {\sc pbcplus2mdp}}
\label{sec:system-pbcplus2mdp}

We implement system {\sc pbcplus2mdp}, which takes the $\lpmln$ translation of an action description $D$ and time horizon $m$ as input, and finds the optimal policy by constructing the corresponding MDP $M(D)$ and utilizing MDP policy optimization algorithms as black box. We use {\sc mdptoolbox}\footnote{\url{https://pymdptoolbox.readthedocs.io}} as our underlying MDP solver. The current system uses $\lpmln$ 1.0 ( \url{http://reasoning.eas.asu.edu/lpmln/index.html}) for exact inference to find states, actions, transition probabilities and transition rewards. The system is publically available at \url{https://github.com/ywang485/pbcplus2mdp} \cite{yi_wang_2020_3726430}, along with several examples.

We measure the scalability of our system {\sc pbcplus2mdp} on the robot and blocks example. Figure \ref{fig:pbcplus2mdp-system-analysis-2} shows the running statistics of finding the optimal policy for different number of blocks. For all of the running instances, maximum time horizon is set to be $10$, as in all of the instances, the smallest number of steps in a shortest possible action sequence achieving the goal is less than $10$. The discount factor is set to be $0.9$. The experiments are performed on a machine with 4 Intel(R) Core(TM) i5-2400 CPU with OS Ubuntu 14.04.5 LTS and 8 GB memory.

\BOCC
\begin{table}[htb]
\centering
\begin{tabular}{|c|c|c|c|c|c|}
\hline
{\bf \# Blocks} & {\bf \# State} & {\bf \# Actions} & {\bf MDP Generation Time} & {\bf MDP Solving Time} & {\bf Overall Solving Time}\\
\hline
1 & 2 & 4 & 0.902s & 0.0005 & 1.295s\\
2 & 8 & 9 & 0.958s & 0.0014s & 1.506s\\
3 & 44 & 16 & 1.634s & 0.0017s & 2.990s\\
4 & 304 & 25 & 12.256s & 0.0347s & 27.634s\\
5 & 2512 & 36 & 182.190s & 2.502 & 10m23.929s\\
6 & 24064 & 49 & $>$ 1 hr & - & - \\
\hline
\end{tabular}
\caption{Running Statistics of {\sc pbcplus2mdp} system}
\label{tab:pbcplus2mdp-system-analysis-2}
\end{table}
\EOCC

\begin{figure}
 \begin{center}
    \includegraphics[width=0.98\textwidth]{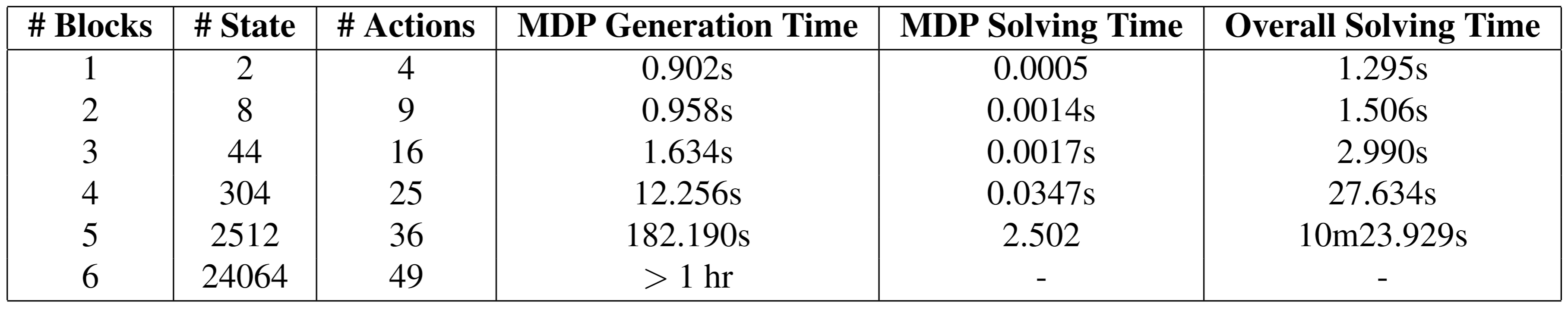}
    \end{center}
    \caption{Running Statistics of {\sc pbcplus2mdp} system}
    \label{fig:pbcplus2mdp-system-analysis-2}
\end{figure}

As can be seen from the table, the running time increases exponentially as the number of blocks increases. This is not surprising since the size of the search space increases exponentially as the number of blocks increases. The bottleneck is the $\lpmln$ inference system, as it needs to enumerate every stable model to generate the set of states, the set of actions, transition probabilities and rewards. Time spent on MDP planning is negligible.

Since the bottleneck is the size of the domain, one potential direction of improving the scalability would be to represent the action description at first-order level, and then utilize planning algorithms for first-order MDPs, such as \cite{boutilier01symbolic,yoon02inductive,wang08first,sanner09practical}, to compute domain-independent policies. This method requires a solver for first-order $\lpmln$. We leave this for future work.

System {\sc pbcplus2mdp} supports planning with infinite horizon. However, it should be noted that the semantics of an action description with infinite time horizon in terms of $DT-\lpmln$ is not yet well established. In this case, the action description is only viewed as a high-level representation of an MDP.

\BOCC
Algorithm \ref{alg:pbcplus2mdp} details how {\sc pbcplus2mdp} solves the policy optimization problem of an $\pbcp$ action description. 

\begin{algorithm}[h!]
{\footnotesize
\noindent {\bf Input: }
\begin{enumerate}
\item $D_m$: A $p\cal{BC}+$ action description translated into $\lpmln$ program, parameterized with maxstep $m$, with states set ${\bf S}$ and action sets ${\bf A}$
\item $T$: time horizon
\item $\gamma$: discount factor
\end{enumerate}
\noindent {\bf Output: } Optimal policy

\noindent {\bf Procedure:}
\begin{enumerate}
\item Execute system {\sc lpmln} on $D_0$ to obtain all stable models of $D_0$; project each stable model of $D_0$ to only predicates corresponding to fluent constant (marked by {\tt fl\_} prefix); assign a unique number $idx(s)\in\{0, \dots, |{\bf S}|-1\}$ to each of the projected stable model $s$ of $Tr(D, 0)$;
\item Execute {\sc lpmln2asp} on $Tr(D, m)$ with $m=1$ and the clingo option {\tt --project} to project stable models to only predicates corresponding to action constant (marked by {\tt act\_} prefix); assign a unique number $idx(a)\in\{0, \dots, |{\bf A}|-1\}$ to each of the projected stable model $a$ of $Tr(D, 1)$;
\item Initialize 3-dimensional matrix $P$ of shape $(|{\bf A}|, |{\bf S}|, |{\bf S}|)$;
\item Initialize 3-dimensional matrix $R$ of shape $(|{\bf A}|, |{\bf S}|, |{\bf S}|)$;
\item Construct ASP rules $ST\_ACT\_DEF$:
$$\{{\tt start\_state}(idx(s))\leftarrow 0:s\mid s\in {\bf S}\}\cup\{{\tt end\_state}(idx(s))\leftarrow 1:s\mid s\in {\bf S}\}\cup\{{\tt action\_idx}(idx(a))\leftarrow 0:a\mid a\in {\bf A}\}$$
\item Execute {\sc lpmln2asp} on $Tr(D, m)\cup ST\_ACT\_DEF$ with $m=1$ and the option {\tt -all} to obtain all stable models of $Tr(D, 1)$; Store all the stable models of $Tr(D, 1)$ with a dictionary $AS$ so that 
\begin{align}
\nonumber AS(s, a, s')= & \{(X, P_{Tr(D, 1)}(X))\mid \text{$X$ is the stable model of $Tr(D, 1)$ such that} \\
\nonumber & \text{${\tt start\_state}(idx(s))$, ${\tt end\_state}(idx(s'))$, ${\tt action\_idx}(idx(a))\in X$}
\end{align}
\item For each state $s\in {\bf S}$ and action $a\in {\bf A}$:
\begin{enumerate}
\item Obtain $P_{Tr(D, 1)}(1: s' \mid 0:s, 0:a)$ as $\frac{\underset{(X, p)\in AS(s, a, s')}{\sum} p}{\underset{\substack{s'\in {\bf S}\\(X, p)\in AS(s, a, s')}}{\sum} p}$;
\item Obtain $E[U_{Tr(D, 1)}(1: s', 0:s, 0:a)]$ by arbitrarily picking one $(X, p)$ from $AS(s, a, s')$ and summing up the first arguments of all predicates named {\tt utlity};
\item $P(idx(a), idx(s), idx(s'))\leftarrow P_{Tr(D, 1)}(1: s' \mid 0:s, 0:a)$;
\item $R(idx(a), idx(s), idx(s'))\leftarrow E[U_{Tr(D, 1)}(1: s', 0:s, 0:a)]$;
\end{enumerate}
\item Call finite horizon policy optimization algorithm of {\sc pymdptoolbox} with transition matrix $P$, reward matrix $R$, time horizon $T$ and discount factor $\gamma$; return the output.

\end{enumerate}
\caption{{\sc pbcplus2mdp} system}
\label{alg:pbcplus2mdp}
}
\end{algorithm}

\begin{example}
The $\lpmln$ translation of the $\pbcp$ action description in Example 2 is listed below:
\begin{lstlisting}
astep(0..m-1).
step(0..m).
boolean(t; f).

block(b1; b2; b3).
location(l1; l2).

%% UEC
:- fl_Above(X1, X2, t, I), fl_Above(X1, X2, f, I).
:- not fl_Above(X1, X2, t, I), not fl_Above(X1, X2, f, I), block(X1), block(X2), step(I).
:- fl_TopClear(X, t, I), fl_TopClear(X, f, I).
:- not fl_TopClear(X, t, I), not fl_TopClear(X, f, I), block(X), step(I).
:- fl_GoalNotAchieved(t, I), fl_GoalNotAchieved(f, I).
:- not fl_GoalNotAchieved(t, I), not fl_GoalNotAchieved(f, I), step(I).

:- fl_At(X, L1, I), fl_At(X, L2, I), L1 != L2.
:- not fl_At(X, l1, I), not fl_At(X, l2, I), block(X), step(I).
:- fl_OnTopOf(X1, X2, t, I), fl_OnTopOf(X1, X2, f, I).
:- not fl_OnTopOf(X1, X2, t, I), not fl_OnTopOf(X1, X2, f, I), block(X1), block(X2), step(I).

:- act_StackOn(X1, X2, t, I), act_StackOn(X1, X2, f, I).
:- not act_StackOn(X1, X2, t, I), not act_StackOn(X1, X2, f, I), block(X1), block(X2), astep(I).
:- act_MoveTo(X, L, t, I), act_MoveTo(X, L, f, I).
:- not act_MoveTo(X, L, t, I), not act_MoveTo(X, L, f, I), block(X), location(L),astep(I).

:- pf_Move(t, I), pf_Move(f, I).
:- not pf_Move(t, I), not pf_Move(f, I), astep(I).

% ---------- PF(D) ----------
%% Probability Distribution
@log(0.8) pf_Move(t, I) :- astep(I).
@log(0.2) pf_Move(f, I) :- astep(I).

%% Initial State and Actions are Random
{fl_OnTopOf(X1, X2, B, 0)} :- block(X1), block(X2), boolean(B).
{fl_At(X, L, 0)} :- block(X), location(L), boolean(B).
{act_StackOn(X1, X2, B, I)} :- block(X1), block(X2), boolean(B), astep(I).
{act_MoveTo(X, L, B, I)} :- block(X), location(L), boolean(B), astep(I).

%% No Concurrency
:- act_StackOn(X1, X2, t, I), act_StackOn(X3, X4, t, I), astep(I), X1 != X3.
:- act_StackOn(X1, X2, t, I), act_StackOn(X3, X4, t, I), astep(I), X2 != X4.
:- act_MoveTo(X1, L1, t, I), act_MoveTo(X2, L2, t, I), astep(I), X1 != X2.
:- act_MoveTo(X1, L1, t, I), act_MoveTo(X2, L2, t, I), astep(I), L1 != L2.
:- act_StackOn(X1, X2, t, I), act_MoveTo(X3, L, t, I), astep(I).

%% Static Laws
fl_GoalNotAchieved(t, I) :- fl_At(X, L, I), L != l2.
fl_GoalNotAchieved(f, I) :- not fl_GoalNotAchieved(t, I), step(I).
:- fl_OnTopOf(X1, X, t, I), fl_OnTopOf(X2, X, t, I), X1 != X2.
:- fl_OnTopOf(X, X1, t, I), fl_OnTopOf(X, X2, t, I), X1 != X2.
fl_Above(X1, X2, t, I) :- fl_OnTopOf(X1, X2, t, I).
fl_Above(X1, X2, t, I) :- fl_Above(X1, X, t, I), fl_Above(X, X2, t, I).
:- fl_Above(X1, X2, t, I), fl_Above(X2, X1, t, I).
fl_At(X1, L, I) :- fl_Above(X1, X2, t, I), fl_At(X2, L, I).
fl_Above(X1, X2, f, I) :- not fl_Above(X1, X2, t, I), block(X1), block(X2), step(I).
fl_TopClear(X, f, I) :- fl_OnTopOf(X1, X, t, I).
fl_TopClear(X, t, I) :- not fl_TopClear(X, f, I), block(X), step(I).

%% Fluent Dynamic Laws
fl_At(X, L, I+1) :- act_MoveTo(X, L, t, I), pf_Move(t, I), fl_GoalNotAchieved(t, I).
fl_OnTopOf(X1, X2, t, I+1) :- act_StackOn(X1, X2, t, I), X1 != X2, fl_TopClear(X2, t, I), not fl_Above(X2, X1, t, I), fl_At(X1, L, I), fl_At(X2, L, I), fl_GoalNotAchieved(t, I).
fl_OnTopOf(X1, X2, f, I+1) :- act_MoveTo(X1, L2, t, I), pf_Move(t, I), fl_At(X1, L1, I), fl_OnTopOf(X1, X2, t, I), L1 != L2, fl_GoalNotAchieved(t, I).
fl_OnTopOf(X1, X, f, I+1) :- act_StackOn(X1, X2, t, I), X1 != X2, fl_TopClear(X2, t, I), not fl_Above(X2, X1, t, I), fl_At(X1, L, I), fl_At(X2, L, I), fl_OnTopOf(X1, X, t, I), X != X2, fl_GoalNotAchieved(t, I).
{fl_OnTopOf(X1, X2, B, I+1)} :- fl_OnTopOf(X1, X2, B, I), astep(I), boolean(B).
{fl_At(X, L, I+1)} :- fl_At(X, L, I), astep(I), boolean(B).

%% Utility Laws
utility(-1, X, L, I) :- act_MoveTo(X, L, t, I).
utility(10) :- fl_GoalNotAchieved(f, I+1), fl_GoalNotAchieved(t, I).
\end{lstlisting}
\end{example}
\EOCC

\section{Related Work}\label{sec:related-work}

There have been quite a few studies and attempts in defining factored representations of (PO)MDP, with feature-based state descriptions and more compact, human-readable action definitions. PPDDL \cite{younes04ppddl1} extends PDDL with constructs for describing probabilistic effects of actions and reward from state transitions. 
\BOCC
A PPDDL action schema may contain
\[
({\tt probabilistic}\ p_1\ e_1\ \dots\ p_k\ e_k),
\]
which associates probabilities $p_1, \dots, p_k$ to effects $e_1, \dots, c_k$. 
The effects are arbitrary formulas that can again contain probabilistic effects. The arbitrary nesting of effects allows flexible representations of effects of actions. The transitions and rewards of the underlying MDP, though, is defined assuming that the action effects are turned into some normal form. The states of the MDP are all possible value combinations of the fluents, and the entries of transition and reward matrices are filled out according to the satisfaction of the effect w.r.t. the corresponding transition.  
\EOCC
One limitation of PPDDL is the lack of static causal laws, which prohibits PPDDL from expressing recursive definitions or transitive closure. This may yield a large state space to explore as discussed in Section~\ref{sec:block-world}.
\BOCC
In many domains, features used to define states are not independent from each other. In these domains, some combination of fluent values do not yield a valid state. Although invalid combinations of fluent values can be simulated by unreachable states, including all such invalid combinations as states unnecessarily increases the size of the domain. With {\sc pbcplus2mdp}, such invalid combinations can be pruned out with static laws in $\pbcp$, and removed from the state space given to the MDP solver. 

Another weakness of PPDDL lies in the limitation on expressivity due to simple satisfaction checking in determining valid transitions. Action effects involving recursive definitions or transitive closures are hard to be expressed, while in $\pbcp$, the stable model semantics makes it quite straightforward to express recursive definitions. 

Consider the Example \ref{eg:block}. This example is hard to represent in PPDDL because $\i{Above}(x, y)$ needs to be defined as a transitive closure of $\i{OnTopOf}(x, y)$. Also, the lack of state constraints from PPDDL would yield a unnecessarily large state space. 
\EOCC
RDDL (Relational Dynamic Influence Diagram Language) \cite{sanner10relational} improves the expressivity of PPDDL in modeling stochastic planning domains by allowing concurrent actions, continuous values of fluents, state constraints, etc. The semantics is defined in terms of lifted dynamic Bayes network extended with influence graph. A lifted planner can utilize the first-order representation and potentially achieve better performance. Still, indirect effects are hard to be represented in RDDL. Compared to PPDDL and RDDL, the advantages of $p\cal{BC}+$ are in its simplicity and expressivity originating from the stable model semantics, which allows for elegant representation of recursive definitions, defeasible behaviors, and indirect effects.


\BOCC
(\cite{baral02reasoning}) is yet another logic formalism for modeling MDP in an elaboration tolerant way. The syntax is similar to $\pbcp$. The semantics is more complex as it allows preconditions of actions and imposes less semantical
assumption. The concept of unknown variables associated with probability distributions is similar to pf constants in our setting. The language is more focused on representing probabilistic effects. Although it is intended to a compact representation of MDPs, we do not see much discussion on the relation between its semantics and MDP, in terms of solution to planning problems. We do not find a notion of reward or utility defined in the context of this language.
\EOCC

\cite{poole13framework} combines the Situation Calculus  \cite{mccarthy63situations} and the Independent Choice Logic (ICL) \cite{poole08independent} for probabilistic planning. The situation calculus is used to specify effects of actions, and ICL is used to model randomness in an action domain. The notion of choice alternatives in ICL is similar to pf constant in $\pbcp$, and an atomic choice resembles a value assignment to a pf constant. While $\pbcp$ allows uncertainty to be from both probabilities and logic, \cite{poole13framework} has the restriction that all the uncertainty comes from probabilities, i.e., the logic program is required to have a unique model once all the independent choices are fixed.  Another difference is that \cite{poole13framework} considers only acyclic logic program (which is not a restriction in $\pbcp$), and it is thus not straightforward to represent transitive closure such as ``a block being moved causes the block on top of it also being moved'', as in Example \ref{eg:block}. It is worth noting that \cite{poole13framework} allows a more compact and flexible representation of policies, where conditional expressions can be used to summarize the action to take for a set of states where a certain sensor value is observed.



\cite{Zhang15corpp} adopt ASP and P-Log \cite{baral09probabilistic} 
which respectively produces a refined set of states and a refined probability distribution over states that are then fed to POMDP solvers for low-level planning. The refined sets of states and probability distribution over states take into account commonsense knowledge about the domain, and thus improve the quality of a plan and reduce computation needed at the POMDP level. \cite{yang18peorl} adopts the (deterministic) action description language $\cal{BC}$ for high-level representations of the action domain, which defines high-level actions that can be treated as deterministic. 
Each action in the generated high-level plan is then mapped into more detailed low-level policies, which takes stochastic effects of low-level actions into account. 
{Similarly,
\cite{sridharan15reba} introduce a 
framework
with planning in a coarse-resolution transition model and a fine-resolution transition model. Action language ${\cal AL}_d$ is used for defining the two levels of transition models. The fine-resolution transition model is further turned into a POMDP for detailed planning with stochastic effects of actions and transition rewards. 
While a $p\cal{BC}+$ action description can fully capture all aspects of (PO)MDP including transition probabilities and rewards, the ${\cal AL}_d$ action description only provides states, actions and transitions with no quantitative information. 
\cite{leonetti16synthesis}, on the other hand, use symbolic reasoners such as ASP to reduce the search space for reinforcement learning based planning methods by generating partial policies from planning results generated by the symbolic reasoner. 
The exploration of the low-level RL module is constrained by actions that satisfy the partial policy.  
}

Another related work is \cite{ferreira17answer}, which combines ASP and reinforcement learning by using action language $\cal{BC}+$ as a meta-level description of MDP. The $\cal{BC}+$ action descriptions define non-stationary MDPs in the sense that the states and actions can change with new situations occurring in the environment. The algorithm ASP(RL) proposed in this work iteratively calls an ASP solver to obtain states and actions for the RL methods to learn transition probabilities and rewards, and updates the $\cal{BC}+$ action description with changes in the environment found by the RL methods, in this way finding optimal policy for a non-stationary MDP with the search space reduced by ASP. The work is similar to ours in that ASP-based high-level logical description is used to generate states and actions for MDP, but the difference is that we use an extension of ${\cal BC}$+ that expresses transition probabilities and rewards.

\section{Conclusion}
In this work, we bridge the gap between action language $\pbcp$ and Markov Decision Process by extending $\pbcp$ with the notion of utility, which allows $\pbcp$ to serve as an elaboration tolerant representation of MDP, as well as leveraging an MDP solver to compute a $\pbcp$ action description. Our main contributions are as follows.
\begin{itemize}
\item We extended $\lpmln$ with the notion of utility, resulting in $\dtlpmln$; we developed an approximate algorithm for maximizing expected utility in $\dtlpmln$;
\item Based on $\dtlpmln$, we extended $\pbcp$ with the notion of utility;
\item We showed that the semantics of $\pbcp$ can be alternatively defined in terms of Markov Decision Process;
\item We demonstrated how $\pbcp$ can serve as an elaboration tolerant representation of MDP;
\item We developed a prototype system {\sc pbcplus2mdp}, for finding optimal policies of $\pbcp$ action descriptions using an MDP solver.  
\end{itemize}

Formally relating action languages and MDP opens up interesting research to explore. Dynamic programming methods in MDP can be utilized to compute action languages. In turn, action languages may serve as a formal verification tool for MDP as well as a high-level representation language for MDP that describes an MDP instance in a succinct and elaboration tolerant way. As many reinforcement learning tasks use MDP as a modeling language, the work may be related to incorporating symbolic knowledge to reinforcement learning as evidenced by \cite{Zhang15corpp,yang18peorl,leonetti16synthesis}.

$\dtlpmln$ may deserve attention on its own for static domains. 
We expect that this extension of $\lpmln$ system that can handle utility can be a useful tool for verifying properties for MDP.





The theoretical results in this paper limit attention to MDP in the finite horizon case. When the maximum step $m$ is sufficiently large, we may view it as an approximation of the infinite horizon case, in which case, we allow discount factor $\gamma$ by replacing $v$ in \eqref{eq:utility-law-lpmln} with $\gamma^{i+1} v$. While it appears intuitive to extend the theoretical results in this paper to the infinite case, it requires extending the definition of $\lpmln$ to allow infinitely many rules, which we leave for future work. 


\medskip\noindent
{\bf Acknowledgements:} 
We are grateful to the anonymous referees for their useful comments and to Siddharth Srivastava, Zhun Yang, and Yu Zhang for helpful discussions. This work was partially supported by the National Science Foundation under Grant IIS-1815337.

\bibliographystyle{acmtrans}
\bibliography{bib,bib2}
\BOC

\EOC

\newpage

\appendix

\section{Proofs}

Proofs of Theorem \ref{thm:path-probability}, \ref{thm:transition-probability} and Corollary \ref{thm:reduce2transition} can be found in the supplimentary material of \cite{lee18aprobabilistic}.

\subsection{Propositions and Lemmas}

We write 
$\langle a_{0}, a_{1}\dots,  a_{m-1} \rangle^t$  (each $a_{i} \in {\bf A}$) to denote the formula 
$0\!:\!a_{0}\wedge 1\!:\!a_{1} \dots\wedge  m-1\!:\!a_{m-1}$. 
The following lemma tells us that any action sequence has the same probability under $Tr(D, m)$.

For any multi-valued probabilistic program $\Pi$,
let $pf_1, \dots, pf_m$ be the probabilistic constants in $\Pi$, and $v_{i,1}, \dots, v_{i, k_i}$, each associated with probability $p_{i,1}, \dots, p_{i, k_i}$ resp. be the values of $pf_i$ ($i\in\{1, \dots, m\}$). We use $TC_{\Pi}$ be the set of all assignments to probabilistic constants in $\Pi$. 
\begin{lemma}\label{lem:action-equal-probability}
For any $p\cal{BC}+$ action description $D$ and any action sequence $\vec{a} = \langle a_0, a_1, \dots, a_{m-1}\rangle$, we have
\[
P_{Tr(D, m)}(\vec{a}^t) = \frac{1}{(|\sigma^{act}|+1)^{m}}.
\]
\end{lemma}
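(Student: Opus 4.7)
\medskip
\noindent\textbf{Proof plan for Lemma~\ref{lem:action-equal-probability}.}
The plan is to reduce the claim to Theorem~\ref{thm:path-probability} (the ``path-probability'' theorem) and then use the fact that the joint distribution over total choices is a genuine probability distribution that sums to $1$.

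First I would expand the left-hand side by the law of total probability:
\[
P_{Tr(D,m)}(\langle a_0,\dots,a_{m-1}\rangle^t)
\;=\;\sum_{I} P_{Tr(D,m)}(I),
\]
where the sum ranges over all (probabilistic) stable models $I$ of $Tr(D,m)$ that satisfy the action sequence $A=\langle a_0,\dots,a_{m-1}\rangle^t$. By Assumptions~2 and~3 on $\pbcp$ action descriptions together with Theorem~\ref{thm:path-probability}, such stable models are in one-to-one correspondence with the total choices $TC$ on $0\!:\!\sigma^{initpf}\cup\sigma^{pf}_m$: every such $TC$ yields a unique stable model $I_{TC\cup A}$ satisfying $A$, and every stable model satisfying $A$ arises in this way.

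Next, I would invoke Theorem~\ref{thm:path-probability} to rewrite each summand, giving
\[
P_{Tr(D,m)}(\langle a_0,\dots,a_{m-1}\rangle^t)
\;=\;\sum_{TC}\frac{\prod_{c=v\in TC} M_{\bf\Pi}(c=v)}{(|\sigma^{act}|+1)^{m}}
\;=\;\frac{1}{(|\sigma^{act}|+1)^{m}}\sum_{TC}\prod_{c=v\in TC} M_{\bf\Pi}(c=v).
\]
The final step is to observe that the inner sum equals $1$. This follows because the pf and initpf constants in $0\!:\!\sigma^{initpf}\cup\sigma^{pf}_m$ are declared independently and each declaration \eqref{eq:pf-declare-no-time} specifies probabilities summing to $1$; hence, expanding the product over all total choices as iterated sums,
\[
\sum_{TC}\prod_{c=v\in TC} M_{\bf\Pi}(c=v)
\;=\;\prod_{c}\Bigl(\sum_{v\in Dom(c)} M_{\bf\Pi}(c=v)\Bigr)\;=\;1.
\]

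I expect the only subtle point to be the uniqueness step: verifying that, given an action sequence $A$, each stable model satisfying $A$ is of the form $I_{TC\cup A}$ for exactly one $TC$. This, however, is precisely what Theorem~\ref{thm:path-probability} asserts, so there is no real obstacle beyond citing the assumptions and that theorem. Combining the three steps yields the stated equality.
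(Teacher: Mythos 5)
Your proposal is correct and follows essentially the same route as the paper's own proof: both sum over the stable models satisfying the action sequence, identify them one-to-one with total choices whose stable models have probability $\prod_{c=v\in TC}M(c=v)/(|\sigma^{act}|+1)^{m}$ (you via Theorem~\ref{thm:path-probability}, the paper via the corresponding proposition of the original $\pbcp$ paper that this theorem packages), and conclude because the total-choice probabilities factor and sum to $1$. No gap to report.
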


\begin{proof}
\begin{align}
\nonumber & P_{Tr(D, m)}(\vec{a}^t)\\
\nonumber =\ & \underset{\substack{I\vDash \vec{a}^t\\ \text{$I$ is a stable models of $Tr(D, m)$}}}{\sum} P_{Tr(D, m)}(I)\\
\nonumber =\ & \text{(In $Tr(D, m)$ every total choice leads to $(|\sigma^{act}|+1)^{m}$ stable models. By Proposition 2 in \cite{lee18aprobabilistic}, )}\\
\nonumber \ & \underset{\substack{I\vDash \vec{a}^t\\ \text{$I$ is a stable models of $Tr(D, m)$}}}{\sum} \frac{W_{Tr(D, m)}(I)}{(|\sigma^{act}|+1)^{m}}\\
\nonumber =\ &\frac{\underset{tc\in TC_{Tr(D, m)}}{\sum} \underset{c=v\in tc}{\prod}M_{\Pi}(c=v)}{(|\sigma^{act}|+1)^{m}}\\
\nonumber =\ &\text{(Derivations same as in the proof of Proposition 2 in \cite{lee18aprobabilistic})}\\
\nonumber \ & \frac{1}{(|\sigma^{act}|+1)^{m}}
\end{align}
\end{proof}

The following lemma states that given any action sequence, the probabilities of all possible state sequences sum up to $1$.
\begin{lemma}\label{lem:state-sequence-prob-sum-up-to-1}
For any $p\cal{BC}+$ action description $D$ and any action sequence $\vec{a} = \langle a_0, a_1, \dots, a_{m-1}\rangle$, we have
\[
\underset{s_0, \dots, s_m: s_i\in {\bf S}}{\sum}P_{Tr(D, m)}(\langle s_0, \dots, s_m \rangle^t\mid \vec{a}^t) = 1.
\]
\end{lemma}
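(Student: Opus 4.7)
The plan is to reduce the identity to a standard marginalization argument, using Lemma~\ref{lem:action-equal-probability} twice. First, by the definition of conditional probability,
\[
\sum_{s_0,\dots,s_m\,:\,s_i\in\mathbf{S}} P_{Tr(D,m)}(\langle s_0,\dots,s_m\rangle^t \mid \langle a_0,\dots,a_{m-1}\rangle^t)
\;=\;
\frac{\sum_{s_0,\dots,s_m\,:\,s_i\in\mathbf{S}} P_{Tr(D,m)}(\langle s_0,\dots,s_m\rangle^t \wedge \langle a_0,\dots,a_{m-1}\rangle^t)}{P_{Tr(D,m)}(\langle a_0,\dots,a_{m-1}\rangle^t)}.
\]
The denominator equals $1/(|\sigma^{act}|+1)^m$ by Lemma~\ref{lem:action-equal-probability}, so the task reduces to evaluating the numerator.

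Next, I would show that the numerator also equals $P_{Tr(D,m)}(\langle a_0,\dots,a_{m-1}\rangle^t)$ by partitioning the stable models of $Tr(D,m)$ that satisfy the action sequence according to their projections onto the fluent signature $\sigma^{fl}_m$. Concretely, any stable model $I$ of $Tr(D,m)$ assigns a unique value to every fluent constant at every time step, so it induces a unique tuple $\langle s_0,\dots,s_m\rangle$ of interpretations of $\sigma^{fl}$. By Corollary~\ref{thm:reduce2transition}, such an $I$ corresponds to a valid sequence of transitions with $0\!:\!s_0$ a residual stable model of $D_{init}$, and by Proposition~\ref{prop:state-in-transition} every $s_i$ is a state, hence in $\mathbf{S}$. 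The formulas $\langle s_0,\dots,s_m\rangle^t$ for distinct tuples are mutually exclusive, so this partitioning gives
\[
\sum_{s_0,\dots,s_m\,:\,s_i\in\mathbf{S}} P_{Tr(D,m)}(\langle s_0,\dots,s_m\rangle^t \wedge \langle a_0,\dots,a_{m-1}\rangle^t)
 = P_{Tr(D,m)}(\langle a_0,\dots,a_{m-1}\rangle^t),
\]
which again equals $1/(|\sigma^{act}|+1)^m$ by Lemma~\ref{lem:action-equal-probability}. Substituting into the ratio yields~$1$.

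The main obstacle I anticipate is the bookkeeping step that every stable model satisfying the action sequence is indeed captured by some tuple $\langle s_0,\dots,s_m\rangle$ with all $s_i\in\mathbf{S}$, rather than by some exotic value assignment on the fluents that falls outside $\mathbf{S}$. This is precisely where Corollary~\ref{thm:reduce2transition} and Proposition~\ref{prop:state-in-transition} do the work: they guarantee that the fluent projection of any stable model is a legal trajectory, so no mass is lost in the partition. Once that observation is in place, the rest is the standard "sum of a conditional distribution equals one" computation.
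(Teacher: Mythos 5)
Your proof is correct, but it takes a genuinely different route from the paper's. The paper proves the lemma by first invoking Corollary~1 of the $\pbcp$ paper to rewrite each conditional probability $P_{Tr(D,m)}(\langle s_0,\dots,s_m\rangle^t\mid\langle a_0,\dots,a_{m-1}\rangle^t)$ as $p(s_0)\cdot\prod_i p(s_i,a_i,s_{i+1})$, and then telescopes the nested sums: summing innermost over $s_m$, then $s_{m-1}$, and so on, each factor collapses to $1$ because the transition probabilities out of every state--action pair sum to $1$ (a consequence of Assumption~2, left implicit) and the initial-state probabilities sum to $1$. You instead never unfold the probability into transition factors: you write the conditional as a ratio, observe via Lemma~\ref{lem:action-equal-probability} that the denominator is the (positive) probability of the action sequence, and then show the summed numerator recovers exactly that same probability by partitioning the stable models satisfying the action formula according to their fluent projections, with Corollary~\ref{thm:reduce2transition} and Proposition~\ref{prop:state-in-transition} guaranteeing that every such projection is a trajectory through $\mathbf S$, so no mass escapes the partition. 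Your argument is more elementary in that it only uses the structural fact that stable models project onto legal state sequences, and it does not need the normalization of the transition probabilities $p(s,a,\cdot)$, which the paper's telescoping silently assumes; indeed you do not even need the exact value $1/(|\sigma^{act}|+1)^m$ from Lemma~\ref{lem:action-equal-probability}, only that the denominator is nonzero, since numerator and denominator coincide. The paper's computation, on the other hand, makes the Markov-chain factorization explicit, which is the form reused later in Theorem~\ref{thm:sequence-utility-equivalence}; your version buys a shorter, assumption-lighter proof of this particular lemma at the cost of not exhibiting that factorization.
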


\begin{proof}
\begin{align}
\nonumber & \underset{s_0, \dots, s_m: s_i\in {\bf S}}{\sum}P_{Tr(D, m)}(\langle s_0, \dots, s_m \rangle^t\mid \vec{a}^t)\\
\nonumber =\ & \text{(By Corollary 1 in \cite{lee18aprobabilistic})}\\
\nonumber \ & \underset{s_0, \dots, s_m: s_i\in {\bf S}}{\sum}\underset{i\in\{0, \dots, m-1\}}{\prod} p(s_i, a_i, s_{i+1})\\
\nonumber =\ & \underset{s_0\in {\bf S}}{\sum} (p(s_0)\cdot \underset{s_1, \dots, s_m: s_i\in {\bf S}}{\sum}\underset{i\in\{1, \dots, m-1\}}{\prod} p(s_i, a_i, s_{i+1}))\\
\nonumber =\ & \underset{s_0\in {\bf S}}{\sum} (p(s_0)\cdot \underset{s_1\in {\bf S}}{\sum}(p(s_0, a_0, s_1)\cdot \underset{s_2, \dots, s_m: s_i\in {\bf S}}{\sum}\underset{i\in\{2, \dots, m-1\}}{\prod} p(s_i, a_i, s_{i+1})))\\
\nonumber =\ & \underset{s_0\in {\bf S}}{\sum} (p(s_0)\cdot \underset{s_1\in {\bf S}}{\sum}(p(s_0, a_0, s_1)\cdot \dots\cdot\underset{s_m \in {\bf S}}{\sum}p(s_{m-1}, a_i, s_{m})\dots))\\
\nonumber =\ & 1.
\end{align}
\end{proof}

The following proposition tells us that the probability of any state sequence conditioned on the constraint representation of a policy $\pi$ coincide with the probability of the state sequence conditioned on the action sequence specified by $\pi$ w.r.t. the state sequence.
\begin{prop}\label{prop:policy-to-action}
For any $p\cal{BC}+$ action description $D$, state sequence $\vec{s} = \langle s_0, s_1, \dots, s_m\rangle$, and a non-stationary policy $\pi$, we have
\begin{align}
\nonumber &P_{Tr(D, m)}(\vec{s}^t \mid C_{\pi, m}) =\\
\nonumber &P_{Tr(D, m)}(\vec{s}^t \mid \langle \pi(s_0, 0), \pi(s_1, 1), \dots,\pi(s_{m-1}, m-1)\rangle^t)
\end{align}
\end{prop}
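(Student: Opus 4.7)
The plan is to reduce both conditional probabilities to a common joint probability divided by normalizers that I show to be equal. The essential observation is that $C_{\pi,m} = \bigwedge_{s,i}(i\!:\!s \rightarrow i\!:\!\pi(s,i))$ collapses, on any interpretation that fixes states to the sequence $\langle s_0,\dots,s_m\rangle$, to the action conjunction $A_\pi := 0\!:\!\pi(s_0,0) \wedge \dots \wedge (m-1)\!:\!\pi(s_{m-1},m-1)$: at each time $i$ exactly one state (namely $s_i$) holds, so only the implication keyed on $s_i$ is non-vacuous, and it forces $i\!:\!\pi(s_i,i)$. Consequently $\langle s_0,\dots,s_m\rangle^t \wedge C_{\pi,m}$ and $\langle s_0,\dots,s_m\rangle^t \wedge A_\pi$ have exactly the same stable models of $Tr(D,m)$, yielding the numerator identity $P_{Tr(D,m)}(\langle s_0,\dots,s_m\rangle^t \wedge C_{\pi,m}) = P_{Tr(D,m)}(\langle s_0,\dots,s_m\rangle^t \wedge A_\pi)$.

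It remains to show that the normalizers agree, i.e., $P_{Tr(D,m)}(C_{\pi,m}) = P_{Tr(D,m)}(A_\pi)$. By Lemma~\ref{lem:action-equal-probability}, the right-hand side equals $1/(|\sigma^{act}|+1)^m$. For the left-hand side, I partition $P(C_{\pi,m}) = \sum_{\langle s_0',\dots,s_m'\rangle} P(\langle s_0',\dots,s_m'\rangle^t \wedge C_{\pi,m})$, apply the collapse above to each summand (with the action conjunction now determined by the summation variable), and use Corollary~\ref{thm:reduce2transition} to factor each joint probability as $\bigl(1/(|\sigma^{act}|+1)^m\bigr)\cdot P(0\!:\!s_0')\cdot \prod_i p(s_i', \pi(s_i',i), s_{i+1}')$. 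A telescoping argument over state sequences, structurally identical to the one used in Lemma~\ref{lem:state-sequence-prob-sum-up-to-1} (the innermost sum $\sum_{s_m'} p(s_{m-1}', \pi(s_{m-1}',m-1), s_m')$ equals $1$ and collapses, then the next, and so on, leaving finally $\sum_{s_0'} P(0\!:\!s_0') = 1$) reduces the expression to $1/(|\sigma^{act}|+1)^m$. Dividing the two equal joint probabilities by the two equal denominators yields the desired identity.

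The main obstacle I anticipate is bookkeeping rather than any deep technicality: the conjunction $A_\pi$ depends on the specific state sequence being conditioned on, so the collapse step must be written carefully to distinguish the fixed conditioning trajectory (which defines $A_\pi$) from the summation variable $\langle s_0',\dots,s_m'\rangle$ appearing in the normalizer computation. A subsidiary point is confirming that the telescoping is justified by the Markovian factorization and by the fact that transition probabilities out of a fixed state--action pair sum to $1$; both follow from Corollary~\ref{thm:reduce2transition} together with Assumption~2 guaranteeing that pf constants uniquely determine successors. Once these are handled, the rest is straightforward arithmetic.
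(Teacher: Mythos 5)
Your proposal is correct and follows essentially the same route as the paper's proof: you collapse $C_{\pi,m}$ on a fixed state sequence to the corresponding action conjunction, invoke Lemma~\ref{lem:action-equal-probability} for the uniform probability of action sequences, and evaluate $P_{Tr(D,m)}(C_{\pi,m})$ by partitioning over state sequences and telescoping exactly as in Lemma~\ref{lem:state-sequence-prob-sum-up-to-1}. The only difference is bookkeeping --- you equate numerators and denominators separately, whereas the paper manipulates the ratio and cancels the equal action-probability factors before applying the sum-to-one step --- which is the same computation in a different order.
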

\begin{proof}
\begin{align}
\nonumber &P_{Tr(D, m)}(\vec{s}^t \mid C_{\pi, m}) \\
\nonumber =\ & \frac{P_{Tr(D, m)}(\langle s_0,\dots, s_m\rangle^t \wedge C_{\pi, m})}{P_{Tr(D, m)}(C_{\pi, m})}\\
\nonumber =\ & \frac{P_{Tr(D, m)}(\langle s_0,\pi(s_0, 0)\dots, \pi(s_{m-1}, m-1), s_m\rangle^t}{P_{Tr(D, m)}(C_{\pi, m})}\\
\nonumber =\ & \frac{P_{Tr(D, m)}(\langle \pi(s_0, 0)\dots, \pi(s_{m-1}, m-1), s_m\rangle^t
\mid 0\!:\!s_0)\cdot P_{Tr(D, m)}(0\!:\!s_0)}{\underset{s'_0, \dots, s'_m: s'_i\in {\bf S}}{\sum}P_{Tr(D, m)}(\langle s'_0, \pi(s'_0, 0)\dots, \pi(s'_{m-1}, m-1), s'_m\rangle^t)}.
\end{align}

We use $k(s_0, \dots, s_m)$ as an abbreviation of 
\[
P_{Tr(D, m)}(\langle\pi(s_0, 0),\dots, \pi(s_{m-1}, m-1)\rangle^t).
\]
We have
\begin{align}
\nonumber & P_{Tr(D, m)}(\vec{s}^t \mid C_{\pi, m}) \\
\nonumber =\ & \frac{P_{Tr(D, m)}(\langle s_1, \dots , s_m\rangle^t
\mid \langle s_0, \pi(s_0, 0), \dots, \pi(s_{m-1}, m-1)\rangle^t)\cdot P_{Tr(D, m)}(0\!:\!s_0)\cdot k(s_0, \dots, s_m)}{\underset{s'_0, \dots, s'_m: s'_i\in {\bf S}}{\sum}P_{Tr(D, m)}(\langle s'_1, \dots , s'_m\rangle^t\mid \langle s'_0, \pi(s'_0, 0),\dots, \pi(s'_{m-1}, m-1)\rangle^t)\cdot P_{Tr(D, m)}(0\!:\!s'_0)\cdot k(s'_0, \dots, s'_m)}\\
\nonumber =\ & \text{(By Lemma \ref{lem:action-equal-probability}, for any $s_0, \dots, s_m (s_i\in {\bf S})$, we have $k(s_0, \dots, s_m)=\frac{1}{(\sigma^{act}|+1)^{m}}$)}\\
\nonumber \ & \frac{P_{Tr(D, m)}(\langle s_1, \dots , s_m\rangle^t
\mid \langle s_0, \pi(s_0, 0), \dots, \pi(s_{m-1}, m-1)\rangle^t)\cdot P_{Tr(D, m)}(0:s_0)\cdot \frac{1}{(|\sigma^{act}|+1)^{m}}}{\underset{s'_0, \dots, s'_m: s'_i\in {\bf S}}{\sum}P_{Tr(D, m)}(\langle s'_1, \dots , s'_m\rangle^t\mid \langle s'_0, \pi(s'_0, 0),\dots, \pi(s'_{m-1}, m-1)\rangle^t)\cdot P_{Tr(D, m)}(0\!:\!s'_0)\cdot \frac{1}{(|\sigma^{act}|+1)^{m}}}\\
\nonumber =\ &\frac{P_{Tr(D, m)}(\langle s_1, \dots , s_m\rangle^t
\mid \langle s_0, \pi(s_0, 0), \dots, \pi(s_{m-1}, m-1)\rangle^t)\cdot P_{Tr(D, m)}(0:s_0)}{\underset{s'_0, \dots, s'_m: s'_i\in {\bf S}}{\sum}P_{Tr(D, m)}(\langle s'_1, \dots , s'_m\rangle^t\mid \langle s'_0, \pi(s'_0, 0),\dots, \pi(s'_{m-1}, m-1)\rangle^t)\cdot P_{Tr(D, m)}(0\!:\!s'_0)}\\
\nonumber =\ & \text{(By Lemma \ref{lem:state-sequence-prob-sum-up-to-1}, the denominator equals $1$)}\\
\nonumber \ &P_{Tr(D, m)}(\langle s_1, \dots , s_m\rangle^t
\mid \langle s_0, \pi(s_0, 0), \dots, \pi(s_{m-1}, m-1)\rangle^t)\cdot P_{Tr(D, m)}(0:s_0)\\
\nonumber =\ &P_{Tr(D, m)}(\langle s_0, s_1, \dots, s_m\rangle^t \mid \langle \pi(s_0, 0), \dots, \pi(s_m-1, m-1)\rangle^t)
\end{align}
\end{proof}

\subsection{Proofs of Proposition \ref{prop:history-determines-utility}, Proposition \ref{prop:expected-utility-policy}, Theorem \ref{thm:sequence-utility-equivalence} and Theorem \ref{thm:pBC-plus-to-mdp}}

The following proposition tells us that, for any states and actions sequence, any stable model of $Tr(D, m)$ that satisfies the sequence has the same utility. Consequently, the expected utility of the sequence can be computed by looking at any single stable model that satisfies the sequence.

\noindent{\bf Proposition~\ref{prop:history-determines-utility} \optional{prop:history-determines-utility}}\
\ 
{\sl
For any two stable models $X_1, X_2$ of $Tr(D, m)$ that satisfy a history \\ $\vec{h} = \langle s_0, a_0, s_1, a_1, \dots,  a_{m-1}, s_m\rangle$, we have
\begin{align}
\nonumber &U_{Tr(D, m)}(X_1)\ = \  U_{Tr(D, m)}(X_2)  
\ = \
E[U_{Tr(D, m)}(\vec{h}^t)].
\end{align}
}
\begin{proof}
Since both $X_1$ and $X_2$ both satisfy $\vec{h}^t$,   $X_1$ and $X_2$ agree on truth assignment on $\sigma^{act}_m\cup\sigma^{fl}_m$. Notice that atom of the form ${\tt utility}(v, {\bf t})$ in $Tr(D, m)$ occurs only of the form \eqref{eq:utility-law-lpmln}, and only atom in $\sigma^{act}_m\cup\sigma^{fl}_m$ occurs in the body of rules of the form \eqref{eq:utility-law-lpmln}.
\begin{itemize}
\item Suppose an atom ${\tt utility}(v, {\bf t})$ is in $X_1$. Then the body $B$ of at least one rule of the form \eqref{eq:utility-law-lpmln} with ${\tt utility}(v, {\bf t})$ in its head in $Tr(D, m)$ is satisfied by $X_1$. $B$ must be satisfied by $X_2$ as well, and thus ${\tt utility}(v, {\bf t})$ is in $X_2$ as well.
\item Suppose an atom ${\tt utility}(v, {\bf t})$, is not in $X_1$. Then, assume, to the contrary, that ${\tt utility}(v, {\bf t})$ is in $X_2$, then by the same reasoning process above in the first bullet, ${\tt utility}(v, {\bf t})$ should be in $X_1$ as well, which is a contradiction. So ${\tt utility}(v, {\bf t})$ is also not in $X_2$.
\end{itemize}
So $X_1$ and $X_2$ agree on truth assignment on all atoms of the form ${\tt utility}(v, {\bf t})$, and consequently we have $U_{Tr(D, m)}(X_1) =   U_{Tr(D, m)}(X_2)$, as well as
\begin{align}
\nonumber &E[U_{Tr(D, m)}(\vec{h}^t)]\\
\nonumber =\ & \underset{I\vDash \vec{h}^t}{\sum} P_{Tr(D, m)}(I\mid \vec{h}^t)\cdot U_{Tr(D, m)}(I)\\
\nonumber =\ & U_{Tr(D, m)}(X_1)\cdot \underset{I\vDash \vec{h}^t}{\sum} P_{Tr(D, m)}(I\mid \vec{h}^t)\\
\nonumber =\ &\text{(The second term equals $1$)}\\
\nonumber \ & U_{Tr(D, m)}(X_1).
\end{align}
\end{proof}

The following proposition tells us that the expected utility of an action and state sequence can be computed by summing up the expected utility from each transition.
\begin{prop}\label{prop:step-wise-utility}
For any $p\cal{BC}+$ action description $D$ and a history $\vec{h} = \langle s_0, a_0, s_1, \dots, a_{m-1}, s_m\rangle$, such that there exists at least one stable model of $Tr(D, m)$ that satisfies $\vec{h}$, we have
\[
E[U_{Tr(D, m)}(\vec{h}^t)] = \underset{i\in \{0, \dots, m-1\}}{\sum} u(s_i, a_i, s_{i+1}).
\]
\end{prop}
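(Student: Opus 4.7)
My plan is to combine Proposition~\ref{prop:history-determines-utility} (history determines utility) with a timestep-by-timestep decomposition of the utility atoms derived in any stable model satisfying the trajectory. First I would invoke Proposition~\ref{prop:history-determines-utility} to replace the expected utility on the left-hand side by the (deterministic) utility $U_{Tr(D,m)}(X)$ of any single stable model $X$ of $Tr(D,m)$ with $X \models \langle s_0, a_0, s_1, \dots, a_{m-1}, s_m\rangle^t$; such an $X$ exists by the hypothesis of the proposition. This reduces the claim to a purely syntactic statement about which utility atoms are derived in $X$.

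Next I would analyze the structure of utility atoms in $X$. By the translation \eqref{eq:utility-law-lpmln}, every ground utility rule has the form $\alpha : \mathtt{utility}(v, i{+}1, id) \leftarrow (i{+}1{:}F) \wedge (i{:}G)$ where $F$ is fluent-only and $G$ contains only fluent and action constants. Hence whether a particular utility atom $\mathtt{utility}(v, i{+}1, id)$ belongs to $X$ is determined solely by the values of $X$ on $i{:}\sigma^{fl} \cup i{:}\sigma^{act} \cup (i{+}1){:}\sigma^{fl}$, i.e.\ by $s_i, a_i, s_{i+1}$. I would then partition $U_{Tr(D,m)}(X)$ according to the time index $i{+}1$ of the utility atoms, writing
\[
U_{Tr(D,m)}(X) = \sum_{i=0}^{m-1} \sum_{\substack{\mathtt{utility}(v, i+1, id) \in X}} v .
\]

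The remaining step is to identify the inner sum with $u(s_i, a_i, s_{i+1})$. By the definition in Section~\ref{sec:pbcplus-rewards}, $u(s_i, a_i, s_{i+1}) = E[U_{D_1}(0{:}s_i \wedge 0{:}a_i \wedge 1{:}s_{i+1})]$, and applying Proposition~\ref{prop:history-determines-utility} a second time, now to the single-step program $D_1$, this expectation equals the utility of any stable model $Y$ of $D_1$ with $Y \models 0{:}s_i \wedge 0{:}a_i \wedge 1{:}s_{i+1}$. Because the utility rules at indices $i$ and $i{+}1$ in $D_m$ are syntactically identical (up to the shift in time index and choice of rule id) to those at indices $0$ and $1$ in $D_1$, and because which of them fire depends only on $(s_i,a_i,s_{i+1})$, the sets of derived utility atoms match under the obvious renaming and yield the same sum of $v$'s. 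Summing over $i$ produces the right-hand side of the claim.

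The main obstacle I anticipate is the careful bookkeeping needed to justify that the utility contribution coming from the $i$-th transition inside $X$ (a stable model of $D_m$) coincides with the utility of a corresponding stable model of $D_1$ over $(s_i, a_i, s_{i+1})$. This relies on the fact that utility rules in $Tr(D,m)$ are hard rules whose bodies look only one step back, together with the simplifying assumptions of Section~\ref{ssec:pbcplus} that guarantee the existence and uniqueness of a compatible pf-extension; it is essentially the same argument that underlies Corollary~1 of \cite{lee18aprobabilistic} and Proposition~\ref{prop:history-determines-utility}, so I would cite these rather than reprove the local stable-model correspondence in detail.
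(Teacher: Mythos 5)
Your proposal is correct and follows essentially the same route as the paper's proof: reduce the expectation to the utility of a single stable model via Proposition~\ref{prop:history-determines-utility}, split that utility by time index using the form of the rules \eqref{eq:utility-law-lpmln}, and identify each per-step contribution with $u(s_i,a_i,s_{i+1})$ by applying Proposition~\ref{prop:history-determines-utility} again at the one-step level. The paper likewise leaves the projection-to-one-step correspondence to the earlier results rather than reproving it, so no further work is needed.
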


\begin{proof}
Let $X$ be any stable model of $Tr(D, m)$ that satisfies $\vec{h}^t$. By Proposition \ref{prop:history-determines-utility}, we have
\begin{align}
\nonumber &E[U_{Tr(D, m)}(\vec{h}^t)]\\
\nonumber =\ & U_{Tr(D, m)}(X)\\
\nonumber =\ & \underset{i\in \{0, \dots, m-1\}}{\sum}(\underset{\substack{utility(v, i, {\bf x})\leftarrow (i+1:F)\wedge(i:G)\in Tr(D, m) \\ \text{$X$ satisfies $(i+1:F)\wedge(i:G)$}}}{\sum} v)\\
\nonumber =\ & \underset{i\in \{0, \dots, m-1\}}{\sum}(\underset{\substack{utility(v, 0, {\bf x})\leftarrow (1:F)\wedge(0:G)\in Tr(D, m) \\ \text{$0\!:\!X^i$ satisfies $(1:F)\wedge(0:G)$}}}{\sum} v)\\
\nonumber =\ & \underset{i\in \{0, \dots, m-1\}}{\sum}U_{Tr(D, 1)}(0\!:\!X^i)\\
\nonumber =\ & \text{( By Proposition \ref{prop:history-determines-utility})}\\
\nonumber \ & \underset{i\in \{0, \dots, m-1\}}{\sum}E[U_{Tr(D, 1)}(0:s_i, 0:a_i, 1:s_{i+1})]\\
\nonumber =\ & \underset{i\in \{0, \dots, m-1\}}{\sum} u(s_i, a_i, s_{i+1}).
\end{align}
\end{proof}

\noindent{\bf Proposition~\ref{prop:expected-utility-policy} \optional{prop:expected-utility-policy}}\
\ 
{\sl
Given any initial state $s_0$ that is consistent with $D_{init}$, for any policy $\pi$, we have
\[
\ba l
\small 
E[U_{Tr(D, m)}(C_{\pi, m}\wedge \langle s_0\rangle^t)] =  \\
   \underset{\vec{s} = \langle s_1, \dots, s_m\rangle: s_i\in {\bf S}}{\sum}
      R_D(\vec{h}_\pi(\vec{s})^t)
      \times P_{Tr(D, m)}(\vec{s}^t\wedge C_{\pi, m}).
\ea 
\]
}
\begin{proof}
We have
\begin{align}
\nonumber &E[U_{Tr(D, m)}(C_{\pi, m}\wedge \langle s_0\rangle^t)]\\
\nonumber =\ & \underset{I\vDash 0:s_0\wedge C_{\pi, m}}{\sum} P_{Tr(D, m)}(I\mid 0\!:\!s_0\wedge C_{\pi, m})\cdot U_{Tr(D, m)}(I)\\
\nonumber =\ & \underset{\substack{I\vDash 0:s_0\wedge C_{\pi, m}\\\text{$I$ is a stable model of $Tr(D, m)$}}}{\sum} P_{Tr(D, m)}(I\mid 0\!:\!s_0\wedge C_{\pi, m})\cdot U_{Tr(D, m)}(I)\\
\nonumber =\ & \text{(We partition stable models $I$ according to their truth assignment on $\sigma^{fl}_m$)}\\
\nonumber \ & \underset{\vec{s}=\langle s_1, \dots, s_m\rangle: s_i\in {\bf S}\ \ }{\sum} \underset{\substack{I\vDash \vec{s}^t\wedge C_{\pi, m}\\ \text{$I$ is a stable model of $Tr(D, m)$}}}{\sum} P_{Tr(D, m)}(I\mid 0\!:\!s_0\wedge C_{\pi, m})\cdot U_{Tr(D, m)}(I) \\
\nonumber =\ & \text{(Since $I\vDash \vec{s}^t\wedge C_{\pi, m}$ implies $I\vDash \vec{h}_\pi(\vec{s})^t$, by Proposition \ref{prop:history-determines-utility} we have)}\\
\nonumber \ & \underset{\vec{s} = \langle s_1, \dots, s_m\rangle: s_i\in {\bf S}\ \ }{\sum} \underset{\substack{I\vDash \vec{s}^t\wedge C_{\pi, m}\\ \text{$I$ is a stable model of $Tr(D, m)$}}}{\sum} P_{Tr(D, m)}(I\mid 0\!:\!s_0\wedge C_{\pi, m})\cdot E[U_{Tr(D, m)}(\vec{h}_\pi(s)^t)] \\
\nonumber =\ & \underset{\vec{s} = \langle s_1, \dots, s_m\rangle: s_i\in {\bf S}}{\sum}Pr_{Tr(D, m)}(\vec{s}^t \mid 0\!:\!s_0\wedge C_{\pi, m})\cdot E[U_{Tr(D, m)}(\vec{h}_\pi(s)^t)]\\
\nonumber =\ & \underset{\vec{s} = \langle s_1, \dots, s_m\rangle: s_i\in {\bf S}}{\sum}Pr_{Tr(D, m)}(\vec{s}^t \mid 0\!:\!s_0\wedge C_{\pi, m})\cdot E[U_{Tr(D, m)}(\vec{s}^t\wedge C_{\pi, m})]\\
\nonumber =\ & \underset{\vec{s} = \langle s_1, \dots, s_m\rangle: s_i\in {\bf S}}{\sum}
      R_D(\vec{h}_\pi(\vec{s})^t)
      \times P_{Tr(D, m)}(\vec{s}^t \wedge C_{\pi, m}).
\end{align}
\end{proof}

\noindent{\bf Theorem~\ref{thm:sequence-utility-equivalence} \optional{prop:sequence-utility-equivalence}}\
\ 
{\sl
Given an initial state $s_0\in {\bf S}$ that is consistent with $D_{init}$, for any non-stationary policy $\pi$ and any finite state sequence $\vec{s} = \langle s_0, s_1, \dots, s_{m-1}, s_m\rangle $ such that each $s_i$ in ${\bf S}\ (i\in \{0, \dots, m\})$, we have
\begin{itemize}
    \item $R_D(\vec{h}_\pi(\vec{s}))
           = R_{M(D)}(\vec{h}_\pi(\vec{s}))$
    \item $P_{Tr(D, m)}(\vec{s}^t \mid \langle s_0\rangle^t\wedge C_{\pi, m})
           = P_{M(D)}(\vec{h}_\pi(\vec{s}))$.
\end{itemize}}
\begin{proof}
We have
\begin{align}
\nonumber & R_D(\vec{h}_\pi(\vec{s}))\\
\nonumber =\ & E[U_{Tr(D, m)}(\vec{s}^t\wedge C_{\pi, m})]\\
\nonumber =\ & \text{(By Proposition \ref{prop:step-wise-utility})}\\
\nonumber  & \underset{i\in \{0, \dots, m-1\}}{\sum} u(s_i, \pi(s_i, i), s_{i+1})\\
\nonumber =\ & \underset{i\in \{0, \dots, m-1\}}{\sum} R(s_i, \pi(s_i, i), s_{i+1})\\
\nonumber =\ & R_{M(D)}(\vec{h}_\pi(\vec{s}))
\end{align}
and 
\begin{align}
\nonumber & P_{Tr(D, m)}(\vec{s}^t \mid \langle s_0\rangle^t\wedge C_{\pi, m})\\
\nonumber =\ & \text{(By Proposition \ref{prop:policy-to-action})}\\
\nonumber \ &Pr_{Tr(D, m)}(\vec{s}^t \mid \vec{h}_\pi(\vec{s})^t)\\
\nonumber =\ & \text{(By Corollary 1 in \cite{lee18aprobabilistic})}\\
\nonumber \ &\underset{i\in \{0, \dots, m-1\}}{\prod}p(\langle s_i, \pi(s_i, i), s_{i+1}\rangle)\\
\nonumber =\ &P_{M(D)}(\vec{h}_\pi(\vec{s}))
\end{align}
\end{proof}

\noindent{\bf Theorem~\ref{thm:pBC-plus-to-mdp} \optional{thm:pBC-plus-to-mdp}}\
\ 
{\sl
For any nonnegative integer $m$ and an initial state $s_0\in {\bf S}$ that is consistent with $D_{init}$, we have
\[
\underset{\text{$\pi$ is a policy}}{\rm argmax}\ E[U_{Tr(D, m)}(C_{\pi,  m}\wedge \langle s_0\rangle^t)]
=\underset{\pi}{\rm argmax}\ \i{ER}_{M(D)}(\pi, s_0).
\]
}

\begin{proof}
We show that for any non-stationary policy $\pi$, 
\[
E[U_{Tr(D, m)}(C_{\pi,  m}\wedge \langle s_0\rangle^t)] = \i{ER}_{M(D)}(\pi, s_0).
\]
We have
\begin{align}
\nonumber & E[U_{Tr(D, m)}(C_{\pi,  m}\wedge \langle s_0\rangle^t)] \\
\nonumber =\ & \text{(By Proposition \ref{prop:expected-utility-policy})}\\
\nonumber \ &  \underset{\vec{s}=\langle s_1, \dots, s_m\rangle: s_i\in {\bf S}}{\sum}
      R_D(\vec{h}_\pi(\vec{s}))
      \times P_{Tr(D, m)}(\vec{s}^t \mid \langle s_0\rangle^t\wedge C_{\pi, m}).\\
\nonumber =\ & \text{(By Theorem \ref{thm:sequence-utility-equivalence})}\\
\nonumber \ & \underset{\vec{s}=\langle s_1, \dots, s_m\rangle: s_i\in {\bf S}}{\sum}R_{M(D)}(\vec{h}_\pi(\vec{s}))\cdot P_{M(D)}(\vec{h}_\pi(\vec{s}))\\
\nonumber =\ & \i{ER}_{M(D)}(\pi, s_0).
\end{align}
\end{proof}

\section{{\sc pbcplus2mdp} System Description}\label{sec:pbcplus2mdp-appendix}

We describe the exact procedure performed by {\sc pbcplus2mdp} in Algorithm \ref{alg:pbcplus2mdp}. {\sc pbcplus2mdp} uses {\sc lpmln2asp}, which is component of {$\lpmln$ 1.0} system \cite{lee17computing}, for exact inference to find states, actions, transition probabilities and transition rewards. {\sc pbcplus2mdp} uses {\sc mdptoolbox} for solving the MDP generated from the input action description. 

\begin{algorithm}[h!]
{\footnotesize
\noindent {\bf Input: }
\begin{enumerate}
\item $Tr(D, m)$: A $\pbcp$ action description translated into $\lpmln$ program, parameterized with maxstep $m$, with states set ${\bf S}$ and action sets ${\bf A}$
\item $T$: time horizon
\item $\gamma$: discount factor
\end{enumerate}
\noindent {\bf Output: } Optimal policy

\noindent {\bf Procedure:}
\begin{enumerate}
\item Execute {\sc lpmln2asp} on $Tr(D, m)$ with $m=0$ to obtain all stable models of $Tr(D, 0)$; project each stable model of $Tr(D, 0)$ to only atoms corresponding to fluent constant (marked by {\tt fl\_} prefix); assign a unique number $idx(s)\in\{0, \dots, |{\bf S}|-1\}$ to each of the projected stable model $s$ of $Tr(D, 0)$;
\item Execute {\sc lpmln2asp} on $Tr(D, m)$ with $m=1$ and the clingo option {\tt --project} to project stable models to only atoms corresponding to action constant (marked by {\tt act\_} prefix); assign a unique number $idx(a)\in\{0, \dots, |{\bf A}|-1\}$ to each of the projected stable model $a$ of $Tr(D, 1)$;
\item Initialize 3-dimensional matrix $P$ of shape $(|{\bf A}|, |{\bf S}|, |{\bf S}|)$;
\item Initialize 3-dimensional matrix $R$ of shape $(|{\bf A}|, |{\bf S}|, |{\bf S}|)$;
\item For each state $s\in {\bf S}$ and action $a\in {\bf A}$:
\begin{enumerate}
\item execute {\sc lpmln2asp} on $Tr(D, m)\cup \{0:s\}\cup \{0:a\}\cup ST\_DEF$ with $m=1$ and the option {\tt -q "end\_state"}, where $ST\_DEF$ contains the rule
$$\{{\tt end\_state}(idx(s))\leftarrow 1:s\mid s\in {\bf S}\}.$$
\item Obtain $P_{Tr(D, 1)}(1: s' \mid 0:s, 0:a)$ by extracting the probability of $P_{Tr(D, 1)}({\tt end\_state}(idx(s')) \mid 0:s, 0:a)$ from the output;
\item $P(idx(a), idx(s), idx(s'))\leftarrow P_{Tr(D, 1)}(1: s' \mid 0:s, 0:a)$;
\item Obtain $E[U_{Tr(D, 1)}(1: s', 0:s, 0:a)]$ from the output by selecting an arbitrary answer set returned that satisfies $1: s'\wedge 0:s\wedge 0:a$ and sum up the first arguments of all atoms with predicate name {\tt utility} (By Proposition \ref{prop:history-determines-utility}, this is equivalent to $E[U_{Tr(D, 1)}(1: s', 0:s, 0:a)]$). 
\item $R(idx(a), idx(s), idx(s'))\leftarrow E[U_{Tr(D, 1)}(1: s', 0:s, 0:a)]$;
\end{enumerate}
\item Call finite horizon policy optimization algorithm of {\sc pymdptoolbox} with transition matrix $P$, reward matrix $R$, time horizon $T$ and discount factor $\gamma$; return the output.

\end{enumerate}
\caption{{\sc pbcplus2mdp} system}
\label{alg:pbcplus2mdp}
}
\end{algorithm}

The input is the $\lpmln$ translation $Tr(D, m)$ of a $\pbcp$ action description $D$, a time horizon $T$, and a discount factor $\gamma$. In the input $\lpmln$ program, we use atoms of the form ${\tt fl\_}x(v_1, \dots, v_m, {\tt t}, i)$, (and ${\tt fl\_}x(v_1, \dots, v_m, {\tt f}, i)$)  to encode fluent constant $x(v_1, \dots, v_m)$ is true, (and false, resp.) at time step $i$. Similarly, action constants and pf constants are encoded with atoms with prefix {\tt act\_} and {\tt pf\_}, resp. $Tr(D, m)$ is parametrized with maximum step $m$, for executing with different settings of maximum time step. As an example, The $\lpmln$ translation of the $\pbcp$ action description in Section \ref{sec:block-world} (robot and blocks) is listed in \ref{sec:block-encoding}. 

To construct the MDP instance $M(D) = \langle S, A, T, R\rangle$ corresponding to $D$, {\sc pbcplus2mdp} constructs the set $S$ of states, the set $A$ of actions, transition probability function $T$ and reward function $R$ one by one. 

By definition, states of $D$ are interpretations $I^{fl}$ of $\sigma^{fl}$ such that $0\!:\!I^{fl}$ are residual stable models of $D_0$. Thus, {\sc pbcplus2mdp} finds the states of $D$ by projecting the stable models of $Tr(D, 0)$ to atoms with prefix {\tt fl\_}. {\sc lpmln2asp} is executed to find the stable models of $Tr(D, 0)$. The {\sc clingo} option {\tt --option} is used to project stable models to only atoms with {\tt fl\_} prefix. Similarly, {\sc pbcplus2mdp} finds the actions of $D$ by projecting the stable models of $Tr(D, 1)$ to atoms with prefix {\tt act\_}.

The transition probability function $T$ and the reward function $R$ are represented by three dimensional matrices, specifying the transition probability and transition reward for each transition $\langle s, a, s'\rangle$. Transition probabilities are obtained by computing conditional probabilities $P_{Tr(D, 1)}(1\!:\!s'\mid 0\!:\!s, 0\!:\!a)$ for every transition $\langle s, a, s'\rangle$, using {\sc lpmln2asp}. Transition reward of each transition $\langle s, a, s'\rangle$ are obtained by computing the utility of any stable model of $Tr(D, 1)$ that satisfies $0:s\wedge 0:a\wedge 1:s'$. This is justified by Proposition \ref{prop:history-determines-utility}.

Finally, the constructed MDP instance $M(D)$, along with time horizon $T$ and discount factor $\gamma$, is used as input to {\sc mdptoolbox} to find the optimal policy.

The system has the following dependencies:
\begin{itemize}
\item {\sc Python} 2.7
\item {\sc clingo} python library: \url{https://github.com/potassco/clingo/blob/master/INSTALL.md}
\item {\sc lpmln2asp} system: \url{http://reasoning.eas.asu.edu/lpmln/index.html}
\item {\sc MDPToolBox}: \url{https://pymdptoolbox.readthedocs.io/en/latest/}
\end{itemize}

The system {\sc pbcplus2mdp}, source code, example instances and outputs can all be found at \url{https://github.com/ywang485/pbcplus2mdp}.

\BOCC
\begin{example}

We store this $\lpmln$ program as {\tt block.lpmln}. Executing {\sc pbcplus2mdp} system with the command line
\begin{lstlisting}
python pbcplus2mdp.py block.lpmln 3 0.9
\end{lstlisting}
finds the optimal policy for this instance with $3$ blocks with a time horizon of $3$ and a discount factor of $0.9$. The output specifies the best action to take for each state and each time step. We skip the complete output due to its length\footnote{The complete outputs for all our experiments can be found in \url{https://github.com/ywang485/pbcplus2mdp}}. As an example, the following is a small fragment of the output: 
\begin{lstlisting}
-------------------------- Time step 0 ---------------------------------:
state:  fl_TopClear(b1,t,0),fl_TopClear(b2,t,0),fl_TopClear(b3,t,0),fl_At(b1,r1,0),fl_At(b2,r1,0),fl_At(b3,r1,0),fl_Above(b1,b1,f,0),fl_Above(b2,b1,f,0),fl_Above(b3,b1,f,0),fl_Above(b1,b2,f,0),fl_Above(b2,b2,f,0),fl_Above(b3,b2,f,0),fl_Above(b1,b3,f,0),fl_Above(b2,b3,f,0),fl_Above(b3,b3,f,0),fl_GoalNotAchieved(t,0),fl_OnTopOf(b1,b1,f,0),fl_OnTopOf(b2,b1,f,0),fl_OnTopOf(b3,b1,f,0),fl_OnTopOf(b1,b2,f,0),fl_OnTopOf(b2,b2,f,0),fl_OnTopOf(b3,b2,f,0),fl_OnTopOf(b1,b3,f,0),fl_OnTopOf(b2,b3,f,0),fl_OnTopOf(b3,b3,f,0)
action:  act_StackOn(b1,b3,t,0),act_StackOn(b1,b1,f,0),act_StackOn(b2,b1,f,0),act_StackOn(b3,b1,f,0),act_StackOn(b1,b2,f,0),act_StackOn(b2,b2,f,0),act_StackOn(b3,b2,f,0),act_StackOn(b2,b3,f,0),act_StackOn(b3,b3,f,0),act_MoveTo(b1,r1,f,0),act_MoveTo(b2,r1,f,0),act_MoveTo(b3,r1,f,0),act_MoveTo(b1,r2,f,0),act_MoveTo(b2,r2,f,0),act_MoveTo(b3,r2,f,0)
...

\end{lstlisting}
The above fragment indicates, at time step 0, the best action to take if all three blocks are at {\tt R1}, and no block is on top of another block, is to stack block {\tt B1} on {\tt B3}.
\end{example}
\EOCC

\section{{\sc pbcplus2mdp} Input Encoding of the Robot and Block Example}\label{sec:block-encoding}

\begin{lstlisting}
astep(0..m-1).
step(0..m).
boolean(t; f).

block(b1; b2; b3).
location(l1; l2).

%% UEC
:- fl_Above(X1, X2, t, I), fl_Above(X1, X2, f, I).
:- not fl_Above(X1, X2, t, I), not fl_Above(X1, X2, f, I), block(X1), block(X2), step(I).
:- fl_TopClear(X, t, I), fl_TopClear(X, f, I).
:- not fl_TopClear(X, t, I), not fl_TopClear(X, f, I), block(X), step(I).
:- fl_GoalNotAchieved(t, I), fl_GoalNotAchieved(f, I).
:- not fl_GoalNotAchieved(t, I), not fl_GoalNotAchieved(f, I), step(I).

:- fl_At(X, L1, I), fl_At(X, L2, I), L1 != L2.
:- not fl_At(X, l1, I), not fl_At(X, l2, I), block(X), step(I).
:- fl_OnTopOf(X1, X2, t, I), fl_OnTopOf(X1, X2, f, I).
:- not fl_OnTopOf(X1, X2, t, I), not fl_OnTopOf(X1, X2, f, I), block(X1), block(X2), step(I).

:- act_StackOn(X1, X2, t, I), act_StackOn(X1, X2, f, I).
:- not act_StackOn(X1, X2, t, I), not act_StackOn(X1, X2, f, I), block(X1), block(X2), astep(I).
:- act_MoveTo(X, L, t, I), act_MoveTo(X, L, f, I).
:- not act_MoveTo(X, L, t, I), not act_MoveTo(X, L, f, I), block(X), location(L),astep(I).

:- pf_Move(t, I), pf_Move(f, I).
:- not pf_Move(t, I), not pf_Move(f, I), astep(I).

% ---------- PF(D) ----------
%% Probability Distribution
@log(0.8) pf_Move(t, I) :- astep(I).
@log(0.2) pf_Move(f, I) :- astep(I).

%% Initial State and Actions are Random
{fl_OnTopOf(X1, X2, B, 0)} :- block(X1), block(X2), boolean(B).
{fl_At(X, L, 0)} :- block(X), location(L), boolean(B).
{act_StackOn(X1, X2, B, I)} :- block(X1), block(X2), boolean(B), astep(I).
{act_MoveTo(X, L, B, I)} :- block(X), location(L), boolean(B), astep(I).

%% No Concurrency
:- act_StackOn(X1, X2, t, I), act_StackOn(X3, X4, t, I), astep(I), X1 != X3.
:- act_StackOn(X1, X2, t, I), act_StackOn(X3, X4, t, I), astep(I), X2 != X4.
:- act_MoveTo(X1, L1, t, I), act_MoveTo(X2, L2, t, I), astep(I), X1 != X2.
:- act_MoveTo(X1, L1, t, I), act_MoveTo(X2, L2, t, I), astep(I), L1 != L2.
:- act_StackOn(X1, X2, t, I), act_MoveTo(X3, L, t, I), astep(I).

%% Static Laws
fl_GoalNotAchieved(t, I) :- fl_At(X, L, I), L != l2.
fl_GoalNotAchieved(f, I) :- not fl_GoalNotAchieved(t, I), step(I).
:- fl_OnTopOf(X1, X, t, I), fl_OnTopOf(X2, X, t, I), X1 != X2.
:- fl_OnTopOf(X, X1, t, I), fl_OnTopOf(X, X2, t, I), X1 != X2.
fl_Above(X1, X2, t, I) :- fl_OnTopOf(X1, X2, t, I).
fl_Above(X1, X2, t, I) :- fl_Above(X1, X, t, I), fl_Above(X, X2, t, I).
:- fl_Above(X1, X2, t, I), fl_Above(X2, X1, t, I).
fl_At(X1, L, I) :- fl_Above(X1, X2, t, I), fl_At(X2, L, I).
fl_Above(X1, X2, f, I) :- not fl_Above(X1, X2, t, I), block(X1), block(X2), step(I).
fl_TopClear(X, f, I) :- fl_OnTopOf(X1, X, t, I).
fl_TopClear(X, t, I) :- not fl_TopClear(X, f, I), block(X), step(I).

%% Fluent Dynamic Laws
fl_At(X, L, I+1) :- act_MoveTo(X, L, t, I), pf_Move(t, I), fl_GoalNotAchieved(t, I).
fl_OnTopOf(X1, X2, t, I+1) :- act_StackOn(X1, X2, t, I), X1 != X2, fl_TopClear(X2, t, I), not fl_Above(X2, X1, t, I), fl_At(X1, L, I), fl_At(X2, L, I), fl_GoalNotAchieved(t, I).
fl_OnTopOf(X1, X2, f, I+1) :- act_MoveTo(X1, L2, t, I), pf_Move(t, I), fl_At(X1, L1, I), fl_OnTopOf(X1, X2, t, I), L1 != L2, fl_GoalNotAchieved(t, I).
fl_OnTopOf(X1, X, f, I+1) :- act_StackOn(X1, X2, t, I), X1 != X2, fl_TopClear(X2, t, I), not fl_Above(X2, X1, t, I), fl_At(X1, L, I), fl_At(X2, L, I), fl_OnTopOf(X1, X, t, I), X != X2, fl_GoalNotAchieved(t, I).
{fl_OnTopOf(X1, X2, B, I+1)} :- fl_OnTopOf(X1, X2, B, I), astep(I), boolean(B).
{fl_At(X, L, I+1)} :- fl_At(X, L, I), astep(I), boolean(B).

%% Utility Laws
utility(-1, X, L, I) :- act_MoveTo(X, L, t, I).
utility(10) :- fl_GoalNotAchieved(f, I+1), fl_GoalNotAchieved(t, I).
\end{lstlisting}

\end{document}